\newcommand{\coderepository}{\url{https://github.com/DavidRBurt/Consistent-Spatial-Validation}}
\begin{document}
% \maketitle

\doparttoc % Tell to minitoc to generate a toc for the parts
\faketableofcontents % Run a fake tableofcontents command for the partocs

\twocolumn[

  \aistatstitle{Consistent Validation for Predictive Methods in Spatial Settings}

  \aistatsauthor{ David R.~Burt \And Yunyi Shen \And  Tamara Broderick}

  \aistatsaddress{ Massachusetts Institute of Technology Laboratory for Information and Decision Systems} ]

\begin{abstract}
  Spatial prediction tasks are key to weather forecasting, studying air pollution impacts, and other scientific endeavors. Determining how much to trust predictions made by statistical or physical methods is essential for the credibility of scientific conclusions.
  Unfortunately, classical approaches for validation fail to handle mismatch between locations available for validation and (test) locations where we want to make predictions. This mismatch is often not an instance of covariate shift (as commonly formalized) because the validation and test locations are fixed (e.g., on a grid or at select points) rather than i.i.d.\ from two distributions. In the present work, we formalize a check on validation methods: that they become arbitrarily accurate as validation data becomes arbitrarily dense. We show that classical and covariate-shift methods can fail this check. We propose a method that builds from existing ideas in the covariate-shift literature, but adapts them to the validation data at hand. We prove that our proposal passes our check. And we demonstrate its advantages empirically on simulated and real data.
\end{abstract}

% \section{Introduction}
\section{INTRODUCTION}
\label{sec:introduction}
Researchers are often interested in making predictions in a spatial setting. For instance, scientists predict sea surface temperature (SST) for weather forecasting and climate research \citep{minnett2010sst}, predict air pollution at population centers to better understand the effect of pollution on health outcomes such as kidney disease \citep{remigio2022combined}, or predict the prevalence of an invasive species for ecological management \citep{barbet2018can}.
Characterizing the reliability of these predictions is key to understanding their suitability for downstream applications; e.g., \citet{minnett2010sst} describes acceptable SST error tolerances for weather forecasting. Estimates of prediction accuracy can also be used to choose between several predictive methods, as in \citet{shabani2016comparison}.

%Importantly, 
In the spatial setting, predictive methods need not always arise from a statistical or machine learning approach built using training data. The predictive method is often a complex physical model provided by a third party \citep{remigio2022combined,minnett2010sst,gupta2018validation}. Or it could combine physics and data-driven models \citep{banzon2016longterm,werner2019data, ozkaynak2013air}.
In any of these cases, it is common to estimate the performance of a predictive method by using a set of \emph{validation data}. More precisely, we are ultimately interested in predicting a response at what we call \emph{test sites}; in the SST example above \citep{minnett2010sst}, the test sites are points on a grid (often called a map), or in the air pollution example \citep{remigio2022combined}, the test sites are 28 counties in the US Northeast. We can make predictions at the test sites, but we do not have access to direct observations of the responses there. We have observed responses in the validation data, which we assume were not used in forming the predictive method being evaluated; in the SST example, scientists have SST observations taken by boats and buoys as validation data. So our aim is to estimate the average loss (i.e.\ risk) at the test sites using the validation data. We show that many popular or natural approaches fail at this task.

One widely used approach, called the \emph{\holdout},\footnote{The name originally referred to ``holding out'' data for validation, with the remainder of available data going toward training. While we maintain the naming convention, we emphasize that in our setup there need not be any training data. \citet{minnett2010sst,gupta2018validation, duan2019validation}, among many others, use this approach.} estimates the test risk by taking the empirical average of the validation loss. When the validation and test data are independent and identically distributed (i.i.d.)\ from the same distribution, the \holdout{} has a rigorous justification \citep{devroye1976nonparametric, langford2005tutorial}. But in spatial problems, the validation and test sites need not be similarly dispersed, data may be spatially correlated, and the test sites are often fixed rather than random; recall the grid or point prediction examples above. Indeed, \citet{roberts2017cross} observed problems with the \holdout{} in practice. In the special case where the predictive method is data-driven, some authors \citep[e.g.][]{telford2005secret} have suggested choosing \holdout{} validation sites far from the training data sites. But this proposal still suffers from the problems just mentioned, and in fact simulation studies suggest it does not accurately measure prediction accuracy \citep{wadoux2021spatial,debruin2022dealing}.

Another natural idea is to use covariate-shift approaches to handle potential mismatch between validation and test sites \citep{sarafian2020domain, debruin2022dealing}. However, the covariate-shift literature generally assumes validation sites are drawn i.i.d.\ from one distribution, test sites are i.i.d.\ from another, and the density ratio between these two distributions exists and is bounded. For both the grid and point examples, these last two assumptions are inappropriate.

In what follows, we start by laying out a precise formulation of the prediction validation task in the spatial setting (\cref{sec:formal-problem-statement}). We formalize a desirable property for test-risk estimators: that, if arbitrarily dense validation data accrues in a region including the test points, the test-risk estimate should become arbitrarily accurate (\cref{sec:methods-infill-asymptotics}). We next prove that common estimators fail to satisfy this spatial consistency property (\cref{sec:inconsistent}); namely, we consider each of (1) the \holdout{} estimator, (2) blocked spatial validation \citep{telford2005secret,roberts2017cross}, and (3) an estimator advocated in the covariate-shift literature \citep{loog2012nearest,portier2023scalable}. To construct a spatially consistent estimator, we build on the $k$-nearest neighbor estimator \citep{loog2012nearest}. \citet{loog2012nearest} and \citet{portier2023scalable} advocated fixing $k=1$ for covariate-shift problems. We instead derive an upper bound on the error of the general-$k$ estimator for estimating test risk (\cref{sec:k-selection-procedure}); crucially, our bound is \emph{conditional on the test and validation sites}. We prove that choosing $k$ adaptively by optimizing our upper bound yields a spatially consistent estimator (\cref{sec:general-consistency}). Unlike covariate-shift results \citep[e.g.][]{portier2023scalable}, our results are directly applicable to problems where the test sites are most reasonably thought of as fixed. We illustrate the accuracy and practicality of our proposed method in simulated and real data analyses (\cref{sec:experiments}), with tasks in both grid and point prediction.\footnote{We provide an implementation of our method and the experiments in this paper at: \coderepository.} We discuss further related work in \cref{app:related-work}.

% \section{Test Risk in a Spatial Problem}
\section{SPATIAL TEST RISK}
%Estimating Test Risk in a Spatial Problem}
\label{sec:formal-problem-statement}
We now formalize
risk estimation at test points in a spatial setting.
We assume each data point occurs at a spatial location $S \in \spatialdomain$, where the spatial domain $(\spatialdomain, \spatialmetric)$ is a metric space. Each data point has observed covariates $X \in \covariatedomain$ and a response $Y \in \responsedomain$. The covariates are a fixed spatial field, $\spatialfield: \spatialdomain \to \covariatedomain$; i.e., the covariates at a point are specified by evaluating $\spatialfield$ at the point's spatial location.

\subsection{Test Risk of a Spatial Predictive Method}
We assume we have access to a predictive method $h: \spatialdomain \to \responsedomain$. The predictive method might use the covariates. We allow for this dependence, but do not make it explicit in our notation. We suppose that practitioners would like to use $h$ to predict the response at a set of test sites where the response is unknown. We collect the $\ntest$ test data points, including true (but unobserved) responses, in $\smash{\Dtest = (\Stest_m, \Xtest_m, \Ytest_m)_{m=1}^{\ntest}}$.

To quantify quality of a predictive method, we need a loss. We assume the loss is bounded, as is common for responses naturally confined to a finite range; cf.\ temperature, pressure, or other physical quantities.\footnote{Loss is also bounded for classification error or certain robust regression cases such as Tukey's biweight loss.}
\begin{assumption}[$\lossbdd$-bounded Loss]\label{assumption:BoundedLoss}
    The loss is a non-negative, bounded function, $\ell\!: \!\responsedomain \times\! \responsedomain \to\! [0, \lossbdd]$.
\end{assumption}
Due to practicalities such as measurement error, the response at a test point is usefully modeled as random. A standard summary of loss over this randomness is the expected loss (a.k.a.\ risk) at the test data. To define this expectation, we need to make assumptions about the data-generating process. It is typical in the non-spatial setting to assume responses are i.i.d.\ conditional on covariates. In the spatial setting, the i.i.d.\ assumption is inappropriate since it ignores spatial location. We instead assume that the response variable may be a function of the spatial location it is observed at, the covariates at that location, and i.i.d noise:
\begin{assumption}[Data Generating Process: Test Data]\label{assumption:DGP}
    Let $j \!\!= \!\test$, $\spatialfield\!: \!\spatialdomain \!\to\! \covariatedomain$. For $1\!\leq \!m\! \leq \! M^j$, $X^j_m = \spatialfield(S^j_m)$ and $Y_m^j = f(S^j_m, X^j_m, \epsilon_m^j)$
    with $f \!:\! \spatialdomain \!\times\! \covariatedomain \to \responsedomain$ and $\epsilon_m^j \smash{\iidsim}P_{\epsilon}$ real-valued random variables.
\end{assumption}
\Cref{assumption:DGP} implies the response is i.i.d.~\emph{given} the spatial location. For example, \cref{assumption:DGP} covers the case where measurement errors on sensors are independent, but the locations of the sensors are not. A widely studied special case of \cref{assumption:DGP} considers additive, homoskedastic noise: namely, $\responsedomain \subset \RR$ and $Y_m^j = f(S^j_m, X^j_m)+\epsilon_m^j$.
\Cref{assumption:DGP} is more general; for example, it allows the noise to be scaled by a continuous, deterministic function of $\spatialdomain$: $Y_m^j = f(S^j_m, X^j_m)+g(S^j_m)\epsilon_m^j$.

Before defining risk, we first define  the average loss of the predictive method at a particular location in space, $S$: $\averageloss(S) := \EE[\ell(f(S, \spatialfield(S), \epsilon), h(S))|S]$.
Finally, we average over all spatial locations of interest, which we assume is a finite set.

\begin{definition}\label{def:test-risk}
Given test points $(\Stest_m)_{m=1}^{\ntest}$, let $\smash{\Qtest \coloneqq (1/\ntest) \sum\nolimits_{m=1}^{\ntest} \smash{\delta_{\Stest_m}}}$, with $\delta_{S}$ a Dirac measure at $S$. For predictive method $h$, let the \emph{test risk} of $h$ be
$
    \conditionalrisk_{\Qtest}(h) :=  (1/\ntest)\sum\nolimits_{m=1}^{\smash{\ntest}} \averageloss(\Stest_m){\vphantom{\scalebox{0.85}{$\frac{a^2}{b}$}}} .
$
\end{definition}
We write out the discrete distribution of test locations, $\Qtest$, to emphasize that test risk averages over the (fixed) test locations. We will find it useful for stating our upper bound on estimation error (\cref{thm:nn-fill-bdd}).

\subsection{Estimating Test Risk}\label{sec:estimating-test-risk}

To estimate test risk, we assume we have access to $\nval$ validation data points, collected in $\smash{\Dval = (\Sval_n, \Xval_n, \Yval_n)_{n=1}^{\nval}}$.
We assume practitioners did not make use of either the validation or test response data when constructing the predictive method.
As an example, the common \holdout{} estimator uses the empirical average of validation loss:
\begin{align}
    \smash{\sampleaveragerisk(h) \coloneqq \smash{\frac{1}{\nval}} \sum\nolimits_{n=1}^{\nval} \ell(\Yval_n, h(\Sval_n)).} \label{eqn:sample-avg-definition}
\end{align}

For validation data to provide information about test risk, we need to make an assumption about how validation data relates to test data. A particularly common (and often very reasonable) assumption is that validation and test data are i.i.d. However, as we argue above, the i.i.d.\ assumption is not appropriate in many spatial applications. To make progress, then, we need to make an alternative assumption relating validation and test data. Essentially we propose that, in spatial settings, smoothness in space is an alternative (and more intuitively plausible) regularity assumption.

More specifically, we first make a standard assumption that validation data follows the same data-generating process as test data, conditional on spatial locations. And second, we assume a form of smoothness in the average loss across the spatial locations. 
\begin{assumption}[Data Generating Process: Validation Data]\label{assumption:DGP_val}
    \Cref{assumption:DGP} remains true when we take $j = \val$, with the same $f, \spatialfield$, and $P_{\epsilon}$ as for $j=\test$.
\end{assumption}

\begin{assumption}[$\lipschitzbdd$-Lipschitz]\label{assumption:Lipschitz}
    %There exists an $\lipschitzbdd > 0$ such that 
    For some $\lipschitzbdd \geq 0$, for all  $S,S' \in \spatialdomain, |\averageloss(S)\! - \!\averageloss(S')| \leq \lipschitzbdd \spatialmetric(S,S').$ 
\end{assumption}
\Cref{assumption:Lipschitz} often arises naturally. For example, take $\responsedomain \subset [0,1]$; $\spatialdomain = (\RR^d, \|\cdot\|_2)$; squared loss; and homoskedastic, additive noise. Suppose $f(S, \spatialfield(S))$ is $\lipschitzbdd_Y$-Lipschitz and $h$ is $\lipschitzbdd_h$-Lipschitz. Then \cref{assumption:Lipschitz} holds with $\lipschitzbdd = 2(\lipschitzbdd_Y + \lipschitzbdd_h)$; see \cref{prop:lipschitz-claim}.

\section{WE WANT CONSISTENT ESTIMATORS}\label{sec:methods-infill-asymptotics}
Given an estimator of test risk, we want to check if that estimator performs well. We next formalize one natural check on performance: namely, estimators should become arbitrarily accurate if given validation data that is arbitrarily dense in the spatial domain. This check is analogous to traditional consistency in the i.i.d.\ data setting.

To that end, the \emph{fill distance} is a measure of discrepancy between two sets, $\Psi_1$ and $\Psi_2$.\footnote{The fill distance is not a distance in the mathematical sense since it is asymmetric and can equal $0$ in cases when its two arguments are not exactly equal.
} 
It is the maximum distance from a point in $\Psi_2$ to the nearest point in $\Psi_1$.
\begin{definition}[{\citealt[§5.8]{cressie2015statistics}, \citealt[Definition 1.4]{Wendland_2004}}]
    Let $(\spatialdomain, \spatialmetric)$ be a metric space and $\Psi_1, \Psi_2 \subset \spatialdomain$. The fill distance of $\Psi_1$ in $\Psi_2$ is
    $
        \zeta(\Psi_1; \Psi_2) := \sup\nolimits_{S_2 \in \Psi_2} \inf\nolimits_{S_1 \in \Psi_1} \spatialmetric(S_1, S_2).
    $
\end{definition}

In the spatial statistics literature, \emph{infill asymptotics} describes cases where data are in a compact spatial domain so that the fill distance of the data to its domain tends to $0$. We say an estimator is \emph{consistent for the test risk under infill asymptotics} if---for any $\Qtest$, $\spatialfield$, $f$ and $h$ satisfying our assumptions above---the estimator converges in probability to $\conditionalrisk_{\Qtest}(h)$. 
\begin{definition}[Consistency of Test Risk Estimation Under Infill Asymptotics]
\label{def:consistency}
Fix a predictive method $h$ and a test measure $\Qtest$. Take \cref{assumption:DGP,assumption:BoundedLoss,assumption:DGP_val,assumption:Lipschitz}. Consider an infinite sequence of validation sets of increasing size: $(\Dval_{N})_{N=1}^{\infty}, \Dval_{N} = (\Sval_n, \Xval_n, \Yval_n)_{n=1}^{N}$ such that when $N' < N$,  the first $N'$ points of $\Dval_{N}$ are $\Dval_{N'}$.
Suppose $\smash{\lim_{N \to \infty} \zeta(\Sval_{1:N}, \spatialdomain) = 0}$.
Let $\smash{\hat{R}_N}$ be an estimator constructed from the validation data $\Dval_{N}$.
We say that the estimator $\smash{\hat{R}_N}$ is consistent for the test risk under infill asymptotics if for all $\epsilon > 0, 
%\begin{align}
   \lim_{N \to \infty} \mathrm{Pr}(|\smash{\hat{R}_N} - \conditionalrisk_{\Qtest}(h)|  \geq \epsilon) = 0.
$% \end{align} 
\end{definition}

That is, as validation data fills the spatial domain, the estimator
should converge to the test risk---no matter the composition of test sites.
Our assumption that fill distance tends to zero is generally weaker than an assumption that the validation sites are drawn i.i.d.\ from a distribution with Lebesgue density supported on the spatial domain. \citet[Theorem 2.1]{Reznikov2015TheCR} showed an implication relationship between these assumptions in a much more general setting, and \citet[Lemma 12]{pmlr-v134-vacher21a} discuss the special case for the unit cube. Next, we present a finite-sample version of this implication, with an advantage relative to past work that we keep track of all constants.

\begin{restatable}[Independent and Identically Distributed Data Satisfies an Infill Assumption]{proposition}{iidimpliesinfill}\label{prop:iid-implies-infill}
    Suppose that $\spatialdomain = [0,1]^d$, $\Sval_n \iidsim P$ for $1 \leq n \leq \nval$, and $P$ has Lebesgue density lower bounded by $c >0$ over $\smash{[0,1]^d}$. Let $\spherevolume = \pi^{d/2}/\Gamma(d/2+1)$ be the volume of the $d$-dimensional Euclidean unit ball. For any $\delta \in (0,1)$ there exists an $n_0$ such that for all $\nval \geq n_0$ with probability at least $1-\delta$
    \begin{align}
        \smash{\zeta(\Sval_{1:\nval}, [0,1]^d) \leq \left(\tfrac{4^d}{c\nval \spherevolume}\left(\log \tfrac{6^d\nval}{\spherevolume\delta}\right)\right)^{1/d}.}
    \end{align}
\end{restatable}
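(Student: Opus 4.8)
The plan is a covering (net) argument, with extra care near the boundary of $[0,1]^d$ to land on the stated constants. Write $B(z,\rho) = \{y \in [0,1]^d : \spatialmetric(y,z) \le \rho\}$ for the closed ball (taking $\spatialmetric$ to be the Euclidean metric, as the appearance of $\spherevolume$ indicates). Fix a radius $r \in (0,1]$ to be chosen at the end, and let $\{z_1,\dots,z_K\} \subseteq [0,1]^d$ be a maximal subset all of whose pairwise distances are at least $r/2$; by maximality it is an $(r/2)$-cover, i.e.\ every $x \in [0,1]^d$ lies within $r/2$ of some $z_j$.

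First I would reduce the fill-distance event to a union of ``empty ball'' events. If $\zeta(\Sval_{1:\nval}, [0,1]^d) > r$ then, since there are only finitely many validation sites, there is $x \in [0,1]^d$ with $\spatialmetric(\Sval_n, x) > r$ for every $n$; picking $z_j$ with $\spatialmetric(x,z_j) \le r/2$ gives $B(z_j,r/2) \subseteq B(x,r)$, so $B(z_j,r/2)$ contains no validation site. Since the $\Sval_n$ are i.i.d.\ from $P$ and $1-t\le e^{-t}$, a union bound yields $\mathrm{Pr}(\zeta(\Sval_{1:\nval},[0,1]^d) > r) \le K \exp\!\bigl(-\nval \min_{1\le j\le K} P(B(z_j, r/2))\bigr)$.

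Two quantities remain to control. (i) \emph{A lower bound on $P(B(z_j,r/2))$}: because $z_j$ can sit near a corner, only a $2^{-d}$ fraction of the ball need lie in $[0,1]^d$. Choose, coordinatewise, the orthant direction at $z_j$ pointing toward whichever of $0$ or $1$ is farther; since $r/2\le 1/2$, the intersection of $B(z_j,r/2)$ with that closed orthant lies entirely in $[0,1]^d$ and, by the reflection symmetries of the ball, has volume exactly $2^{-d}(r/2)^d\spherevolume = (r/4)^d\spherevolume$, so $P(B(z_j,r/2)) \ge c(r/4)^d\spherevolume$. (ii) \emph{An upper bound on $K$}: the balls $B(z_j,r/4)$ have pairwise disjoint interiors and lie in $[-r/4,1+r/4]^d$, of volume $(1+r/2)^d \le (3/2)^d$ for $r\le 1$; comparing volumes, $K(r/4)^d\spherevolume \le (3/2)^d$, i.e.\ $K \le 6^d/(r^d\spherevolume)$. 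Combining,
\[
  \mathrm{Pr}\bigl(\zeta(\Sval_{1:\nval},[0,1]^d) > r\bigr) \;\le\; \frac{6^d}{r^d\spherevolume}\exp\!\bigl(-\nval c (r/4)^d\spherevolume\bigr).
\]

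Finally I would set $r = r_{\nval} := \bigl(\tfrac{4^d}{c\nval\spherevolume}\log\tfrac{6^d\nval}{\spherevolume\delta}\bigr)^{1/d}$, so that $(r_\nval/4)^d = \tfrac{1}{c\nval\spherevolume}\log\tfrac{6^d\nval}{\spherevolume\delta}$, the exponential above equals $\spherevolume\delta/(6^d\nval)$, and the right-hand side collapses to $\delta/(r_\nval^d\nval) = \delta\,c\spherevolume/(4^d\log\tfrac{6^d\nval}{\spherevolume\delta})$. Because $P$ is a density on the unit-volume set $[0,1]^d$ we have $c\le 1$, and $\spherevolume \le 4^d$, so this last expression is at most $\delta$ once $\log\tfrac{6^d\nval}{\spherevolume\delta}\ge 1$; moreover $r_\nval \le 1$ for all large $\nval$, since the linear growth in $\nval$ of $\tfrac{4^d}{c\spherevolume}\log\tfrac{6^d\nval}{\spherevolume\delta} \le \nval$ beats the logarithm. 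Choosing $n_0$ (as a function of $d$, $c$, and $\delta$) to guarantee both conditions gives the claim. The main obstacle is purely the constant bookkeeping: the $4^d$ is the product of the $2^{-d}$ boundary loss in (i) with the $2^{-d}$ incurred by covering at scale $r/2$, and the $6^d$ comes from the volume comparison on the inflated cube in (ii); one must also make the first-step reduction airtight (it genuinely uses finiteness of $\Sval_{1:\nval}$ to pass from the supremum in $\zeta$ to an explicitly sample-free ball).
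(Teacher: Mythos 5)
Your proof is correct and follows essentially the same route as the paper's: both reduce the fill-distance event to empty-ball events over a scale-$r/2$ net, lower-bound each ball's probability via the one-orthant-inside-the-cube argument (yielding the $4^d$), bound the net's cardinality by a volume comparison (yielding the $6^d$), and choose $r$ so the union bound collapses to at most $\delta$ for $\nval$ large enough that $r\le 1$. The only cosmetic difference is that you construct the net as a maximal $r/2$-separated set and bound its size by a direct packing argument, whereas the paper invokes a covering-number lemma for the cube; the constants and the final choice of radius are identical.
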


We prove \cref{prop:iid-implies-infill} in \cref{app:infill-iid}. The right side of this bound is $\smash{O((\log \nval/\nval)^{1/d})}$, and so the fill distance converges to zero in probability under these assumptions.

\textbf{Consistency under infill asymptotics is a minimal desirable property.}
Like traditional consistency, we emphasize that \cref{def:consistency} is just a single check among many. For instance, often practitioners will be interested in extrapolation far from observed data, which is not modelled by infill asymptotics and will need to be considered separately. Our only supposition here is that we will generally prefer test-risk estimators that satisfy consistency under infill asymptotics to those that do not.

\section{CURRENT ESTIMATORS ARE INCONSISTENT}% Exhibit Inconsistency}
\label{sec:inconsistent}
Even though consistency under infill asymptotics is a minimal desirable property, we next prove that principle existing test-risk estimators fail to satisfy it in realistic problems.

%\label{sec:inconsistency-holdout}

% The \holdout{} estimator of test risk is well-justified and consistent in the traditional case of i.i.d.\ test and validation data \citep{devroye1976nonparametric, langford2005tutorial}. 
% But we next show that it need not be consistent for spatial problems.
\textbf{Inconsistency of the \Holdout{}.} We state our result and then discuss the realism of the example.
\begin{restatable}[Inconsistency of \holdout]{proposition}{counterexampleholdout}\label{prop:holdout-inconsistent}
    There exists a set of test points and a data-generating process satisfying infill asymptotics such that $\smash{\sampleaveragerisk}$ is not a consistent estimator of the test risk. 
\end{restatable}

While the \holdout{} estimator of test risk is consistent for i.i.d.\ test and validation data \citep{devroye1976nonparametric, langford2005tutorial}, we can construct examples showing \cref{prop:holdout-inconsistent} by observing that $\smash{\sampleaveragerisk}$ has no dependence on the test task. So unless all test tasks have the same risk (which will be true only in unusually simplistic spatial settings), it cannot estimate them all consistently. The \holdout{} estimator will generally exhibit non-trivial bias since it averages loss across the validation sites when we really care about loss at the test sites. See \cref{app:holdout-inconsistent-lipschitz} for a formal proof and also an example where the \holdout{} converges to $\Delta$, the maximum possible error for a bounded loss. 

\textbf{Inconsistency of Blocked Spatial Validation.}
% There are many variants of spatial blocking (e.g.~\citealp{mila2022nearest, Linnenbrink2023kNNDM,Rest2014Spatial,wang2023spatial}). 
We first define blocked spatial validation and then establish its spatial inconsistency.
Imagine we have access to a set of data that may be used for training or validation. In blocked spatial validation, all of the data within a particular contiguous area (typically a block, or square) is reserved for validation, and the rest is used for training; see \cref{fig:spatial-blocking-realistic} (left) in \cref{app:spatial-blocking} for an illustration. Finally, the blocked spatial validation estimator of test risk is \cref{eqn:sample-avg-definition}; that is, this estimator is a special case of the \holdout{}. While we focus on validation in the present work, blocked spatial validation often represents one fold within a cross-validation procedure (e.g.~\citealp{mila2022nearest, Linnenbrink2023kNNDM,Rest2014Spatial,wang2023spatial}); see \cref{app:spatial-blocking} for a discussion. Prior work has shown via simulation that blocked spatial cross-validation can provide biased estimates of the test performance of a method (see \citealp[Figure 1]{wadoux2021spatial} and \citealp[Figure 9]{debruin2022dealing}). Since our counterexample in \cref{prop:holdout-inconsistent} is not a blocked spatial validation estimator, it remains to show whether blocked spatial validation is spatially inconsistent.

\begin{restatable}[Inconsistency of blocked spatial validation]{proposition}{counterexampleblocked}\label{prop:blocking-inconsistent}
    There exists a set of test points and a data-generating process satisfying infill asymptotics such that $\smash{\sampleaveragerisk}$ applied in blocked spatial validation is not a consistent estimator of the test risk. 
\end{restatable}

Spatial inconsistency arises for the same reason as in \cref{prop:holdout-inconsistent}: the validation locations may be systematically different from the test locations, and adding more validation locations need not resolve these differences. See \cref{app:spatial-blocking} for a proof.

\textbf{Nearest Neighbor Estimator.} Because of regularity in the error function (\cref{assumption:Lipschitz}), validation data are informative about the error at a nearby test site.
\citet{loog2012nearest} proposed risk estimators using $k$-nearest neighbors in the context of covariate shift. Both \citet{loog2012nearest} and \citet{portier2023scalable} advocated for the use of 1-nearest neighbor (1NN) in the covariate-shift setting, with the latter providing theoretical justifications under standard covariate-shift assumptions. However, we show the 1NN estimator exhibits inconsistency in our spatial setting.

We first review a general $k$-nearest neighbor estimator, which we revisit later. 
Define the $k$-nearest neighbor radius of a point $S \!\in\! \spatialdomain$ as
$%\begin{align}
    \!\tau^k(S) \!:=\! \inf \{a \!\in\! \RR \!:\! \! |\Sval_{1:\nval} \!\cap \! \ball{S}{a}| \!\geq\! k\},
$
%\end{align}
where $\ball{S}{a}$ is the ball of radius $a$ centered at $S$. The $k$-nearest neighbor set\footnote{We state our results for nearest neighbors with ties resolved by including all equidistant points. However, our analysis holds for arbitrary tie-breaking methods.} of a point $S \in \spatialdomain$ is
   $ A^k(S) \!:= \!\{1 \!\leq n\! \leq \!\nval\!: \! \Sval_n \in\! \ball{S}{\!\tau^k(S)}\}.$
As long as $1 \leq k \leq \nval$, $A^k(S)$ contains at least $k$ points: the $k$ nearest neighbors to $S$ in the validation set. It may be larger than $k$ if multiple points are equidistant from $S$.

\begin{definition}\label{def:nn-estimator}
    The $k$-nearest neighbor (kNN) test-risk estimator is 
    %\begin{align}
        $\smash{\nearestneighborrisk}(h)  \!:=\!\sum\nolimits_{n=1}^{\nval}\! \nnweights_n \ell(\Yval_n, h(\Sval_n))$, with
        $\nnweights_n \!:= \frac{1}{\ntest}\sum\nolimits_{m=1}^{\ntest} \mathrm{1}\{\Sval_n  \!\in\! A^k(\Stest_m)\}/|A^k(\Stest_m)|$.
\end{definition}
\citet{loog2012nearest} proposed weighting the loss function in this way when training a model under covariate shift. 
\citet{portier2023scalable} analyzed a similar approach, in which validation points are sampled with probabilities corresponding to the weights in \cref{def:nn-estimator}, for mean estimation. \citet{portier2023scalable} made standard covariate shift assumptions of i.i.d.\ validation sites, i.i.d.\ test sites, and a bounded density ratio between the validation and test distributions. 

\textbf{Inconsistency of 1NN.} We again state our result and then develop intuition.

\begin{restatable}[Inconsistency of 1NN]{proposition}{counterexampleonenn}\label{prop:counterexample-1nn}
    There exist a set of test points and a data-generating process satisfying infill asymptotics such that $\onennestimator(h)$ is not a consistent estimator of the test risk. 
\end{restatable}
% The technical condition $|\{1 \leq n \leq \nval : \spatialmetric(\Sval_j, S') = r\}| \leq 1$ ensures that the 1NN set for each point contains exactly $1$ point. That is, there are no ties. This condition would be satisfied for the infinite sequence of validation sets with probability one if, for instance, the validation points were chosen i.i.d.\ from a uniform measure on a compact set; cf.\ \cref{prop:iid-implies-infill}. Alternatively, it can be removed if any form of tie-breaking that selects a single nearest neighbor is used in defining the estimator.

% \textbf{Intuition for Inconsistency of $1$-Nearest Neighbor.}
% The variance of the 1NN estimator will, in non-trivial cases, remain non-negligible even as the amount of validation data increases without bound. 
For intuition, recall that---unlike in the covariate-shift setting---test points in the spatial setting are commonly fixed rather than arising i.i.d.\ from a distribution. 
Consider the case where $\Qtest = \delta_S$ for some $S \in \spatialdomain$. Using $k=1$ leads us to estimate the error using a single validation point, which is inconsistent due to observation noise at that point. Where the problem with the \holdout{} estimator was bias, the problem with 1NN is variance. In \cref{app:inconsistency-1nn} we prove \cref{prop:counterexample-1nn} and show 1NN has large error when applied to classification point prediction tasks.

\textbf{Inconsistency of Nearest Neighbors When $k$ Is a Function of the Number of Validation Points.}
% In \cref{app:inconsistency-k-sequence} 
In fact, we can prove a more general result: that any nearest-neighbor test-risk estimator where the number of neighbors depends only on the number of validation points is inconsistent under infill asymptotics, regardless of type of dependence. 

\begin{restatable}[Inconsistency of kNN depending on number of validation points]{proposition}{counterexamplekenn}\label{prop:counterexample-k-sequence}
    Let $(k_n)_{n=1}^{\infty}$ be any sequence of natural numbers. Define the sequence of estimators $\smash{\kseq}$ to be the nearest neighbor risk estimators using $\nval$ validation points and $k_{\nval}$ neighbors. Then there exists a data-generating process satisfying infill asymptotics, a test set containing a single point, a predictive method $h$ resulting in an error function satisfying the Lipschitz assumption, and some $\!\epsilon, \delta>\!0$ such that with probability at least $1\!-\!\delta$, $\forall \nval$, $|\smash{\kseq}(h) - \conditionalrisk_{\Qtest}(h)| \geq \epsilon$. %In other words, $\smash{\kseq}$ is not consistent under infill asymptotics.
\end{restatable}

See \cref{app:inconsistency-k-sequence} for a proof. There are two cases. (1) If the number of neighbors is bounded, the estimator can suffer from non-vanishing variance as in the 1NN case. Or (2) the number is unbounded, so there exists a sequence of validation sites that accumulates slowly around each test site leading to non-vanishing bias. Inconsistency of both 1NN and the \holdout{} are corollaries of \cref{prop:counterexample-k-sequence}; for 1NN, choose:  $\forall n, k_n \!=\! 1$. For the \holdout{}, choose: $\forall n, k_n \! =\! n$.

\section{A CONSISTENT ESTIMATOR}
\label{sec:new_method}
We next provide a novel bound on the test risk estimation error of kNN. We propose using a kNN estimator with $k$ chosen by optimizing our bound. We show that our proposed estimator is consistent for test risk under infill asymptotics. We here focus on error estimation; in \cref{app:model-selection} we provide promising results for model selection and discuss open challenges.

\subsection{Our Bound and Estimator}\label{sec:k-selection-procedure}

In light of the examples in \cref{sec:inconsistent}, we propose to trade off the larger variance of small $k$ and larger bias of large $k$ by optimizing a bound depending on the validation set. Crucially, we adapt $k$ using the actual locations of the test and validation sites, as \cref{prop:counterexample-k-sequence} suggests such adaptivity is necessary. 
To that end, we first derive a bound on the test-risk estimation error as a function of $k$ and the locations of test and validation sites.
To state our bound, it will be useful to define the \emph{$k\textsuperscript{th}$-order fill distance}\footnote{We assume in this definition that all spatial locations are distinct. If not, $\Psi_1$ should be treated as a multi-set.
%We will be interested in the $k\textsuperscript{th}$-order fill distance of the validation set in the test set.  If e.g.\ the validation set contains multiple observations at the same spatial site, $\zeta^k$ should treat $\Psi_1$ as a multi-set; for simplicity we assume all points are at distinct locations.
} of a set $\Psi_1$ in a set $\Psi_2$ as the maximum distance from a point in $\Psi_2$ to its $k\textsuperscript{th}$ nearest neighbor in $\Psi_1$:
%$
%\smash{\zeta^k(\Psi_1; \Psi_2) = \sup_{S_2 \in \Psi_2} \, \inf_{A \subset \Psi_1,|A| = k} \,\sup_{S_1 \in A} \spatialmetric(S_1, S_2).}
%$
\begin{align}
    \zeta^k(\Psi_1; \Psi_2) = \sup_{S_2 \in \Psi_2}\,    \inf_{\substack{A \subset \Psi_1,\\   |A| = k}} \,\,\sup_{S_1 \in A} \spatialmetric(S_1, S_2).
\end{align}

\begin{restatable}[Bound on Estimation Error in Terms of Fill Distance]{theorem}{nnbound}\label{thm:nn-fill-bdd}
Consider a validation set $\Dval$ of size $\nval$ and a test set $\Dtest$ of size $\ntest$.
Take the $k$-nearest neighbors test-risk estimator from \cref{def:nn-estimator}. Choose $\delta \in (0,1)$ and $k$ such that $1 \leq k \leq \nval$. Let $\rho_k\coloneqq \zeta^k(\Sval_{1:\nval}, \Stest_{1:\ntest})$ and $\seconstant{\delta}:=\lossbdd\sqrt{\frac{1}{2}\log\frac{2}{\delta}}$. Take \cref{assumption:DGP,assumption:BoundedLoss,assumption:DGP_val,assumption:Lipschitz}. Then, with probability at least $1-\delta$,
    \begin{align}
          |\conditionalrisk_{\Qtest}(h) - &\nearestneighborrisk(h)| \leq   \lipschitzbdd \rho_k + \seconstant{\delta}\|\nnweights \|_2 \nonumber \\
          %& \smash{\leq \lipschitzbdd\rho_k + \lossbdd\|\nnweights \|_2 \sqrt{\tfrac{1}{2} \log\tfrac{2}{\delta}}}\vphantom{\scalebox{0.95}{$\frac{a^2}{b}$}}\label{eqn:first-ub} \\
          \leq \lipschitzbdd\rho_k  &+ \seconstant{\delta}\sqrt{\max_{1 \leq n \leq \nval}\Qtest(\ball{\Sval_n}{ \rho_k})/k}, 
          \label{eqn:both-ub} %\label{eqn:second-ub}
    \end{align}
    where $\ball{S}{r}$ denotes the ball of radius $r$ centered at $S$. See 
\cref{assumption:BoundedLoss,assumption:Lipschitz,def:test-risk} for $\lossbdd$, $\lipschitzbdd$, $\Qtest$ respectively.
\end{restatable}
We prove \cref{thm:nn-fill-bdd} in \cref{app:proof-nn-bound}. We use the right-hand side of the first inequality algorithmically, and the right-hand side of the second inequality in proofs and to gain intuition for cases when the bound is small. The first term on the far-right-hand side in 
\cref{eqn:both-ub} is a worst-case upper bound on the bias of our estimator; it is large if the average loss varies quickly in space or if validation data is not available near test data. Larger $k$ may increase the first term. The second term comes from applying a tail bound; if most of the weight is put on a few validation points, the resulting estimator has high variance and this term is large. Sufficiently large $k$ will decrease the second term. If we can find a $k$ such that both (a) the distance from each test point to its $k$ nearest neighbors is small and (b) no validation point has too much impact on our estimator, $\smash{\nearestneighborrisk(h)}$ provides a good estimate for $\smash{\conditionalrisk_{\Qtest}(h)}$.  

\Cref{thm:nn-fill-bdd} is closely related to \citet[Prop.~4]{portier2023scalable}. While there are technical differences in the proof and algorithm (and the risk that is bounded), the substantive distinction is that we state our bound directly in terms of the fill distance, instead of upper bounding this distance again as done in \citet{portier2023scalable}. We can therefore avoid making assumptions about the distributions of the sites; we instead highlight the fill distance of the validation set as an essential quantity in controlling the accuracy of nearest neighbor risk estimation. 

\textbf{Selection Procedure with Unknown Lipschitz Constant.}
If the Lipschitz constant of the average loss, $\lipschitzbdd$, can be upper bounded, for example by knowledge about how quickly varying the spatial processes involved in the analysis are, then $k$ can be selected by minimizing the first upper bound in \cref{eqn:both-ub}. Since the bound is conditional on the validation and test sites, the bound still holds with the same probability for $k$ selected by this minimization. However, it will generally be the case that the Lipschitz constant is unknown. We therefore suggest choosing the number of neighbors by minimizing the upper bound from \cref{thm:nn-fill-bdd} with $1$ in place of the Lipschitz constant:
\begin{align}
    \smash{\kstar[T] \in \arg \min\nolimits_{k \in T} \rho_k + \seconstant{\delta}\|\nnweights\|_2}. \label{eqn:num-neighbors-minimization}
\end{align}
For computational efficiency, we focus on choosing $k$ as a power of 2: $T = T_2 := \{2^{i}\}_{i=1}^{\smash{\lfloor \log_2 \nval \rfloor}}$. We call the resulting estimator \emph{spatial nearest neighbors} (SNN).

\subsection{Our Method is Consistent}\label{sec:general-consistency}

We show that SNN is spatially consistent.
\begin{restatable}[Our Method is Consistent under Infill Asymptotics]{corollary}{predictionconsistent}\label{cor:prediction-consistent} Let $\spatialdomain = [0,1]^d$. Take \cref{assumption:DGP,assumption:BoundedLoss,assumption:DGP_val,assumption:Lipschitz}. Let $\tilde{\rho} := \zeta(\Sval_{1:N}, \spatialdomain)$. Let $\kstar \in \arg \min_{k \in T_2} \rho_k + \seconstant{\delta}\|\nnweights\|_2$ with $\delta = \min(1, r)$ and $r \in [\smash{c\tilde{\rho}, C\tilde{\rho}}]$ for some constants (possibly depending on dimension) $c, C >0$. Then the $\kstar[T_2]$-nearest neighbor risk estimator is consistent under infill asymptotics.
\end{restatable}

We prove \cref{cor:prediction-consistent} in \cref{app:consistency-nn}. \Cref{cor:prediction-consistent}  states that selecting the number of neighbors by minimizing an upper bound on estimation error leads to an estimator that is consistent regardless of the test data, as long as the validation data are dense on the unit cube. In \cref{app:computation-approx-fill}, we provide a computationally efficient algorithm for calculating an $r$ satisfying the condition $c\tilde{\rho} \leq r \leq C\tilde{\rho}$, and we prove the correctness of this algorithm. 
We show that the computational complexity of SNN is $O(d\ntest\nval+ \ntest\nval\log \nval)$ in \cref{prop:comp-complexity-snn} and state the computational cost in experiments in \cref{app:computational-complexity-1nn}.

%Computational complexity  and computational cost on experiments were discussed in \cref{app:computational-considerations}.

\section{EXPERIMENTS}
\label{sec:experiments}
\begin{figure*}[ht]
    \centering    \includegraphics[width=0.495\textwidth]{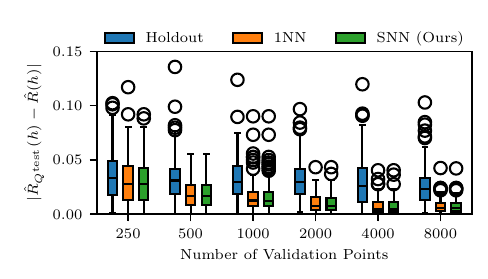}
    \includegraphics[width=0.495\textwidth]{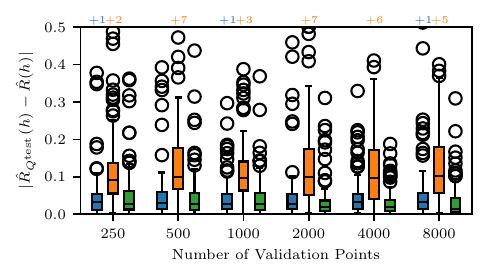}
    \caption{Error for test risk estimation in the \emph{grid prediction task (left)} and \emph{point prediction task (right)}  across methods (\holdout{} in blue, 1NN in orange, our SNN in green); lower values correspond to better performance. The vertical axis shows the absolute difference between the estimated test risk and empirical test risk. Each box plot shows the median, inter-quartile range, and outliers based on 100 synthetic datasets. The horizontal axis tracks increasing validation set sizes. Numbers above the upper box indicate the number of outliers falling above the vertical limit.}
    \label{fig:consistency-grid-plot}
\end{figure*}
Our theory suggests the \holdout{} exhibits substantial bias in many tasks. And we expect 1NN to exhibit high variance in point prediction tasks. Our experiments confirm these observations. While there exist tasks where either the holdout or 1NN performs similarly to SNN, there are also many tasks where each performs much worse than SNN. Since SNN performs well across all experiments, we prefer SNN when a new task arises.

\textbf{Ground Truth.} In a traditional machine learning prediction task, analysts ask how well a predictive method predicts the observed response at a set of covariates, so the observed response would form the ground truth. Since here we instead judge evaluation methods, we must ask how well the evaluation method estimates test risk (\cref{def:test-risk}); that is, the true test risk now forms ground truth. From \cref{assumption:DGP_val}, the true test risk requires an integral over the (unknown) noise $P_{\epsilon}$. Accessing ground-truth responses is often easy; by contrast, it is highly unusual to access even a high-quality approximation of the integral for test risk (much less the exact test risk) in a real task. Therefore we devise a series of workarounds. First, we consider realistic tasks with simulated data. Second, we consider a realistic task with a semi-simulated data set, where we control the noise distribution by constructing it from bootstrapped residuals that arise from real data. Third, we use fully real data to construct a ground truth by considering an unrealistic task. Finally, we consider fully real data and a realistic task by forfeiting access to ground truth.

\subsection{Test Risk Estimation on Synthetic Data}\label{sec:risk-estimation-synthetic}
We set up two fully synthetic experiments: a grid prediction task and a point prediction task. Based on our analyses above, we expect the \holdout{} to struggle with the former and 1NN to struggle with the latter; our experiments confirm this intuition.
See \cref{app:risk-estimation-synthetic} for full experiment details.

\textbf{Data and Ground Truth.}
In both experiments, we vary $\nval \in \{250, 500, 1000, 2000, 4000, 8000\}$. We use a truncated squared loss: $\ell(a,b) = \max((a-b)^2, 1.0)$.
For the \emph{grid task}, the test sites are a $50 \times 50$ grid of equally spaced points in $[-0.5, 0.5]^2$ (orange points in \cref{fig:synthetic-sites}). We generate the validation sites via a sequential process that leads to clustering (blue points in \cref{fig:synthetic-sites}). For the \emph{point task}, there is a single test site at $(0,0)$. Validation sites are i.i.d.\ uniform in $[-0.5,0.5]^2$.
For both tasks, we generate covariates and responses conditional on the sites:
$Y_i^{\smash{j}}         = f(\spatialfield^1(S^{\smash{j}} _i), \spatialfield^2(S^{\smash{j}}_i)) + \eta(S^j_i) + \epsilon_i^{\smash{j}}, \,  \epsilon^j_i  \smash{\iidsim} N(0, \sigma^2)$. We generate $\eta, \spatialfield^1,\spatialfield^2, f$ according to independent Gaussian processes (GPs); we describe our kernel and parameter choices in \cref{app:simulation-dgp}. We plot examples of the generated data in \cref{fig:grid-prediction-data-0,fig:point-prediction-data-0}.
We make draws from the data-generating process to form an unbiased Monte Carlo estimate of the test risk, $\smash{\empconditionalrisk_{\Qtest}}(h)$, and use this estimate as ground truth; see \cref{app:mc-estimate-test-risk-synthetic} for details.

\textbf{Spatial Predictive Method.}
To arrive at our spatial predictive method, we generate training data according to the same distribution as the validation data.
Since real-world analyses are often missing potentially relevant covariates, we use only the first covariate (and not the second) as a realistic form of misspecification.
We fit a GP regression, with zero prior mean and the same kernel used in data generation; we predict using the posterior mean.

\textbf{Results.}
We expect our SNN estimator to be consistent in all tasks. In the grid task, we expect the variance of 1NN to be low since there are many test points spread across space. And we expect the bias of the \holdout{} to be high since the test and validation points have different spatial arrangements.
Our results in the grid task (\cref{fig:consistency-grid-plot}, left) agree with our intuitions; the errors of 1NN and our SNN decrease much more rapidly across the values of $\nval$ than the \holdout{}.

Given the single test point in the point task, we expect the high variance of 1NN to be an issue with substantial probability. \Cref{fig:consistency-grid-plot} (right) agrees with our intuition; the error of our SNN estimator decreases much more rapidly across the values of $\nval$ than 1NN. In this case, we find that the \holdout{} errors decrease rapidly as well. We also plot the (signed) relative difference of each estimator to the empirical test risk in \cref{fig:grid-rel-risk,fig:point-rel-risk} (\cref{app:risk-estimation-synthetic}).

In \cref{tab:grid-k-chosen,tab:point-k-chosen} (\cref{app:risk-estimation-synthetic}), we show $k$, the number of neighbors selected by SNN for each of the two tasks.
In the grid task, the $k$ selected was at most 4 in all cases we considered. In the point task, the value of $k$ selected increased with $\nval$, though it always remained over an order of magnitude smaller than $\nval$.

\subsection{Temperature, Bootstrapped Residuals}\label{sec:air-temp-bootstrapped}
%Air Temperature Task with Bootstrapped Residuals}\label{sec:air-temp-bootstrapped}

We next consider a real task on a semi-synthetic dataset. We find that 1NN performs poorly; while the \holdout{} performs best, SNN performs well. Full details can be found in \cref{app:airTemp}.

\textbf{Data and Ground Truth.}
Our test task is prediction of monthly average air temperature in January 2023 at the 5 largest urban areas in the United States (New York City, Los Angeles, Chicago, Miami, and Houston), based on available weather station data in the same month  \citep{ghcnm}.
Loss is truncated absolute error (in °C).
To access ground truth test risk, we create a partially synthetic response variable. We first fit Gaussian process regression (GPR) to all the available weather station data. We build 100 synthetic datasets by calculating the residuals of the posterior mean of this model, sampling a residual value for each weather station and point we want to predict at and adding these to the mean prediction of the model. Because we then have access to samples from the distribution of response values at the test sites under this data-generating process, we can obtain (an accurate estimate of) ground truth test risk of a predictive method on this partially synthetic response (\cref{sec:estimation-ground-truth-airtemp}).

\textbf{Spatial Predictive Methods.} We train two predictive methods on this data: GPR and a geographically weighted regression (GWR) based on MODIS-Aqua \citep{modisLST} land surface temperature measurements, inspired by \citet{hooker2018global}. We use 50\% of the weather station locations for training the predictive methods and the remaining 50\% for validation (3211 observations in each).

\textbf{Results.} The error in estimating the predictive performance of both methods is shown in \cref{fig:realdata-boxplots} (far and mid left) for 100 different datasets with different samples of the residuals (but the same training and validation split). Given the point prediction task, we expect 1NN (orange, middle) to have a high variance; the figure confirms our intuition. The estimates given by SNN (green, right) and the \holdout{} (blue, left) are much closer to the ground truth. In this case, the \holdout{} has a small bias, and its variance is substantially lower than SNN. So, in this case, the \holdout{} typically returns slightly better estimates of the error than SNN. If many more weather stations were used for validation, we expect that the SNN would eventually outperform the holdout estimate.
\begin{figure}[t]
    \includegraphics[width=0.98\columnwidth]{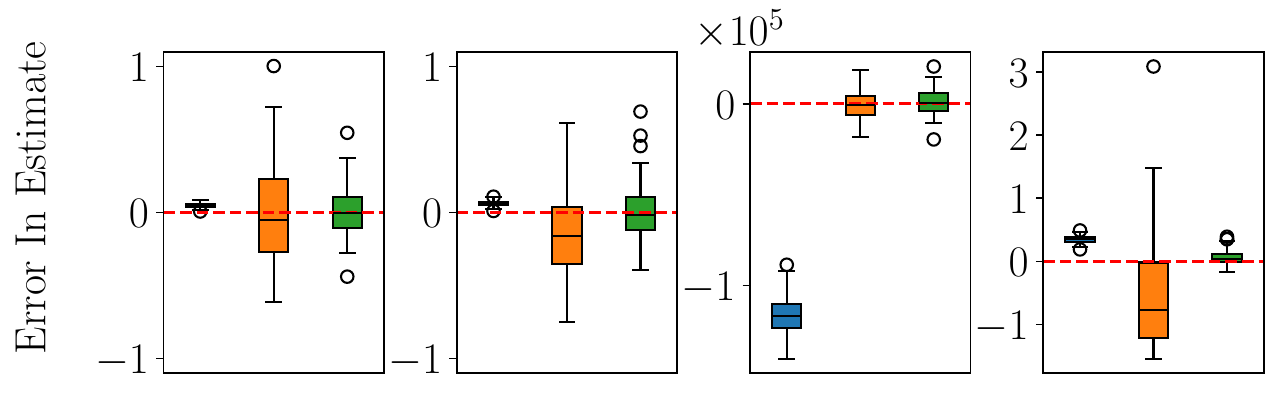}
    \caption{Signed errors in estimating the test risk for (left to right) the bootstrapped air temperature task with GWR (°C); the same task with GPR; the flat price task (£); and the wind speed task (m/s). The \holdout{} (blue), 1NN (orange), and SNN (green) appear left to right in each plot.}\label{fig:realdata-boxplots}
\end{figure}

\subsection{Property Sales in England and Wales}\label{sec:uk-ppd}
Here and in \cref{sec:wind_speed}, we define somewhat unrealistic tasks to access ground truth on real data. Here we find that 1NN and SNN perform similarly well while the \holdout{} exhibits a large bias. \Cref{app:UK-housing} contains additional details and figures for this experiment.

\textbf{Data and Ground Truth.}
We predict the price of a flat in England and Wales based on location, loosely following \citet{hensman2013gaussian} but using data from 2023 \citep{ppduk}.
We make 100 datasets by sampling a training dataset of 40,000 points from flat sales outside London, a test set consisting of 1,000 flat sales within London, and a validation set of the remaining sales (31,484 outside London, 21,179 in London). The loss is truncated mean absolute error.
In \cref{app:uk-estimation-truth}, we justify how we can form a high-quality estimate of ground-truth test risk by: assuming a form of independence, using a bounded loss, and applying Hoeffding's inequality. 

\textbf{Spatial Predictive Method.} We fit a Gaussian process regression model with variational inference as in \citet{hensman2013gaussian} with minor modifications; we use a sum of two Mat\'ern \nicefrac{3}{2} kernels. We use the non-stochastic version of variational inference in GPR \citep{titsias2009variational} to avoid known difficulties with tuning hyperparameters in the stochastic version \citep{Ober2024recommendation}, and 2000 inducing points.

\textbf{Results.} Because the model is trained using data only outside London, we expect it to make larger errors predicting flat sales in London than outside London. As a result, we expect the \holdout{} to have a large bias. We expect both 1NN and SNN to perform similarly: since the test sites are sampled randomly within London, we expect them to have different nearest neighbors and so variance of 1NN should be reasonably small. The mean absolute error of each method, relative to the estimate of the ground truth, is shown in \cref{fig:realdata-boxplots} (mid right). As expected, the \holdout{} substantially underestimates the test risk, while the other estimators perform reasonably well.

\subsection{Wind Speed Prediction} \label{sec:wind_speed}

In this next experiment with an unrealistic task but fully real data, we find that both the \holdout{} and 1NN perform poorly while SNN performs well. Full details are in \cref{app:windSpeed}.

\textbf{Data and ground truth.} Our test task is predicting the average wind speed on a typical day in January at Chicago O'Hare airport, from daily historical weather station data \citep{ghcnd}. There are 775 historical weather station observations at the Chicago O'Hare site in January months. We split the remaining weather stations (at a station level) into training and validation sets consisting of 962 training stations and 241 validation stations, respectively. Each station has a different number of observations depending on how complete the historical record of average daily wind speed data is at this site. Typically the training set consists of on the order of $580{,}000$ measurements and the validation set around $126{,}000$ measurements. We perturb the location labels of each measurement at each weather station by a tiny amount to avoid ties when running nearest neighbors. The test set contains all of January rather than just a particular date so that we can form a high-quality estimate of ground truth. To form ground truth, we also assume that average wind speed decorrelates in time quickly. The loss is truncated (root) mean square error. See \cref{app:estimation-ground-truth-windspeed} for more details.

\textbf{Spatial Predictive Method.} We use LightGBM \citep{ke2017lightgbm} to make predictions.

\textbf{Results.} We expect 1NN to have a high variance because the task is point prediction. \Cref{fig:realdata-boxplots} (right) confirms this. The \holdout{} has large bias for this task while SNN exhibits low bias and low variance.

\begin{table}[t]
    \centering
    \caption{Test risk estimates for the 5-metros task. Rows correspond to predictive methods and columns to estimators. We report two-standard-error intervals for the \holdout{}. (Recall that these intervals are justified under the common, but here-incorrect, assumption that data are i.i.d.) For each estimator, we bold the predictive method with lower estimated test risk. }\label{table:airtemp-metro}
    \begin{tabular}{c|ll} 
        & GWR                      & Spatial GP \\ \hline
Hold. & $\mathbf{0.83 \pm 0.03}$ & $0.90 \pm 0.04$  \\
1NN     & $0.61$                   & $\mathbf{0.44}$ \\
SNN     & $\mathbf{0.53}$          & $0.61$
\end{tabular}

\end{table}
\subsection{Temperature, Real Response}\label{sec:airtemp}
%Air Temperature Prediction with Real Response}\label{sec:airtemp}

We finally consider a case with real data and a real task. Although we cannot access ground truth, we show that the \holdout{}, 1NN, and our SNN give very different estimates of test risk and can differ in model selection. Given all the previous results, we suggest using SNN. See \cref{app:airTemp} for full details and a grid-prediction task where all estimators are in agreement.

\textbf{Data and Models.}
The data, models and test task are the same as in \cref{sec:air-temp-bootstrapped}, but the actual response values are used for training and validation.

\textbf{Results.} 
\Cref{table:airtemp-metro} shows a large discrepancy in test risk estimates across estimators. We see that 1NN chooses a different predictive method (spatial GP) than the \holdout{} or our SNN does.

\section{DISCUSSION}
\label{sec:discussion}
In this paper, we precisely formulate the problem of validation in the spatial setting; we provide a new check for validating spatial predictions (consistency under infill asymptotics); we demonstrate empirically and theoretically that common existing methods don’t pass this check; and we show by construction that it is possible to pass the check with a practical algorithm (our SNN algorithm).

\textbf{Validation of Spatio-temporal Forecasts.}
Many prediction tasks have a temporal as well as a spatial component. Our method can be applied when treating time as part of a ``location,'' but infill asymptotics would then imply we expect access to arbitrarily dense observations in time. We believe it is more typical in temporal prediction tasks to accrue data forward in time and to be interested in prediction forward in time. In the spirit of the present work, one might formalize the intuition that past forecasting performance is informative about future performance as a stationarity assumption.

%While our methodology could be applied in temporal settings, infill asymptotics are not a reasonable model for validating forecasts. It is intuitive that past forecasting performance is informative about future performance. This could be formalized through stationarity assumptions. However, even with this approach, a direct extension of our methodology requires significant work in formulating the correct estimand and modifying the estimating procedure. 

\textbf{Validation for Extrapolation.}
Even in spatial tasks, practitioners often want to extrapolate to unobserved parts of the space. Indeed, blocked spatial validation is often motivated by this goal. We expect consistent risk estimation in this setting to require additional assumptions about the data-generating process (beyond our smoothness assumption, \cref{assumption:Lipschitz}). We also expect that blocked spatial validation may still be practically useful in extrapolation settings, as discussed in \citet{roberts2017cross}. Consistent estimation of predictive error in extrapolation settings (either forward in time or across space) seems a challenging, but important, area for future research.  

\textbf{Cross-validation.}
We have focused on validation of a fixed predictive method. When statistical or machine learning methods are used for prediction, and data is scarce, cross-validation is frequently used.
However, recent research has recognized that cross-validation actually incorporates an interesting, non-trivial additional layer of complexity beyond validation \citep{bates2023cv}. Indeed, this research has shown that even in the classic independent and identically distributed setting, cross-validation is not estimating the same quantity as validation. So we expect a full understanding of cross-validation in the spatial setting to similarly require substantial additional work beyond understanding validation.

\subsection*{Acknowledgments}
This work was supported in part by an NSF CAREER Award, the Office of Naval Research under grant N00014-20-1-2023 (MURI ML-SCOPE), Generali, and a Microsoft
Trustworthy AI Grant.

% %%%%%%%%%%%%%%%%%%%%%%%%%%%%%%%%References%%%%%%%%%%%%%%%%%%%%%%%%%%%%%%%%
\bibliographystyle{abbrvnat}
\bibliography{main}

\onecolumn
\newpage
\twocolumn
%%%%%%%%%%%%%%%%%%%%%%%%%%%%%%%%%%%%%%%%%%%%%%%%%%%%%%%%%%%%
\section*{Checklist}

\begin{enumerate}

  \item For all models and algorithms presented, check if you include:
        \begin{enumerate}
          \item A clear description of the mathematical setting, assumptions, algorithm, and/or model. \textbf{Yes. Assumptions needed for our algorithm to be consistent are stated in the main text and are linked in the following result statement: \cref{cor:prediction-consistent}.}
          \item An analysis of the properties and complexity (time, space, sample size) of any algorithm. \textbf{We derive the computational complexity of our algorithm in \cref{app:computational-complexity-1nn}, see \cref{prop:comp-complexity-snn}.}
          \item (Optional) Anonymized source code, with specification of all dependencies, including external libraries. \textbf{Yes, code is available at: \coderepository. Dependencies can be found in the setup file, and instructions to run the code are in the readme file.}
        \end{enumerate}

  \item For any theoretical claim, check if you include:
        \begin{enumerate}
          \item Statements of the full set of assumptions of all theoretical results. \textbf{Yes, we state all  assumptions in theorem statements.} 
          \item Complete proofs of all theoretical results. \textbf{Yes, proofs are in \cref{app:proofs}.}
          \item Clear explanations of any assumptions. \textbf{Yes, we have provided intuition for, and descriptions of, \cref{assumption:DGP,assumption:BoundedLoss,assumption:DGP_val,assumption:Lipschitz} around the statement of these assumptions.}
        \end{enumerate}

  \item For all figures and tables that present empirical results, check if you include:
        \begin{enumerate}
          \item The code, data, and instructions needed to reproduce the main experimental results (either in the supplemental material or as a URL). \textbf{Yes, see prior answer regarding code. Experimental details are included in \cref{app:experimental-details}.}
          \item All the training details (e.g., data splits, hyperparameters, how they were chosen). \textbf{Yes, all training details are provided in \cref{app:experimental-details}.}
          \item A clear definition of the specific measure or statistics and error bars (e.g., with respect to the random seed after running experiments multiple times). \textbf{Yes, we provide detailed descriptions of the types of randomness considered in each experiment (e.g.\ bootstrapping versus generating completely independent synthetic experiments) in \cref{app:experimental-details}, as well as the method and limitations for estimating the ``ground truth'' risk.}
          \item A description of the computing infrastructure used. (e.g., type of GPUs, internal cluster, or cloud provider). \textbf{Yes, we describe computing infrastructure in \cref{app:compuational-setup}.}
        \end{enumerate}

  \item If you are using existing assets (e.g., code, data, models) or curating/releasing new assets, check if you include:
        \begin{enumerate}
          \item Citations of the creator If your work uses existing assets. \textbf{Yes, we provide citations for datasets in \cref{app:airTemp-datasources,app:uk-data,app:windSpeed-data}. We include citations for libraries used as well.}
          \item The license information of the assets, if applicable. \textbf{Yes, we describe license information of datasets in \cref{app:airTemp-datasources,app:uk-data,app:windSpeed-data}.}
          \item New assets either in the supplemental material or as a URL, if applicable. \textbf{The only new asset is code. See earlier responses.}
          \item Information about consent from data providers/curators. \textbf{Not applicable. All data used is permitted by terms of licensing. Most is open data (product of government agency).}
          \item Discussion of sensible content if applicable, e.g., personally identifiable information or offensive content. \textbf{Not Applicable}
        \end{enumerate}

  \item If you used crowdsourcing or conducted research with human subjects, check if you include:
        \begin{enumerate}
          \item The full text of instructions given to participants and screenshots. \textbf{Not Applicable}
          \item Descriptions of potential participant risks, with links to Institutional Review Board (IRB) approvals if applicable. \textbf{Not Applicable}
          \item The estimated hourly wage paid to participants and the total amount spent on participant compensation. \textbf{Not Applicable}
        \end{enumerate}

\end{enumerate}

\newpage
\onecolumn
\appendix
\addcontentsline{toc}{section}{Appendix} % Add the appendix text to the document TOC
\part{Appendix} % Start the appendix part
{
  \hypersetup{linkcolor=black}
  \parttoc
}

\section{EXTENDED RELATED WORK}\label{app:related-work}
\droptag{short}
\usetag{long}
\subsection{Validation versus Cross-Validation}\label{app:val-vs-cv}
While we expect our work to have implications for cross validation (CV), we focus on validation here since (1) we do not assume training data exists or is easy to change (for example in the case of a large physical model) and (2) CV presents additional subtle challenges. The \holdout{} is broadly applicable to any predictive method (whether data-driven, physical, or a combination thereof). However, in the cases when data is scarce and a data-driven predictive method is used, cross-validation is commonly believed to make more efficient use of the data. CV is widely used in spatial analyses; among many examples are \citet{WANG2022101506, valavi2021modelling, Kianian2021imputing}. However, the interpretation of the error estimates given by cross-validation is subtle even in the classical i.i.d.\ setting \citep{bates2023cv}. We focus on the validation setting in this work due to its broad applicability and clear interpretation, though extensions to cross-validation, as well as a theoretical understanding of how the resulting error estimate relate to predictive performance is a promising direction for future work.

\subsection{Covariate Shift}
In covariate shift, it is generally assumed that $\Sval_n  \iidsim P^{\text{val}}$, $\Stest_m \iidsim Q$ and $U^{\ell}_i| S^{\ell}_i \iidsim \mu$ for $\ell \in \{\text{val},\text{test}\}$. Moreover, it is typically assumed the density ratio $\frac{d Q}{d P}$ exists and is bounded, although there is recent work relaxing this assumption in the context of non-parametric regression \citep{kpotufe2021marginal, pathak2022new}. Mean estimation seeks to estimate $\EE[U^{\text{test}}_1]$. Taking $(U^{\text{test}}_m|\Stest_m) = \ell(\Ytest_m, h^{\spatialfield}(\Stest_m))$, this is the same task we consider, but with the (stronger) assumption that covariates are independent and identically distributed with a bounded density ratio between the test and validation distributions. This assumptions is not appropriate for validation with many spatial datasets, with particularly simple examples being when the task of interest requires prediction at a single spatial location, or on a regular grid. Our assumptions are essentially a conditional formulation of mean estimation under covariate shift. Many methods proposed for addressing mean estimation under covariate shift, including all the approaches we describe below, are based on re-weighting validation points, then applying the holdout approach described earlier with these weights.

The kernel mean matching algorithm provides a solution under standard covariate shift assumptions with a bounded density ratio and assuming the average loss as a function of space $\averageloss$ lives in a reproducing kernel Hilbert space (RKHS) and has small norm \citep{gretton2009covariate}. \citet{yu2012analysis} provide a finite sample bound for this method\iftagged{short}{.}{, showing that under the assumptions outlined above, with high probability kernel mean matching can estimate the $\conditionalrisk_Q(h)$ with error $O_p(\frac{1}{\sqrt{\ntest}}+ \frac{1}{\sqrt{\nval}})$.} Recently, \citet{portier2023scalable} considered the mean estimation problem under covariate shift, but relaxed the RKHS assumption to instead assume that the average loss is Lipschitz, the same assumption we take. Their estimator is built on nearest neighbor regression, and they advocate the use of $1$-nearest neighbor. Compared to this work, the primary advantage of our analysis is that it removes the assumption that the sites are independent and identically distributed, making it directly relevant to tasks like grid prediction. Moreover, in the more general setting we consider, $1$-nearest neighbor is not always consistent (\cref{prop:counterexample-1nn}), and using more neighbors can be beneficial. We provide a method for selecting the number of neighbors that has similar statistical properties as their approach for grid prediction, but retains consistency for a wider class of problems where using a single neighbor is no longer consistent.

% Weighted risk estimators have been used for spatial data, often motivated by covariate shift ideas, considered as heuristics with limited theoretical backing
\subsubsection{Covariate Shift in the Context of Spatial Validation}
Several recent works have applied the covariate shift framework to spatial problems. \citet{sarafian2020domain} considered a weighted estimator motivated by importance weighting to address covariate shift. This suggests taking the weights to equal the density ratio of the test sites to the validation sites (which is assumed to exist) \citep{shimodaira2000improving}. The density of the test sites was assumed to be known and a kernel density estimate was used to estimate the density of the validation sites. \tagged{long}{ While \citet{sarafian2020domain} observe this estimate is unbiased if the density ratio is known, in practice the estimator will be biased due to error in estimating the density ratio.} While this estimator might be consistent with regularity assumptions on the densities, the assumption that the densities exist and have a bounded ratio is restrictive for many spatial tasks -- for example, if we care about the quality of predictions at a few specific locations. \tagged{long}{\citet{debruin2022dealing} consider a similar estimator to \citet{sarafian2020domain}, but normalize the weights to sum to $1$ (which may not be the case for the weights given directly by density estimation).}

% Spatial Validation 
\subsection{Predictive Validation for Spatial Data}
% We discuss cross-validation methods too
While our analysis focuses on variants of the holdout we also discuss methods for spatial cross-validation, as they can often be adapted to cases with held-out data. 

% Use of sample average, issues identified with this approach.
\subsubsection{Limitations of the Holdout Approach} The holdout has been empirically shown to under-estimate error for models trained with data more similar to the validation data than to the test data in several works; the review paper of \citet{roberts2017cross} provides a detailed description of this phenomenon in the context of ecological statistics. Despite concerns raised in previous works, the holdout is widely used in spatial application areas for comparing methods and indicating the reliability of a given method. %\dbcomment{Is it a good/bad idea to cite specific papers where this happens?}

% Using distance to try to remove bias, blocking, buffering etc. These can also overestimate risk, some asymptotic work on h_v validation, but haven't seen anything in general spatial context or finite sample.
\subsubsection{Spatial Stratification Approaches}
Concerns over the quality of the holdout estimate have led to the development of validation approaches based on evaluating the loss of a model on held-out data far from the data used to train the model \citep{telford2005secret}. \iftagged{long}{Cross-validation strategies for spatial datasets often also focus on evaluating a model on data that are far (in the spatial domain) from the data used to train the model. Spatially stratified blocking approaches are described in \citet{lieske2011robust} and \citet{roberts2017cross}, and several software packages make these cross-validation methods readily accessible in common statistical programming languages, especially \texttt{R} \citep{valavi2019blockcv, mahoney2023assessing}. Similarly, v}{V}ariants of leave-one-out cross-validation in which points close to the point on which error will be assessed are also held-out during training have been developed for sequential data \citep{burman1994cross} and adapted to the spatial setting \citep{telford2009evaluation, Rest2014Spatial}. Several simulation studies support claims that spatial buffering provides more realistic estimates of model risk than the standard cross-validation \citep{roberts2017cross, mahoney2023assessing}. However, other simulation studies have shown that using spatially disjoint regions to train the model and to validate the model can lead to over-estimation of generalization error when the available data covers most of the space in which we are interested in making predictions \citep{ploton2020spatial, debruin2022dealing}.
% Why we need theory
While simulation studies show the strengths, and some of the limitations, of ensuring validation data are far in space from training data as a method for validating a predictive method, there is not clear theory establishing under what assumptions it allows for accurate evaluation of the risk, or consistent model selection. \tagged{long}{\citet{roberts2017cross} offers some useful heuristics for when spatial stratification is preferable to the holdout. \citet{racine2000consistent} provides a sketch for the consistency of the leave-one-out method described above for model selection in linear models with stationary sequential data, but we are not aware of a detailed proof clarifying the underlying assumptions about mixing of the process, extensions to non-linear models or extensions to the spatial setting.}

\tagged{long}{
% Some of the nearest neighbor methods and some of the discussion of area of applicability
\subsubsection{Other Approaches for Spatial Validation}
Other heuristics for estimating the error have emerged in the ecological statistics literature. \Citet{debruin2022dealing} also consider model based approaches based on non-parametric regression of the residual with a Gaussian process (kriging) to estimate the error of the model. The square of this regressor is then used as a plug-in estimator for estimating the squared loss. This approach is proposed as a heuristic, and it is not clear whether it is consistent, especially if the likelihood of the model fit to the residuals is misspecified, which will certainly be the case in practice. \Citet{mila2022nearest, Linnenbrink2023kNNDM} considered the distance between each validation site and its $k$-nearest neighbors in the training sites, and attempted to make the distribution of these distances similar to the distribution between the test sites and the training sites. However, it is not clear what assumptions on the data are needed for such a method to reliably estimate the generalization error of a method. \citet{Meyer2021Predicting} emphasized that the validity of estimates of the generalization error of a spatial model depends on how similar the validation data are to the training data, relative to how similar test data are to the training data, and suggested only providing error estimates over an area that is judged to not be significantly more different from the training data than the available validation data are from the training data. The infill assumption we make in this work provides a particularly simple formalization of this idea, since it means that we have validation data close to every test point, and so we are able to reasonably estimate the error at each test point. \citet{meyer2022machine} provide a recent discussion of challenges of evaluating predictions made on a regular grid (map prediction) as well as other recent references for proposed spatial validations approaches. The point prediction problems we consider are analogous to the local error estimates they advocate for, while the grid prediction problems we consider are a form of global estimate. We provide additional discussion of blocked spatial validation in \cref{app:spatial-blocking}.
}

% Things we do not do

\subsubsection{Aspects of the Problem we do not Consider}
If a statistical prediction method is used, there is a question of how to partition data between data used for training and validation, which is a central consideration in, for example \citet{mila2022nearest, Linnenbrink2023kNNDM}. In contrast, we focus on the case when a validation set is already decided upon. This allows our approach to be applied to physically-driven prediction methods, statistical methods and combinations thereof. Moreover, this allows our approach to be applied to models that have already been built when validation data becomes available and rebuilding the predictive method with a new training set would be expensive.

\subsection{Non-exchangeable Conformal Prediction}
Tools have been developed for providing confidence intervals for prediction with data that are not exchangeable using variants of conformal prediction \citep{tibshirani2019conformal, mao2022valid, barber2023conformal}. Particularly relevant to our work is \citet{mao2022valid}, who construct confidence intervals at a specific location based on the error at its $k$-nearest neighbors in the validation set. This is conceptually the same as the approach we take in mean estimation, but they focus on confidence intervals instead of risk estimation. They derive consistency results for the coverage of the intervals under an infill asymptotic setting, but do not show finite sample bounds which we prove in this work, making our results more quantitative. Finally, we give a theoretically-grounded method for choosing the number of neighbors by minimizing an upper bound on the error of the estimator, whereas it is unclear how to choose the number of neighbors in their approach.
\subsubsection{Conformal Prediction for Graph Learning.}
\citet{clarkson2023distribution} considered prediction sets in the context of graph learning. The infill setting we study is more natural in continuous metric spaces, as opposed to discrete metric spaces such as graphs. Additionally, there are several other important differences between our work and \citet{clarkson2023distribution}.

\textbf{Target of Estimation.} As with other conformal works, \citet{clarkson2023distribution} is interested in building confidence sets based on a predictor that contain the class of the true node with high probability. We are interested in estimating the average error of a predictive method. While the size of confidence sets and the magnitude of the error of a predictive method can both be thought of as quantifying the quality of predictions made by the method, they measure different things. For example, the average error provides a single summary statistic of the quality of a model, which might be useful for model selection. Whereas confidence sets might be useful for understanding what possible outcomes are plausible or probable.

\textbf{Framework for checking if a validation method is useful} We see our primary contribution as providing a framework for checking if a validation method has reasonable, minimal desirable properties in spatial settings. \citet{clarkson2023distribution} is focused on the construction of a particular form of confidence set and verifying coverage properties of this construction.

\textbf{Selection of number of neighbors} Both our estimator (Spatial Nearest Neighbors) and the method proposed in \citet{clarkson2023distribution} are based on nearest neighbors. Within the risk estimation literature, there is also prior work using kNN. The primary novelty of our estimator is giving an adaptive way to select the number of neighbors. \citet{clarkson2023distribution}does not suggest a method for selecting $k$, and focuses on cases where $k=1$ or $k=2$ neighbors suffice (we give counterexamples to this in our setting). \citet{clarkson2023distribution} also suggests estimators that use continuous weights instead of neighbors, as in \citet{mao2022valid}.

\section{ERROR ESTIMATION FOR MODEL SELECTION}\label{app:model-selection}
A practitioner will commonly select a predictive method within some collection by choosing the method with lowest estimated test risk. We consider two cases.

\textbf{(1) Fixed predictive methods.} So far we have focused on the setting where predictive methods are fixed in advance. As we accrue  validation data, the consistency of SNN ensures it will eventually choose the predictive method with lowest test risk. Without consistency, \holdout{} and 1NN cannot be trusted to choose the best predictive method.

\textbf{(2) Proportional training and validation data.} If we consider the special case where the predictive method is fit using training data (vs., say, a physical model), it is common for a practitioner to have a single set of available data that they then partition into training data and validation data; for instance, a fixed percentage of the total data may go to training. In this case, the predictive method changes as the validation set grows. Nonetheless, we can still be sure to eventually choose the predictive method with the lowest risk if the test-risk estimate has a faster rate of convergence in the number of validation data points than the convergence rate of the predictive method in the number of training points.

We are not able to formally characterize how SNN performs in model selection. But we provide rigorous results on rates of convergence of SNN that give suggestive guidance. In particular, (a) we first consider the case of a \emph{grid prediction task}, where test sites are arranged on a grid. In this case, we are able to give a rate of convergence for both $1$NN and SNN (\cref{cor:map-fill-bdd,cor:map-iid-bdd}). If the validation data are i.i.d.\ our rate is faster than the minimax optimal convergence rate for predicting a Lipschitz function in the presence of additive, homoskedastic noise.  Our result shows the same rate of convergence as \citet[Prop.~4]{portier2023scalable}, who considered test data that was i.i.d.\ instead of on a grid. Our result is suggestive that both nearest neighbor methods may perform well at model selection for validating maps (grid prediction). 

(b) Second, we provide finite-sample bounds and asymptotic characterizations for general (non-grid) $\Qtest$ (\cref{cor:general-dense-bdd}). In this case, when validation data is i.i.d., up to logarithmic factors, we show that SNN converges \emph{at} the optimal rate of convergence for Lipschitz functions (\cref{cor:dense-bdd-iid}). Since in this case, the SNN convergence rate is not \emph{strictly} faster than the training convergence rate, it might be difficult to select between statistical methods that converge at the optimal regression rate. But we would still expect to be able to select between (i) a statistical method that converges at the optimal rate and (ii) one that does not (for example a misspecified parametric model). Since the \holdout{} and 1NN may not even be consistent, we could not rely on them to select the better model even in this latter, easier case.

\section{PROOFS OF CLAIMS}\label{app:proofs}
We now present results and proofs not included in the main text. We essentially follow the order of results in the main text. Section \cref{app:lipschitz} gives sufficient conditions for \cref{assumption:Lipschitz} to hold for a homoskedastic, additive noise model and squared loss for responses taking values in $[0,1]$, mentioned in \cref{sec:estimating-test-risk}. \Cref{app:infill-iid} focuses on the proof of \cref{prop:iid-implies-infill}, first recalling several properties of covering numbers that will be used in the result. In \cref{app:inconsistency-claims} we restate and prove our \cref{prop:holdout-inconsistent,prop:counterexample-1nn,prop:counterexample-k-sequence}, which show limitations of existing validation methods in the spatial setting we study. In \cref{app:proof-nn-bound} we prove a general result upper bounding the error in estimating the risk using $k$-nearest neighbors, as well as an upper bound on the error for $\kstar$.  \Cref{app:consistency-nn} establishes the consistency of spatial nearest neighbors. \Cref{app:model-selection-theory} discusses issues related to model selection and proves rates of convergence for spatial nearest neighbors.

\subsection{Lipschitz Constant for Lipschitz Response and Predictive Method}\label{app:lipschitz}
While assuming the average loss is Lipschitz continuous as a function of space is mathematically convenient, it is perhaps more natural to make assumptions about the spatial field we are trying to make predictions about, as well as the predictive method we are using to make prediction. The following proposition gives an example of how Lipschitz continuity of the processes involved can imply \cref{assumption:Lipschitz}.

\begin{proposition}\label{prop:lipschitz-claim}
Consider $\responsedomain \subset [0,1]$ and squared loss. Suppose $f^{\spatialfield}(S):= f(S, \spatialfield(S))$ is $\lipschitzbdd_Y$-Lipschitz and $h^{\spatialfield}$ is $\lipschitzbdd_h$-Lipschitz. Let $(S,X,Y)$ be generated as in \cref{assumption:DGP}, with $Y=f(S, X) + \epsilon$. Then $\averageloss(S) \coloneqq \EE[(Y - h^{\spatialfield}(S))^2|S]$ is $2(\lipschitzbdd_Y+\lipschitzbdd_h)$-Lipschitz.
\end{proposition}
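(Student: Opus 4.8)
The plan is to use the bias--variance decomposition of the conditional mean squared error. This makes transparent that the final constant comes only from the range $[0,1]$ of responses and predictions, and that the additive, homoskedastic noise structure is what prevents the variance term from contributing to the Lipschitz constant. Write $Y = f^{\spatialfield}(S)+\epsilon$ with $f^{\spatialfield}(S) := f(S,\spatialfield(S))$, where $\epsilon$ is independent of the location (as in \cref{assumption:DGP}). Then
\begin{align}
    \averageloss(S) = \EE[(Y - h^{\spatialfield}(S))^2 \mid S] = \mathrm{Var}(Y \mid S) + \big(\EE[Y\mid S] - h^{\spatialfield}(S)\big)^2 = \sigma^2 + u(S)^2,
\end{align}
where $\sigma^2 := \mathrm{Var}(\epsilon)$ and $u(S) := \EE[Y\mid S] - h^{\spatialfield}(S)$. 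Here $\sigma^2$ is a finite constant that does not depend on $S$: for any fixed $S$, $\epsilon = Y - f^{\spatialfield}(S)$ is a bounded random variable since $Y \in \responsedomain \subseteq [0,1]$, so all moments of $\epsilon$ are finite, and $\mathrm{Var}(Y\mid S) = \mathrm{Var}(\epsilon)$ because $\epsilon$ is independent of $S$.

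Next I would record two elementary properties of $u$. First, $u$ is $(\lipschitzbdd_Y + \lipschitzbdd_h)$-Lipschitz: independence of $\epsilon$ from the location gives $\EE[Y\mid S] = f^{\spatialfield}(S) + \EE[\epsilon]$, which is $\lipschitzbdd_Y$-Lipschitz since adding a constant does not change a Lipschitz constant, and $h^{\spatialfield}$ is $\lipschitzbdd_h$-Lipschitz, so the triangle inequality gives the bound for the difference. Second, $|u(S)| \le 1$ for every $S \in \spatialdomain$: a data point at location $S$ has response $Y \in \responsedomain \subseteq [0,1]$ and prediction $h^{\spatialfield}(S) \in \responsedomain \subseteq [0,1]$, so $Y - h^{\spatialfield}(S) \in [-1,1]$ almost surely, and hence $|u(S)| = |\EE[Y - h^{\spatialfield}(S)\mid S]| \le 1$ by Jensen's inequality.

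Finally I would combine these: the constant $\sigma^2$ cancels, and for any $S, S' \in \spatialdomain$
\begin{align}
    |\averageloss(S) - \averageloss(S')| = \big|u(S)^2 - u(S')^2\big| \le \big(|u(S)| + |u(S')|\big)\,\big|u(S) - u(S')\big| \le 2(\lipschitzbdd_Y+\lipschitzbdd_h)\,\spatialmetric(S,S'),
\end{align}
where the first inequality factors $u(S)^2 - u(S')^2 = (u(S)-u(S'))(u(S)+u(S'))$, and the last uses $|u(S)|, |u(S')| \le 1$ together with the Lipschitz property of $u$. Equivalently, $\averageloss = \sigma^2 + u^2$ is $2(\lipschitzbdd_Y+\lipschitzbdd_h)$-Lipschitz because $x \mapsto x^2$ is $2$-Lipschitz on $[-1,1]$.

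I do not anticipate any real obstacle. The only two points that need care are (a) that the conditional variance is constant, which is exactly where homoskedasticity enters, and (b) the bound $|u|\le 1$ rather than some larger constant, which is what pins the final constant to $2$ and uses that both responses and predictions lie in $[0,1]$. A careful reader might also want it explicitly noted that $\EE[\epsilon]$ and $\EE[\epsilon^2]$ are finite, which follows from $\epsilon$ being bounded for any fixed location.
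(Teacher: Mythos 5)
Your proof is correct. It takes a slightly different route from the paper's: the paper expands the square directly and cancels the noise moments term by term (explicitly invoking $\EE[\epsilon]=0$ and $\EE[\epsilon^2]=\EE[(\epsilon')^2]$), whereas you use the bias--variance decomposition $\averageloss(S)=\mathrm{Var}(\epsilon)+u(S)^2$ with $u(S)=\EE[Y\mid S]-h^{\spatialfield}(S)$, so the constant variance drops out immediately and everything reduces to the Lipschitz continuity of $u^2$. Your packaging has two small advantages. First, it does not require $\EE[\epsilon]=0$ (which the paper's proof uses but which is not stated in the proposition or in \cref{assumption:DGP}); a nonzero mean is simply absorbed into $u$ without changing its Lipschitz constant. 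Second, by working with $u^2=(f^{\spatialfield}-h^{\spatialfield}+\EE[\epsilon])^2$ as a single square you automatically keep the cross term $-2f^{\spatialfield}h^{\spatialfield}$ that the paper's displayed expansion omits (the paper's intermediate line retains only $f^{\spatialfield}(S)^2-f^{\spatialfield}(S')^2+h^{\spatialfield}(S)^2-h^{\spatialfield}(S')^2$); the final constant $2(\lipschitzbdd_Y+\lipschitzbdd_h)$ is nonetheless the same in both arguments, coming in each case from the fact that the relevant functions take values in $[0,1]$ so a difference of squares is at most twice the difference of the arguments.
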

\begin{proof}
    Let $S, S' \in \spatialdomain$ and $\epsilon, \epsilon'$ be the associated noise random variables. Then,
    \begin{align}
        |\averageloss(S) - \averageloss(S')| &= \EE[(f^{\spatialfield}(S) + \epsilon) - h^{\spatialfield}(S))^2 - (f^{\spatialfield}(S')+ \epsilon') - h^{\spatialfield}(S'))^2 ] \\
        &=f^{\spatialfield}(S)^2 - f^{\spatialfield}(S)^2 + h^{\spatialfield}(S)^2 - h^{\spatialfield}(S')^2,
    \end{align}
    where we have used that $\EE[\epsilon] = \EE[\epsilon'] = 0$, $\epsilon, \epsilon'$ are independent from $h$ and $\EE[\epsilon^2] = \EE[(\epsilon')^2]$. Then,
    \begin{align}
        f^{\spatialfield}(S)^2- f^{\spatialfield}(S')^2 + h^{\spatialfield}(S)^2 - h^{\spatialfield}(S')^2 
        &= (f^{\spatialfield}(S) + f^{\spatialfield}(S'))(f^{\spatialfield}(S) - f^{\spatialfield}(S')) \\  & \phantom{\qquad} + (h^{\spatialfield}(S) + h^{\spatialfield}(S'))(h^{\spatialfield}(S) - h^{\spatialfield}(S')) \nonumber \\
        & \leq 2|f^{\spatialfield}(S) - f^{\spatialfield}(S')| + 2|h^{\spatialfield}(S) - h^{\spatialfield}(S')| \\
        & \leq 2 (\lipschitzbdd_Y + \lipschitzbdd_h)\spatialmetric(S,S').
    \end{align}
    The first inequality uses that $\responsedomain \subset [0,1]$ and the second the Lipschitz assumptions on $f$ and $h$.
\end{proof}

Therefore, at least in the case of squared loss with bounded response and predictive method values and a homoskedastic additive noise model, smoothness of the average response surface (as a function of space) together with smoothness of the predictive method imply \cref{assumption:Lipschitz}. We expect  to hold for other losses that are Lipschitz functions of the response and prediction.

%%%%%%%%%%%%%%%%%%%%%%%%%%%%%%%%%%%%%%%%%%%%%%%%%%%%%%%%%%%%%%%%%%%%%%%%%%%%%%%%%%%%%%%%%%%%%%%%%%%%%%%%%%%%%%%%%%%%%%%%%%%%%%%%
\subsection{Proof that Independent and Identically Distributed Data Implies Infill Asymptotic with High Probability}\label{app:infill-iid}

The purpose of this section is to prove \cref{prop:iid-implies-infill}. We begin by recalling this proposition:

\iidimpliesinfill*

Our proof is similar to earlier proofs in \citet[Section 5.2]{Reznikov2015TheCR} and essentially follows the stackoverflow response \citet{pinelis2021fill} keeping track of numerical constants. Essentially, the idea is that were the fill distance to be large, there must be a ball of large radius that doesn't contain any points in the sample of location. But because the probability of a point in the sample falling in any ball is bounded below in terms of the volume of the ball, this must be improbable as the sample size grows. 

\subsubsection{Preliminary Definitions and Lemmas Related to Nets and Covering Number}
In order to formalize the proof of \cref{prop:iid-implies-infill} sketched above idea, we recall the definition of a net and covering number, as well as some standard properties of covering numbers. 

\begin{definition}[Net, Covering Number]\label{def:covering-number}
    Let $A \subset \RR^d$ be a compact set. Any finite set $C \subset \RR^{d}$ such that
    \begin{align}
        A \subset \bigcup_{S \in C} \ball{S}{\epsilon}
    \end{align}
    where $\ball{S}{\epsilon}$ denotes the $d$-dimensional closed Euclidean ball of radius $\epsilon \geq 0$ centered at $S$ is referred to as an $\epsilon$-net of $A$. The $\epsilon$-covering number of $A$, denoted by $\coveringnumber{\epsilon}{A}$ is the minimum cardinality of an $\epsilon$-net of $A$.
\end{definition}
Because $A$ is compact for any $\epsilon$ the $\epsilon$-covering number of $A$ is finite, and will generally increase as $\epsilon \to 0$.

We now recall how covering number behaves under affine transformations of the underlying space. For sets $A, B$ and a scalar $\alpha$, we define $A+B = \{a + b: a \in A, b \in B\}$ and $\alpha A = \{\alpha a: a \in A\}$.
\begin{proposition}\label{prop:scale-translate-cov-num}
    For any compact $A \subset \RR^d$ and $c \in \RR^{d}$, $\coveringnumber{\epsilon}{A} = \coveringnumber{\epsilon}{A + \{c\}}$. For any $\alpha > 0$, $\coveringnumber{\epsilon}{A} = \coveringnumber{\alpha \epsilon}{\alpha A}$.
\end{proposition}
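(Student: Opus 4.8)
The plan is to establish each of the two claimed identities by exhibiting a cardinality-preserving bijection between $\epsilon$-nets of one set and suitably scaled nets of the other, which immediately forces the corresponding covering numbers to coincide. The only geometric input required is how closed Euclidean balls behave under translation and positive scaling: for any $S \in \RR^d$, $c \in \RR^d$, $\alpha > 0$, and $\epsilon \geq 0$,
\[
\ball{S}{\epsilon} + \{c\} = \ball{S+c}{\epsilon}, \qquad \alpha\,\ball{S}{\epsilon} = \ball{\alpha S}{\alpha\epsilon},
\]
both of which follow directly from the definition of the Euclidean norm (the second uses $\alpha > 0$ so that $\|\alpha x\|_2 = \alpha\|x\|_2$).

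For the translation claim, I would first suppose $C$ is an $\epsilon$-net of $A$. Using the union-of-balls characterization in \cref{def:covering-number} together with the first ball identity, one checks that $C + \{c\}$ covers $A + \{c\}$, and $|C + \{c\}| = |C|$ since $S \mapsto S + c$ is injective. Hence $\coveringnumber{\epsilon}{A + \{c\}} \leq \coveringnumber{\epsilon}{A}$. Applying the same argument to the set $A + \{c\}$ with translation vector $-c$ gives the reverse inequality, so the two covering numbers are equal.

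For the scaling claim, the structure is identical: if $C$ is an $\epsilon$-net of $A$, then $\alpha C$ is an $\alpha\epsilon$-net of $\alpha A$ by the second ball identity, and $|\alpha C| = |C|$ because $S \mapsto \alpha S$ is a bijection for $\alpha > 0$; thus $\coveringnumber{\alpha\epsilon}{\alpha A} \leq \coveringnumber{\epsilon}{A}$. Applying this with $\alpha$ replaced by $1/\alpha$ and $A$ replaced by $\alpha A$ (noting $(1/\alpha)(\alpha A) = A$ and $(1/\alpha)(\alpha\epsilon) = \epsilon$) yields the reverse inequality.

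There is essentially no hard step here — the proposition is a bookkeeping lemma. The only points worth care are verifying that the transformed finite sets are genuinely valid nets (which is exactly where the two ball identities enter) and noting that compactness of $A + \{c\}$ and of $\alpha A$, needed for the covering numbers to be well-defined and finite, is inherited from compactness of $A$ since translation and positive scaling are homeomorphisms of $\RR^d$.
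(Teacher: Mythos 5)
Your proof is correct and follows essentially the same route as the paper: transform a minimal net by the translation or scaling, verify it remains a net via the behavior of Euclidean balls, and obtain the reverse inequality by applying the same argument to the inverse transformation. Your explicit remarks on cardinality preservation and inherited compactness are slightly more careful than the paper's write-up but do not change the argument.
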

\begin{proof}
    The first claim is shown by noting that if $C$ is an $\epsilon$-net of $A$ then $C+\{c\}$ is an $\epsilon$-net of $A+\{c\}$ so that $\coveringnumber{\epsilon}{A + \{c\}} \leq \coveringnumber{\epsilon}{A}$. Applying a symmetric argument implies the reverse inequality.

    For the second claim, let $C$ be an $\epsilon$-net for $A$. Then,
    \begin{align}
        \alpha A \subset \alpha \left(\bigcup_{S \in C} \ball{S}{\epsilon}\right) = \bigcup_{S \in C} \ball{\alpha S}{ \alpha \epsilon} = \bigcup_{S' \in \alpha C} \ball{S'}{\alpha \epsilon},
    \end{align}
    and so $\alpha C$ is an $\alpha \epsilon$-net of $\alpha A$. It follows that $\coveringnumber{\alpha \epsilon}{\alpha A} \leq \coveringnumber{\epsilon}{A}$. The opposite inequality is obtained via the same argument applied with $\alpha'=\frac{1}{\alpha},$ $A' = \alpha A$ and $\epsilon' = \alpha \epsilon$.
\end{proof}

In the proof of \cref{prop:iid-implies-infill} we apply a union bound over all elements in a net for the unit cube. We therefore need a result telling us that this net does not contain too many elements.
\begin{lemma}[{Covering number of Unit Cube}]\label{lem:covering-number-cube-ub}
    Let $\epsilon \in (0,1]$. Then $\coveringnumber{\epsilon}{[-1,1]^d} \leq \frac{1}{\spherevolume}\left(\frac{6}{\epsilon}\right)^d$
\end{lemma}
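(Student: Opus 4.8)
The plan is a standard volumetric packing argument, carried out carefully enough to land the constant $6^d$ rather than a looser bound. First I would pass from covering to packing: let $C \subset [-1,1]^d$ be an $\epsilon$-separated set of maximum cardinality, i.e.\ $\|S - S'\|_2 \geq \epsilon$ for all distinct $S, S' \in C$. Such a set exists because the very volume bound we are about to prove shows that every $\epsilon$-separated subset of the bounded cube is finite. Maximality then forces $C$ to be an $\epsilon$-net: for any $x \in [-1,1]^d$, the set $C \cup \{x\}$ is not $\epsilon$-separated, so some $S \in C$ has $\|x - S\|_2 < \epsilon$, hence $x \in \ball{S}{\epsilon}$ (in the closed-ball convention of \cref{def:covering-number}). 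Therefore $\coveringnumber{\epsilon}{[-1,1]^d} \leq |C|$, and it suffices to bound $|C|$.

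Next I would use disjointness of half-radius balls. If $S \neq S'$ in $C$, then the open balls of radius $\epsilon/2$ about $S$ and $S'$ are disjoint, since a common point $z$ would give $\|S - S'\|_2 \leq \|S - z\|_2 + \|z - S'\|_2 < \epsilon$, contradicting $\epsilon$-separation. Moreover, each ball $\ball{S}{\epsilon/2}$ with $S \in [-1,1]^d$ is contained in the enlarged cube $[-1-\epsilon/2,\, 1+\epsilon/2]^d$. Comparing $d$-dimensional Lebesgue volumes of this disjoint union sitting inside the enlarged cube gives
\[
    |C| \cdot \spherevolume \left(\tfrac{\epsilon}{2}\right)^d \;\leq\; (2+\epsilon)^d,
\]
where $\spherevolume(\epsilon/2)^d$ is the volume of a Euclidean ball of radius $\epsilon/2$.

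Finally I would invoke the hypothesis $\epsilon \in (0,1]$, which yields $(2+\epsilon)^d \leq 3^d$, so that
\[
    |C| \;\leq\; \frac{3^d}{\spherevolume}\cdot\frac{2^d}{\epsilon^d} \;=\; \frac{1}{\spherevolume}\left(\frac{6}{\epsilon}\right)^d,
\]
and combining with $\coveringnumber{\epsilon}{[-1,1]^d} \leq |C|$ completes the proof. There is no genuine obstacle here; the only point requiring care is the choice of the containing cube — using the tight enlargement $[-1-\epsilon/2,\,1+\epsilon/2]^d$ (side length $\leq 3$) rather than the cruder $[-2,2]^d$ (side length $4$), since the latter would only give $8^d$ in place of $6^d$. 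One should also keep the boundary conventions straight (open balls for packing disjointness, closed balls for the net, so the strict and non-strict inequalities line up), though this is immaterial to the measure estimate.
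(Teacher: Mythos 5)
Your proof is correct and arrives at the same constant by essentially the same volumetric argument as the paper: the paper simply cites the packing/volume-ratio bound of Wainwright (Lemma 5.7) as a black box and then bounds the volume of the Minkowski sum $[-\tfrac{2}{\epsilon},\tfrac{2}{\epsilon}]^d + \ball{0}{1}$ by a cube of side $6/\epsilon$ using $\epsilon \leq 1$, whereas you inline the proof of that packing bound (maximal separated set is a net, half-radius balls are disjoint and fit in the slightly enlarged cube) and absorb the $+\epsilon/2$ enlargement with the same hypothesis $\epsilon \leq 1$. No gap; the two computations are the same up to rescaling.
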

where $\spherevolume = \frac{\pi^{d/2}}{\Gamma(\frac{d}{2}+1)}$ is the volume of the $d$-dimensional unit sphere.
\begin{proof}
    By the upper bound in \citet[Lemma 5.7]{Wainwright2019High},
    \begin{align}
        \coveringnumber{\epsilon}{[-1,1]^d} \leq \frac{1}{\spherevolume}\text{vol}\left(\left[-\frac{2}{\epsilon}, \frac{2}{\epsilon}\right]^d + \ball{0}{1}\right).
    \end{align}
    For any $S = S_1 + S_2 \in [-\frac{2}{\epsilon}, \frac{2}{\epsilon}]^d + \ball{0}{1}$ with $S_1 \in [-\frac{2}{\epsilon}, \frac{2}{\epsilon}]^d$ and $S_2 \in \ball{0}{1}$, by the triangle inequality for infinity norm,
    \begin{align}
        \|S\|_{\infty} \leq \frac{2}{\epsilon} + 1 \leq \frac{3}{\epsilon},
    \end{align}
    where we first used that points in the unit ball have infinity norm not more than $1$ and then used that $\epsilon \leq  1$. Therefore,
    \begin{align}
        \text{vol}\left(\left[-\frac{2}{\epsilon}, \frac{2}{\epsilon}\right]^d + \ball{0}{1}\right) \leq       \text{vol}\left(\left[-\frac{3}{\epsilon}, \frac{3}{\epsilon}\right]^d\right) = \left(\frac{6}{\epsilon}\right)^d.
    \end{align}
\end{proof}

\subsubsection{Main Proof}

We again recall \cref{prop:iid-implies-infill} for convenience 
when reading the proof.
\iidimpliesinfill*

\begin{proof}[Proof of \cref{prop:iid-implies-infill}]
    For some $\tau \in (0,1)$ (to be selected later) let $C$ be a minimal cardinality $\tau/2$-net for $[0,1]^d$. If $\ball{S}{\tau/2}$ contains a validation point for all $S \in C$, then for any $S' \in [0,1]^d$, by the triangle inequality,
    \begin{align}
        \min_{1 \leq n \leq \nval} \spatialmetric(S', \Sval_n) \leq \min_{1 \leq n \leq \nval} \left(\min_{S' \in C} \spatialmetric(S', S) + \spatialmetric(\Sval_n, S) \right)\leq \tau.
    \end{align}
    Therefore the probability that the fill distance is large $(> \tau)$ is less than the probability that there exists an element of the net such that no validation point is close to it (within radius $\tau/2$):
    \begin{align}
        \Pr(\zeta(\Sval_{1:\nval}, [0,1]^d) > \tau) \leq \Pr(\exists S \in C : \Sval_n \not\in \ball{S}{\tau/2}\, \forall 1 \leq n \leq \nval). \label{eqn:prob-element-no-val-close}
    \end{align}
    The probability of any particular validation point falling in a ball centered at any point of radius contained in the $\tau /2$ can't be too small since $P$ has density that is bounded below: For any $S \in [0,1]^d$
    \begin{align}
        P(\ball{S}{\tau/2})  = \int_{S' \in \ball{S}{\tau/2}} \frac{dP}{d\lambda}(S') d\lambda(S')
        \geq  \int_{S' \in \ball{S}{\tau/2} \cap [0,1]^d}  c d\lambda(S')
        \geq c\frac{\text{vol}(\ball{S}{\tau/2})}{2^d}, \label{eqn:lower-bound-prob-ball}
    \end{align}
    where $\lambda$ denotes Lebesgue measure and in the final inequality we have used that since $S \in [0,1]^d$, at least one quadrant of $\ball{S}{\tau/2}$ is contained in $[0,1]^d$. 
    
    Returning to \cref{eqn:prob-element-no-val-close} and taking a union over all elements in the net:
    \begin{align}
        \Pr(\zeta(\Sval_{1:\nval}, [0,1]^d) > \tau) & \leq \coveringnumber{\tau/2}{[0,1]^d} \max_{a \in A} \Pr(\Sval_n \not\in \ball{a}{\tau/2}\, \forall 1 \leq n \leq \nval) \\
                                                    & \leq \coveringnumber{\tau/2}{[0,1]^d}  \max_{a \in A} (1-P(\ball{a}{\tau/2}))^{\nval} \label{eqn:sphere-prob-bound}       \\
                                                    & \leq \coveringnumber{\tau/2}{[0,1]^d}  \left(1-\frac{c}{2^d} \spherevolume\left(\frac{\tau}{2}\right)^{d}\right)^{\nval},\label{eqn:prob-bdd}
    \end{align}
    where $\spherevolume = \frac{\pi^{d/2}}{\Gamma(\frac{d}{2}+1)}$ is the volume of the $d$-dimensional unit sphere. \Cref{eqn:sphere-prob-bound} uses \cref{eqn:lower-bound-prob-ball}. We now upper bound the terms in \cref{eqn:prob-bdd}.
    
Applying, \cref{prop:scale-translate-cov-num,lem:covering-number-cube-ub}
    \begin{align}
       \coveringnumber{\tau/2}{[0,1]^d}   \leq \frac{1}{\spherevolume}\left(\frac{6}{\tau}\right)^d. \label{eqn:cov-num-bdd}
    \end{align}
    Using the inequality $(1-x) \leq e^{-x}$,
    \begin{align}
         \left(1-\frac{c}{2^d} \spherevolume\left(\frac{\tau}{2}\right)^{d}\right)^{\nval} \leq \exp\left(-\frac{c}{2^d}{\nval} \spherevolume\left(\frac{\tau}{2}\right)^{d}\right).\label{eqn:exp-inequality}
    \end{align}
    Combining \cref{eqn:prob-bdd,eqn:cov-num-bdd,eqn:exp-inequality},
    \begin{align}
        \Pr(\zeta(\Sval_{1:\nval}, [0,1]^d) > \tau) \leq \frac{1}{\spherevolume}\left(\frac{6}{\tau}\right)^d\exp\left(-\frac{c}{2^d}{\nval} \spherevolume\left(\frac{\tau}{2}\right)^{d}\right)
    \end{align}
    Now choose $\tau^d = \frac{4^d}{c\nval \spherevolume}\log \frac{6^d\nval}{\spherevolume\delta}$.
    For all $\nval$ larger than some $n_0$, $\tau < 1$ because $\lim_{\nval \to \infty} \frac{4^d}{c\nval \spherevolume}\log \frac{6^d\nval}{\spherevolume\delta} = 0$, and so this choice satisfies our earlier assumption.
    For this choice of $\tau$
    \begin{align}
    \frac{1}{\spherevolume}\left(\frac{6}{\tau}\right)^d\exp\left(-\frac{c}{2^d}{\nval} \spherevolume\left(\frac{\tau}{2}\right)^{d}\right) = \frac{1}{\tau^d\nval}\delta & = \frac{c\spherevolume\delta}{4^d\log
            \frac{6^d \nval}{\spherevolume\delta}} \leq \delta \frac{c}{2^d\log
            \frac{3^d \nval}{\delta}},
    \end{align}
    where in the final inequality we use that $\spherevolume \leq 2^d$ since the unit ball is contained in the unit cube. For all $\nval \geq \frac{\delta e^{c/2^d}}{3^d}$ the right-hand side is less than $\delta$; but $\delta < 1, c \leq 1$ and so this holds for all $\nval \geq 1$.
\end{proof}

%%%%%%%%%%%%%%%%%%%%%%%%%%%%%%%%%%%%%%%%%%%%%%%%%%%%%%%%%%%%%%%%%%%%%%%%%%%%%%%%%%%%%%%%%%%%%%%%%%%%%%%%%%%%%%%%%%%%%%%%%%%%%%%%
\subsection{Proof of Inconsistency of Existing Methods}\label{app:inconsistency-claims}
In this section, we restate and prove our results on limitations of existing methods for validation. These results were stated in \cref{sec:inconsistent} in the main text.

\subsubsection{Inconsistency of the \Holdout{}}\label{app:holdout-inconsistent-lipschitz}
We begin by focusing on the \holdout{}. As sketched in the main text, the \holdout{} does not depend on the particular test set, and therefore cannot approximate the test risk well for point prediction tasks unless the average loss at both points is the same. 

\counterexampleholdout*
We prove the strong result.
\begin{proposition}[Counterexample to the consistency of \holdout]\label{prop:holdout-inconsistent-app}
Consider a single test point $(S,X,Y)$ satisfying \cref{assumption:DGP}. Suppose the test risk at such a point (\cref{def:test-risk} with $\ntest = 1$) is not constant as a function of $S$. Then there exists a test set (of size one) such that $\smash{\sampleaveragerisk}$ is not a consistent estimator of the test risk on that test set. 
\end{proposition}
\begin{proof}
    Because $\averageloss(S)$ is not a constant $\spatialdomain$ contains at least two elements $S, S'$ such that $\averageloss(S) \neq \averageloss(S')$. Let $Q = \delta_S$ and $Q' = \delta_{S'}$. There there exists some $\gamma > 0$ such that 
    \begin{align}
        |\conditionalrisk_Q(h) -\conditionalrisk_{Q'}(h)|  = |\averageloss(S)- \averageloss(S')| > 2\gamma. 
    \end{align}
    If  $\sampleaveragerisk$ is not consistent for $\conditionalrisk_Q(h)$, we are done. Otherwise, by the definition of consistency, for all $N \geq N_0$ with probability at least $1/2$, 
    \begin{align}
        |(\sampleaveragerisk)^{(N)}(h) -\conditionalrisk_{Q}(h)| < \gamma . \label{eqn:holdout-close-toQ}
    \end{align}
    where we use $(\sampleaveragerisk)^{(N)}$ to denote the estimator constructed using the first $N$ validation points, $\Dval_N$. By the reverse triangle inequality,
    \begin{align}
        |(\sampleaveragerisk)^{(N)}(h) - \conditionalrisk_{Q'}(h)|  \geq |\conditionalrisk_Q(h) -\conditionalrisk_{Q'}(h)|  - |(\sampleaveragerisk)^{(N)}(h) -\conditionalrisk_{Q'}(h)|.\label{eqn:reverse-triangle}
    \end{align}
    Combining \cref{eqn:holdout-close-toQ} and \cref{eqn:reverse-triangle}, for all $N \geq N_0$  with probability at least $1/2$ 
    \begin{align}
        |(\sampleaveragerisk)^{(N)}(h) - \conditionalrisk_{Q'}(h)| > 2\gamma  - \gamma  = \gamma ,
    \end{align}
    which implies the \holdout{} is not consistent for $\conditionalrisk_{Q'}(h)$.
\end{proof}

\begin{proposition}\label{prop:holdout-max-error}
    Let $\ell(a,b) = \max(1, |a-b|)$. There exists a data-generating process, test set containing a single site and predictive method satisfying infill asymptotics such that the \holdout{} converges in probability to $0$, while $\conditionalrisk_{\Qtest} = 1$.
\end{proposition}
\begin{proof}
    Consider $\spatialdomain=[0,1]$, $h\equiv 0$, $Y = S$, $\Stest = 1$ and 
    \begin{align}
        \Sval_m &= \begin{cases}
                    U_m & m \text{\,is prime}, \\
                    0 & m \text{\,otherwise}.
                    \end{cases}
    \end{align}
    with $U_m$ independent and identically distributed uniform variables. By the infinitude of primes, there are infinitely many $m$ such that $\Sval_m$ is uniformly distributed and, for example by \cref{prop:iid-implies-infill}, this implies that $\Sval_m$ satisfies the infill assumption. On the other hand, by the prime number theorem, as $\ntest \to \infty$, the density of primes in the natural numbers tends to $0$, and so
    \begin{align}
        \sampleaveragerisk(h) = \frac{1}{\ntest}\sum_{m=1}^{\ntest} \Sval_m = \frac{1}{\ntest}\sum_{\substack{m=1 \\ \text{prime}}}^{\ntest} U_m \leq \frac{|\{ m : 1\leq m \leq \ntest, m\text{\, prime}\}|}{\ntest} \to 0,
    \end{align}
    and $\conditionalrisk_{\Qtest}(h) = 1$.
\end{proof}

\subsubsection{Inconsistency of $1$-Nearest Neighbor Estimator}\label{app:inconsistency-1nn}
We now turn to $1$-nearest neighbor risk estimation and restate and prove \cref{prop:counterexample-1nn}:
\counterexampleonenn*
We again actually prove a strong result
\begin{proposition}\label{prop:counterexample-1nn-app}
    Assume any test point satisfies \cref{assumption:DGP}.
Assume there exists a constant $c > 0$ such that for any test point $(S, X, Y)$,  $\mathrm{Var}[\ell(Y, h^{\spatialfield}(S)) | S] \geq c$. Next, consider an infinite sequence of validation sets as in \cref{def:consistency}.
Suppose there exists an $S' \in \spatialdomain$ such that for any $r > 0$ and $\nval > 0$, $|\{1 \leq j \leq \nval : \spatialmetric(\Sval_j, S') = r\}| \leq 1$. Choose any $\Qtest$ such that $\Qtest(\{S'\}) > 0$.
Then there exists a $\delta \in (3/4,1)$ and a $C^{(\delta)}> 0$ such that, for each $\nval$, with probability at least $1-\delta$,
$|\conditionalrisk_{\Qtest}(h) \!-\!(\onennestimator)^{(\nval)}(h)| \geq C^{(\delta)}.$
Here $(\onennestimator)^{(\nval)}$ denotes the 1NN estimator associated to the first $\nval$ data points and $\delta$. $C^{(\delta)}$ and $\delta$ do not depend on $\nval$ or other properties of the sequence of validation data.
\end{proposition}
The technical condition $|\{1 \leq n \leq \nval : \spatialmetric(\Sval_j, S') = r\}| \leq 1$ ensures that the 1NN set for each point contains exactly $1$ point. That is, there are no ties. This condition would be satisfied for the infinite sequence of validation sets with probability one if, for instance, the validation points were chosen i.i.d.\ from a uniform measure on a compact set; cf.\ \cref{prop:iid-implies-infill}. Alternatively, it can be removed if any form of tie-breaking that selects a single nearest neighbor is used in defining the estimator.

The idea of the proof is that if the loss has a non-vanishing variance, then the one nearest neighbor procedure results in an estimator with a non-zero variance for point prediction. Therefore, it cannot converge in probability to the actual risk, which is deterministic.

\paragraph{Preliminary Result}
We begin by proving a result that says that if a bounded random variable has a second moment bounded below by $C$, then it cannot be close to zero most of the time. We will apply this inequality to the second moment of the difference between the $1$-nearest neighbor estimator and the test risk to in our proof of \cref{prop:counterexample-1nn}.

\begin{proposition}\label{prop:2nd-moment-bdd}
    Let $U$ be a random variable with $U \in [-A, A]$ almost surely and $\EE [U^2] \geq C > 0$. Then for any $\delta \in (1-\frac{C}{A^2}, 1)$ with probability $1-\delta$
    \begin{align}
        |U| \geq A\sqrt{1-\frac{1-\frac{C}{A^2}}{\delta}}.
    \end{align}
\end{proposition}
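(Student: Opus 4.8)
The plan is to bound how much of the second moment of $U$ can be ``hidden'' near zero. Fix the threshold
\[
t \coloneqq A\sqrt{1 - \tfrac{1 - C/A^2}{\delta}},
\]
which is a well-defined real number with $0 \le t < A$ precisely because $\delta \in (1 - C/A^2, 1)$ and because $C \le \EE[U^2] \le A^2$ (the latter since $U^2 \le A^2$ a.s.). It suffices to show $\Pr(|U| \ge t) \ge 1 - \delta$, equivalently $\Pr(|U| < t) \le \delta$, since on the event $\{|U|\ge t\}$ the stated inequality holds.

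First I would split the second moment over the events $\{|U| < t\}$ and $\{|U| \ge t\}$. On the first event $U^2 < t^2$, and on the second event $U^2 \le A^2$ by the boundedness assumption. Writing $p \coloneqq \Pr(|U| < t)$, this gives
\[
C \le \EE[U^2] \le t^2 p + A^2(1-p) = A^2 - p\,(A^2 - t^2).
\]
Since $t < A$ we may divide by $A^2 - t^2 > 0$ to obtain $p \le \dfrac{A^2 - C}{A^2 - t^2}$.

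It then remains to check that the right-hand side is at most $\delta$ for the stated choice of $t$; this is a one-line rearrangement. The inequality $\dfrac{A^2 - C}{A^2 - t^2} \le \delta$ is equivalent to $t^2 \le A^2 - \dfrac{A^2 - C}{\delta} = A^2\bigl(1 - \tfrac{1 - C/A^2}{\delta}\bigr)$, which holds with equality by the definition of $t$. Hence $p \le \delta$, which is exactly what we needed.

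There is no real obstacle here; the only thing to be careful about is the domain condition on $\delta$. The hypothesis $\delta > 1 - C/A^2$ is precisely what makes the quantity under the square root nonnegative, so that $t$ is real, while $\delta < 1$ together with $C \le A^2$ guarantees $t \le A$, legitimizing the division step and the final boundedness bound. One could equivalently argue contrapositively — if $\Pr(|U| < t) > \delta$ then $\EE[U^2] < t^2\delta + A^2(1-\delta) = C$, a contradiction with $\EE[U^2]\ge C$ — but the direct version above is cleanest.
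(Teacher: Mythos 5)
Your proof is correct and is essentially the paper's argument: the paper applies Markov's inequality to the non-negative variable $A^2 - U^2$ at the threshold $(A^2-C)/\delta$, and your decomposition $C \le \EE[U^2] \le t^2 p + A^2(1-p)$ is exactly that Markov bound written out explicitly, with the same choice of threshold. The only (shared, negligible) caveat is the degenerate case $C = A^2$, where $A^2 - t^2 = 0$ and the division step needs a separate one-line remark, but there the conclusion holds trivially since $|U| = A$ almost surely.
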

\begin{proof}
    Because $U \in [-A, A]$, $A^2 -U^2$ is a non-negative random variable. Applying Markov's inequality, for any $t > 0$, 
    \begin{align}
        \mathrm{Pr}(A^2 -U^2 \geq t) \leq \frac{A^2 - \EE[U^2]}{t} \leq  \frac{A^2 - C}{t}. \label{eqn:init-markov}
    \end{align}
    Take $t= \frac{A^2- C}{\delta}$ which is greater than $0$ because $\delta \in (1-\frac{C}{A^2}, 1)$. Then \cref{eqn:init-markov} becomes,
    \begin{align}
        \mathrm{Pr}\left(A^2 -U^2 \geq \frac{A^2- C}{\delta}\right) \leq \delta.
    \end{align}
    Taking complements, with probability at least $1-\delta$
    \begin{align}
        A^2 -U^2 < \frac{A^2- C}{\delta}.
    \end{align}
    Rearranging implies that with probability at least $1-\delta$
    \begin{align}
        U^2 > A^2 - \frac{A^2- C}{\delta}.
    \end{align}
    For $\delta \in (1-\frac{C}{A^2}, 1)$ this bound is non-vacuous (strictly greater than zero). Taking square roots, which is monotone, for any such $\delta$ with probability at least $1-\delta$ 
    \begin{align}
        |U| > A\sqrt{1 - \frac{1-\frac{C}{A^2}}{\delta}}.
    \end{align}
\end{proof}

\paragraph{Proof of Inconsistency of $1$-nearest neighbor}
We return to our proof of \cref{prop:counterexample-1nn}. The idea will be to consider a point prediction task and then apply \cref{prop:2nd-moment-bdd} to show that with some fixed probability, the $1$-nearest neighbor estimator is a fixed distance away from the test risk, even as $\nval$ increases.

\begin{proof}[Proof of \cref{prop:counterexample-1nn}]
Because expectation minimizes the squared error to a random variable over all constant functions
    \begin{align}
        \EE|\conditionalrisk_{\Qtest}(h) - (\onennestimator)^{(\nval)}(h)|^2 
           \geq \EE| (\onennestimator)^{(\nval)}(h) - \EE[  (\onennestimator)^{(\nval)}(h)]|^2.\label{eqn:sq-error-variance}
    \end{align}
Because the $\epsilon^{\textup{val}}_n$ are independent the variance of $(\onennestimator)^{(\nval)}$ is additive
\begin{align}
    \EE| (\onennestimator)^{(\nval)}(h) - \EE  (\onennestimator)^{(\nval)}(h)|^2 & = \sum_{n=1}^{\nval} (\onennweights_n)^2 \EE[\ell(\Yval_n, h^{\spatialfield}(\Sval_n)) -\averageloss(\Sval_n)] \\
    &\geq V \sum_{n=1}^{\nval} (\onennweights_n)^2. \label{eqn:lowerbdd-variance}
\end{align}
where $0 < V < \lossbdd^2/4$ is the assumed lower bound on the variance of $\ell(\Yval_n, h^{\spatialfield}(\Sval_n)) -\averageloss(\Sval_n)$ and we have left implicit the dependence of the weights on $\nval$. Also, $|\{1 \leq n \leq \nval: \spatialmetric(S', \Sval_n) = r| \leq 1$, implies $S'$ has exactly one $1$-nearest neighbor in $\Sval_{1:\nval}$, call the index of this neighbor $n(S')$. Then
\begin{align}
   \sum_{n=1}^{\nval} (\onennweights_n)^2  \geq (\onennweights_{n(S')})^2 \geq  \Qtest(\{S'\})^2. \label{eqn:lowerbdd-sq-weights}
\end{align}
Combining \cref{eqn:lowerbdd-variance} and \cref{eqn:lowerbdd-sq-weights}
\begin{align}
    \EE[|\conditionalrisk_{\Qtest}(h) - (\onennestimator)^{(\nval)}(h)|^2]  \geq V\Qtest(\{S'\})^2.
\end{align}
We now apply \cref{prop:2nd-moment-bdd} with $U = |\conditionalrisk_{\Qtest}(h) - (\onennestimator)^{(\nval)}(h)|$ to conclude that for $\delta \in (1-\frac{V\Qtest(\{S'\})^2}{\lossbdd^2}, 1)$ with probability at least $1-\delta$
\begin{align}
    |\conditionalrisk_{\Qtest}(h) - (\onennestimator)^{(\nval)}(h)| \geq  \lossbdd\sqrt{1 - \frac{1-\frac{V\Qtest(\{S'\})^2}{\lossbdd^2}}{\delta}} > 0. \label{eqn:high-prob-lower-bdd}
\end{align}
As neither $\delta$ nor the right hand side of \cref{eqn:high-prob-lower-bdd} depend on $\nval$, $1$-nearest neighbor is not consistent under infill asymptotics.
\end{proof}

\begin{proposition}\label{prop:1nn-classification}
    Consider $\responsedomain = \{0,1\}$ and $\ell(a,b) = \begin{cases}
    0 & a= b \\
    1 & \text{otherwise}
    \end{cases}$, $\spatialdomain = [0,1]^d$ and $\Sval_m \iidsim \mu$, where $\mu$ is any measure with density with respect to Lebesgue measure. Fix any $S\in \spatialdomain$ and any predictive method $h$ and let $\Qtest = \delta_S$. Then,
    \begin{align}
        |\onennestimator(h) - \conditionalrisk_{\Qtest}(h) | \geq \min(\mathbb{E}[Y^{\textrm{test}}], 1 - \mathbb{E}[Y^{\textrm{test}}]).
    \end{align}
    In particular, if $\mathbb{E}[Y^{\textrm{test}}] = 1/2$, then one-nearest neighbor risk estimation has error $1/2$.
\end{proposition}
\begin{proof}
    Because $\Sval_m \iidsim \mu$, and $\mu$ has Lebesgue density, the nearest neighbor to $S$ is almost surely unique. This implies that $\onennestimator(h) \in \{0,1\}$. Therefore,
    \begin{align}
        |\onennestimator(h) - \conditionalrisk_{\Qtest}(h) | \geq \min_{a \in \{0,1\}} |a - \mathbb{E}[Y^{\textrm{test}}] | = \min(\mathbb{E}[Y^{\textrm{test}}], 1 - \mathbb{E}[Y^{\textrm{test}}]).
    \end{align}
\end{proof}

\subsubsection{Inconsistency of Nearest Neighbors with Number of Neighbors Depending On Number of Validation Points}\label{app:inconsistency-k-sequence}
We now restate and prove \cref{prop:counterexample-k-sequence}, which states that nearest-neighbor risk estimation with the number of neighbors depending (only) on the number of validation points is inconsistent under infill asymptotics, regardless of type of dependence.

\counterexamplekenn*

\begin{proof}
    The idea is that either 1.) $(k_n)_{n=1}^{\infty}$ has a bounded sub-sequence, in which case along this sub-sequence, the  $\kseq(h)$ can have a variance bounded below by $0$, and so by \cref{prop:2nd-moment-bdd} these estimators are bounded away from the test risk with fixed probability. Or 2.) the number of neighbors used tends to infinity, in which case we can find a sequence of data that accumulates more slowly around the test site, leading to many neighbors far from the point being used in the estimator, and therefore non-negligible bias. 
    
    We split into these two cases, and give an example showing in either case $\kseq$ can be inconsistent.
    \paragraph{Case 1: $\lim \inf_{n \to \infty} k_n < \infty$. \\} 
    Consider a data-generating process with no covariates, $\Sval \sim U(0,1)$, $\Stest = \{\frac{1}{2}\}$, $Y_n|S_n = \epsilon_n \sim U(-1/2,1/2)$, $h=0$ and $\ell(y,y') = |y-y'|$. Because $\lim \inf_{n \to \infty} k_n < \infty$, there exists a $C>0$ such that $(k_n)_{n=1}^{\infty}$ contains a bounded sub-sequence $(\tilde{k}_n)_{n=1}^{\infty}$ with $\tilde{k}_n \leq C$ for all $C>0$. Since the limit superior of a sub-sequence cannot be larger than of the full sequence
    \begin{align}
        \lim \sup_{\nval \to \infty} \mathrm{Var}(\kseq(h)) \geq \lim \sup_{n \to \infty} \mathrm{Var}(\ksubseq(h)).
    \end{align}
    where $\ksubseq$ denotes the sub-sequence of $\kseq$ where the number of validation points runs along the sub-sequence corresponding to $(\tilde{k}_n)_{n=1}^\infty$. 
    
    Almost surely, for any $\nval$ the test point $1/2$ has exactly $k_{\nval}$-nearest neighbors, because the probability that two validation points are equidistant from $1/2$ is $0$. Since a countable union of almost sure events is also an almost sure event, with probability $1$ for all $\nval$ the test point at $1/2$ has exactly $k_{\nval}$ neighbors. Therefore, with probability $1$, for all $\nval$ the vector of weights $\nnweights$ has exactly $k_{\nval}$ non-zero entries, each with value $1/k_{\nval}$. We condition on this probability $1$ event moving forward. In this case,
    \begin{align}
        \mathrm{Var}(\kseq) &= \mathrm{Var}\Big(\sum_{n=1}^{\nval} \nnweights |\epsilon_n|\Big) \\
        &= \frac{1}{k_{\nval}^2}\sum_{i=1}^{k_{\nval}}\mathrm{Var}(|\tilde{\epsilon}_i|) \\
        &= \frac{1}{48 k_{\nval}}
    \end{align}
    where $(\tilde{\epsilon})_{i=1}^{k_{\nval}}$ are the subset of $(\epsilon_n)_{n=1}^{\nval}$ corresponding to the $k_{\nval}$ nearest neighbors to $1/2$. The factor of 48 comes from the variance of a uniform random variable on $[0, 1/2]$. 
    
    For all $\nval$ corresponding to the bounded sub-sequence $(\tilde{k}_n)_{n=1}^\infty$, we conclude
    \begin{align}
        \mathrm{Var}(\ksubseq(h)) \geq \frac{1}{48 C}. 
    \end{align}
    Therefore, 
    \begin{align}
               \lim \sup_{\nval \to \infty} \mathrm{Var}(\kseq(h)) \geq \frac{1}{48 C}.
    \end{align}
    Applying \cref{prop:2nd-moment-bdd} to the random variable $|\kseq(h) - \conditionalrisk(h)|$ we conclude there exists an $\epsilon, \delta$ such that with probability at least $1-\delta$,
    \begin{align}
        \lim\sup_{\nval \to \infty} |\kseq(h) - \conditionalrisk(h)| \geq \epsilon.
    \end{align}
    \paragraph{Case 2: $\lim \inf_{n \to \infty} k_n = \infty$. \\}

    Consider the data generating process, $Y_n|S_n = S_n$ on $[0,1]$ with $\Stest = \{0\}$ and $\ell(y,y') = |y-y'|$ and $h=0$. We then have $\conditionalrisk(h) = 0$. We will construct a sequence of validation sites $\Sval$ such that for each $\nval$ less than $k_{\nval}/2$ of the validation sites fall in the interval $[0, 1/4)$. For any such sequence (supposing such a sequence exists for the moment),
    \begin{align}
        |\kseq(h) - \conditionalrisk(h)| =|\kseq(h)|
         \geq \frac{1}{k_{\nval}} \cdot \frac{k_{\nval}}{2} \frac{1}{4} \geq \frac{1}{8}.
    \end{align}
 All that remains is to construct such a sequence that also satisfies infill asymptotics. Define the function $\psi: \mathbb{N} \to (0, 1)$ by 
    \begin{align}
        \psi(i) = \frac{i - 2^{\lfloor \log_2 i\rfloor + 1}}{2^{\lfloor \log_2 i\rfloor + 1}}.
    \end{align}
    This corresponds to the dyadic sequence $(1/2, 1/4, 3/4, 1/8, 3/8, 5/8, 7/8, 1/16, \dotsc)$. The essential properties of this function for our application is that the image of the function is dense on $(0,1)$ and for any $i$ at least $i/2$ of $(\phi(j))_{j=1}^{i}$ are in the interval $(0, 1/2]$. 
    
    \begin{algorithm}
    \caption{Algorithm defining $(\Sval_n)_{n=1}^\infty$}\label{alg:defn-sval}
    \begin{algorithmic}
    % \State Initialize: $j = 1, n = 1$. 
    % \State Return: $(\Sval_n)_{n=1}^{\infty}$.
    \WHILE{True:} 
        \IF{$j < k_n$ and $n - j > j$}
            \STATE $\Sval_n = \frac{1}{2} - \frac{1}{2}\psi(j)$
            \STATE $j \leftarrow j + 1$.
        \ELSE
            \STATE $\Sval_n = \frac{1}{2} + \frac{1}{2}\phi(n-j)$
            \STATE $n \leftarrow n + 1$.
        \ENDIF
    \ENDWHILE
    \end{algorithmic}
    \end{algorithm}
    The validation points are defined algorithmically via \cref{alg:defn-sval}. Because $k_n$ is unbounded, the first condition must be called infinitely often, and so $j$ eventually takes on all natural numbers in this loop. 
    
    Because $n - j > j$ each time the first condition is called, $n-j$ is incremented if and only if $j$ is not incremented $n-j$ also takes on all natural numbers in this loop. Therefore, $(\Sval_n)_{n=1}^{\infty}$ is a dense set in $[0,1]$ because $\frac{1}{2} - \frac{1}{2}\psi(\mathbb{N})$ is dense on $[0, 1/2]$ and $\frac{1}{2} + \frac{1}{2}\psi(\mathbb{N})$ is dense on $[1/2, 1]$. We conclude this sequence satisfies the infill asymptotics.
    
    For any $\nval$, the number of validation points less than $1/2$ is not more than $k_{\nval}$ by induction and using the first condition in the if statement. Of the points placed in $(0, 1/2)$ at most half of them are in $(0, 1/4]$, by our earlier observation that for any $i$ at least $i/2$ of $(\psi(j))_{j=1}^{i}$ are in the interval $(0, 1/2]$. Therefore, not more than $k_{\nval}/2$ points are in $[0, 1/4)$ for any $\nval$. 
\end{proof}
\subsubsection{Inconsistency of Blocked Spatial Validation}\label{app:spatial-blocking}

In what follows, we first review blocked spatial validation; then we prove it is inconsistent under infill asymptotics; and finally we provide illustrative examples that support its inconsistency (and the spatial consistency of our proposed method). The intuition is the same as in our previous \holdout{} counterexamples: blocked spatial validation is inconsistent because the set of locations used for validation could be substantially different from the set of locations where we want to apply the model (the test locations).

\paragraph{Blocked Spatial Validation.}
% what blocked spatial validation is
In blocked spatial validation, we have access to a collection of data, with spatial locations, that may be used for training and validation. All observations within a contiguous spatial area, typically a square (i.e., block), are reserved for validation. As an example, see the left plot in \cref{fig:spatial-blocking-realistic}, where we treat the horizontal and vertical dimensions as two spatial coordinates (e.g., latitude and longitude). We imagine that we originally have access to all of the training (pink circles) and validation (orange diamonds) data. We choose a particular spatial block (here, one sixteenth of the area of the interest) and reserve all of the data within it for validation. The estimate of test risk is the empirical average of the loss over the validation data. This estimate can be seen as a particular example of the \holdout{} estimator of test risk. While we have already seen that the \holdout{} need not be spatially consistent, our counterexample  above (\cref{prop:holdout-inconsistent}) was not a spatially blocked example, so it still remains to show blocked spatial validation is inconsistent, as we do below.

\paragraph{Existing Work on Blocked Spatial Validation.}
% part of cv
Blocked spatial validation is typically used as part of a cross-validation scheme, \emph{blocked spatial cross-validation} (also sometimes referred to as spatial block cross-validation or simply block cross-validation) \citep{burman1994cross,roberts2017cross}. In blocked spatial cross-validation, the region of interest is partitioned into disjoint regions. In a particular fold of spatial cross-validation, data within a single region are reserved for validation data, and the data in the remaining regions serve as the training data for the fold. For instance, we can imagine partitioning the square of spatial locations into sixteen blocks. \Cref{fig:spatial-blocking-realistic} can then be interpreted as a single fold of blocked spatial cross-validation. As in \cref{sec:inconsistent,app:val-vs-cv}, we emphasize that, while cross-validation can be useful for data efficiency, even in i.i.d.\ settings it essentially estimates a different notion of risk than validation does \citep{bates2023cv}. So we leave a careful study of blocked spatial cross-validation to future work and focus on evaluating blocked spatial validation presently.

% traditional motivation and our check
Spatial (cross-)validation is commonly motivated by a desire to test the extrapolation quality of a prediction method \citep[Table 1]{roberts2017cross}. While the present paper does not examine the quality of validation methods for extrapolation tasks, we can still ask whether spatial validation passes the minimal check we propose in the present paper; that is, we ask whether spatial validation satisfies spatial consistency. In fact, we will shortly show it is spatially inconsistent.

% previous work
\citet[Figure 1]{wadoux2021spatial} and \citet[Figure 9]{debruin2022dealing} have previously used simulations to show empirically that spatial cross-validation can give biased estimates of \emph{map accuracy} of a prediction method. \emph{Map accuracy} is the loss between observed and predicted values of the response evaluated over a grid of prediction points in a particular region. This is essentially the same as the test risk we consider when the test point locations form a grid. A difference between our test risk with test point locations on a regular grid and map accuracy is that we treat the response as a random variable and average over this randomness as well, whereas map accuracy treats the test response values as fixed. However, no work has previously established the spatial consistency properties of spatial validation.

There is an intuitive understanding in some of the literature on estimating map accuracy that there may be a mismatch between the validation and test locations. Several recent works have tried to address the mismatch in the cross-validation setting, where validation locations in each fold can be chosen \citep{mila2022nearest,Linnenbrink2023kNNDM}. (Conversely our only assumption in the present work is that validation locations arrive in a way that satisfies infill asymptotics.) These authors have suggested matching the distribution of pairwise distances between validation locations and the closest training locations to the distribution of pairwise distances between test locations and the closest training locations within each fold. We still expect these methods to exhibit bias in their estimates of test risk unless one is willing to make additional assumptions beyond the smoothness assumption in \cref{assumption:Lipschitz}. 

\paragraph{We Show Blocked Spatial Validation is Inconsistent under Infill Asymptotics.}
We will focus on cases where infill asymptotics is satisfied, and show that even in these particularly nice cases where validation data is available near the test point, the blocked spatial validation estimator need not give consistent estimates of the test risk. As in our other \holdout{} counterexamples, the intuition is that the validation locations could be meaningfully different from the test locations, and getting more validation locations need not alleviate this mismatch.

\counterexampleblocked*

We now construct such an example. In \cref{prop:blocked-example}, we show that for this example (which has random validation locations), the fill distance tends to zero in probability and the holdout converges in probability to a different value from the test risk. The predictive method described is linear, and is therefore Lipschitz. And the response is a deterministic, Lipschitz function. Therefore, \cref{assumption:Lipschitz} is also satisfied. Therefore, infill asymptotics is satisfied, and so \cref{prop:blocking-inconsistent} follows from \cref{prop:blocked-example}.

We prove \cref{prop:blocked-example} after describing the data-generating process, test distribution, and model that we will use.

\emph{Training Locations, Test Locations and Validation Locations.}
% In order to discuss spatial blocking, we need a training set. 
We generate the train locations on a $2$-dimensional regular grid on $[-0.5, 0] \times [-0.5,5]$ with $5$ points in the first dimension and $11$ points in the second dimension,
$S^{\text{train}} = \left\{-0.5, -0.4, -0.3, -0.2, -0.1 \right\} \times \left\{-0.5, -0.4, -0.3, -0.2, -0.1, 0, 0.1, 0.2, 0.3, 0.4, 0.5\right\}$. 
We generate the validation locations uniformly at random on $[0, 0.5] \times [-0.5,5]$. The test locations consist of a single point at $(0.15, 0)$.

\emph{Covariates and Response.}
We consider a data-generating process without additional covariates. We assume the response is generated as $Y_i^{j} = |(S_i^j)^{(1)}|_1 $, for $j \in \{\text{train}, \text{val}, \text{test}\}$. In our analysis, we assume that there is no noise for simplicity.

\emph{Predictive Model.}
We consider ordinary least squares fit on the spatial location using the training data. 

\emph{Blocked Spatial Validation Estimator.}
We use the holdout estimator in  \cref{eqn:sample-avg-definition}. The training and validation data are generated to be contained in disjoint regions in space. The estimator is therefore an example of blocked spatial validation.

\begin{proposition}\label{prop:blocked-example}
    For the data-generating process described above, the fill distance tends to $0$ in probability, and the holdout estimator tends to $2\int_{s=0}^{0.5} s \; ds = 0.5$ almost surely. On the other hand, the test risk is $0.3$.
\end{proposition}
\begin{proof}
    Because the training points all have first coordinate less than zero,  $Y^{\text{train}}_j = (S^{\text{train}}_j)^{(i)}$ for all $1 \leq j \leq 55$. Therefore, the predictive model is $\hat{Y}(S) = -S^{(1)}$, as this has zero squared error on the training data, and is the unique linear function with this property. We can then compute the test loss as $|\Ytest_1 - \hat{Y}((0.15, 0))| = 0.3$, proving the second part of the claim. By the strong law of large numbers, and because the holdout points are assumed to be independent and identically distributed, the holdout estimator converges almost surely to its expected value. We can calculate this expected value as,
    \begin{align}
        \int_{[0,0.5] \times [0,1]} (S^{(1)} - \hat{Y}(S))dS = 2 \int_{[0,0.5]} s \; ds = 0.5.
    \end{align}
    Finally, the fill distance tends to $0$ in probability by \cref{prop:iid-implies-infill}, as the uniform distribution has non-zero density in a neighborhood containing the test point.
\end{proof}

\paragraph{Illustrative Simulation of \Cref{prop:blocked-example}.}
We simulate the process described above for number of validation points $\nval \in \{250, 500, 1000, 2000, 4000, 8000\}$. For each number of validation points, we resample the validation locations, the only random part of the process described above, 100 times; since the response at the validation points varies deterministically with the location in this simulation, the validation responses will update when the locations update. We compare our method, the blocking estimator described above, and the standard holdout (using all available validation data). We expect our method to be consistent, because the second half of the data is uniformly distributed on a region containing the test data, and so the fill distance tends to $0$ with probability $1$ (\cref{prop:iid-implies-infill}). 
\begin{figure}
\includegraphics[width=0.5\textwidth]{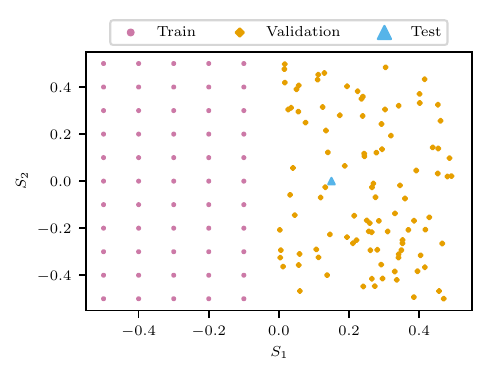}
\includegraphics[width=0.5\textwidth]{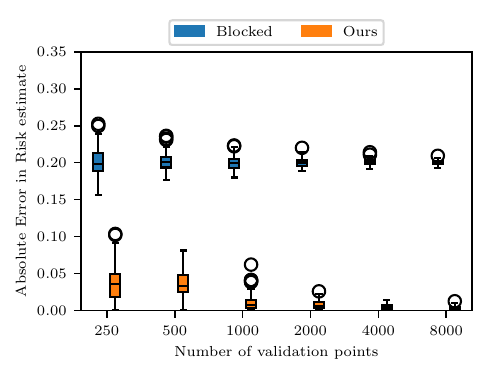}
\caption{On the left: the training (pink circles), validation (orange diamonds), and test (blue triangle) locations for the blocked spatial validation counterexample described above. On the right: A comparison of our method and the holdout estimator using the spatial block on the dataset described above.}\label{fig:blocked-validation-no-noise}
\end{figure}
\Cref{fig:blocked-validation-no-noise} shows the train, validation and test locations. The plot on the right in \cref{fig:blocked-validation-no-noise} supports the claim that the absolute error in risk estimation using the blocked holdout converges to $0.5 - 0.3 = 0.2$. And the error of our method is close to $0$ as infill asymptotics holds. 

\paragraph{Illustrative Simulation with More Complex Block Structure.}
The example above is particularly simple with a single validation block and a single training block. 
We also consider a second simulation that more closely resembles the use of blocked spatial validation in practice while still satisfying an infill assumption. We generate all data on the unit square. For each seed, we select a block with side length $0.25 \times 0.25$ that will only contain validation and test data. The remaining part of the unit square only contains test data so the data is spatially blocked into training and validation sets. The validation data are uniform on this sub-block. We sample the training data uniform on the remaining part of the unit square. We draw the test locations as a product of beta-distributions with parameter $0.5, 0.5$ on the sub-square containing the validation data. As a result, the test data is clustered close to the edges of the square containing validation data. We use 1000 train points and 500 test points. We vary the number of the validation points in $\{250, 500, 1000, 2000\}$. We generate the response as a Gaussian process on these locations with Mat\'ern 5/2 kernel with lengthscale 0.1, and Gaussian noise with standard deviation $0.1$. We use truncated mean absolute error loss, truncated at $2$. We fit kernel ridge regression as the predictive method, using the kernel used to generate the data and the variance ($0.01$) as the regularization parameter. The training, validation and test locations are shown in \cref{fig:spatial-blocking-realistic}, left. 

\begin{figure}
    \centering
    \includegraphics[width=0.4\linewidth]{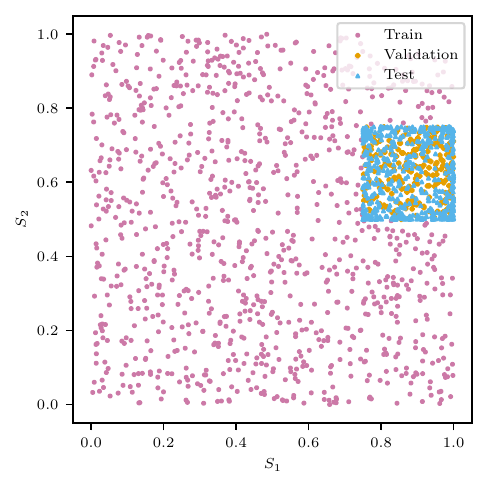}
     \includegraphics[width=0.55\linewidth]{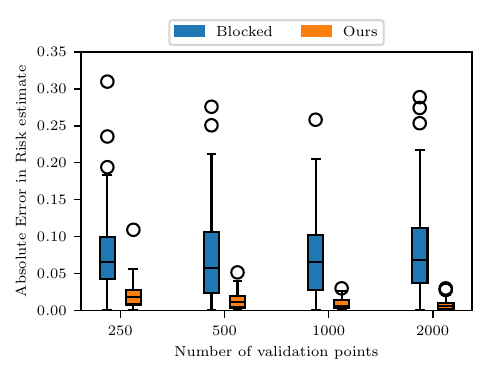}
    \caption{On the left: the training (pink circles), validation (orange diamonds) and test (blue triangle) locations for the blocked spatial validation counterexample described above. On the right: A comparison of our method and the holdout estimator using the spatial block on the dataset described above.}
    \label{fig:spatial-blocking-realistic}
\end{figure}

We resample the spatial locations, as well as the Gaussian process and noise at these spatial locations, as described above 100 times. In \cref{fig:spatial-blocking-realistic}, we compare the quality of estimation of blocked spatial validation and spatial nearest neighbors (ours) on these 100 seeds. We see that the holdout using blocking has substantially larger errors than our method (SNN). Also, while our method improves as the amount of validation data increases, as our theory suggests should happen since infill asymptotics is satisfied, the blocked spatial validation estimator using the holdout does not improve substantially even as we increase the amount of validation data available by an order of magnitude.

\subsection{General Nearest Neighbor Bound and Selecting the Number of Neighbors}\label{app:proof-nn-bound}
In this section, we prove two results giving bounds on the performance of nearest neighbor risk estimation that will be the basis of later results. The first, already stated in the main text, is a general bound for any $k$. 

\nnbound*

The second result we will show is specific to SNN and relates the error incurred by using $\kstar$ to the error of the minimizer of the bound from \cref{thm:nn-fill-bdd}:

\begin{restatable}[Minimization over Powers of Two]{proposition}{nnminimizationpowtwo}\label{prop:bound-minimization-powers-2}
      Let $T = \{1, \dots, \nval\}$ and $T_2 = \{2^{i}\}_{i=1}^{\lfloor \log_2 {\nval} \rfloor}$. Fix $\delta \in (0,1)$. Define $\kstar \in \arg \min_{k \in T_2} \rho_k + \seconstant{\delta}\|\nnweights\|_2 $ with $\seconstant{\delta}=\lossbdd\sqrt{\frac{1}{2}\log\frac{2}{\delta}}$. Define $C_{\lipschitzbdd} = \max(1, \lipschitzbdd)$. Under \cref{assumption:DGP,assumption:BoundedLoss,assumption:Lipschitz} with probability at least $1-\delta$
    \begin{align}
        | & \conditionalrisk_{\Qtest}(h) - \kstarnearestneighborrisk(h)|  \leq \sqrt{2}C_{\lipschitzbdd} \left(\min_{k \in T} \rho_k + \seconstant{\delta}\sqrt{\max_{1\leq n \leq \nval}\frac{\Qtest(\ball{\Sval_n}{\rho_k})}{k}}\right),
    \end{align}
    where $\ball{S}{r}$ denotes the ball of radius $r$ centered at $S$.
\end{restatable}

\subsubsection{Preliminary Result: Hoeffding's Inequality}
We begin by recalling Hoeffding's inequality, which will be used to control tail probabilities of the sum of the weighted losses being far from its expectation.

\begin{lemma}[{Hoeffding's Inequality, \citep[Theorem 2]{hoeffding1963probability}}]\label{lem:hoeffding}
    Let $(Z_i)_{i=1}^{\ell}$ be independent random variables and $(a_i)_{i=1}^{\ell}$ and $(b_i)_{i=1}^{\ell}$ be sequences of real numbers such that $a_i \leq Z_i \leq b_i$ almost surely. Then for all $t > 0$
    \begin{align}
        \mathrm{Pr}\left( \left\lvert \sum_{i=1}^{\ell} Z_i - \sum_{i=1}^{\ell}\EE[Z_i]\right\rvert \geq t \right) \leq 2 \exp\left(-\frac{2t^2}{\sum_{i=1}^{\ell} (b_i - a_i)^2}\right).
    \end{align}
    Equivalently, for any $\delta \in (0,1)$ with probability at least $1-\delta$
    \begin{align}
        \left\lvert \sum_{i=1}^{\ell} Z_i - \sum_{i=1}^{\ell}\EE[Z_i]\right\rvert \leq \|b-a\|_2 \sqrt{\frac{1}{2}\log\frac{2}{\delta}}.
    \end{align}
    where $a, b \in \RR^{\ell}$ have entries $a_i$ and $ b_i$ respectively.
\end{lemma}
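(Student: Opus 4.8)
The plan is to prove this classical concentration bound by the exponential moment (Chernoff) method: reduce the two-sided statement to a one-sided tail, control the moment generating function of each bounded summand, and then optimize over the free exponential parameter. First I would center the summands. Writing $W_i := Z_i - \EE[Z_i]$, each $W_i$ satisfies $c_i \le W_i \le d_i$ almost surely with $c_i := a_i - \EE[Z_i]$ and $d_i := b_i - \EE[Z_i]$, so that $d_i - c_i = b_i - a_i$ and $\EE[W_i]=0$. It then suffices to bound $\mathrm{Pr}(\sum_i W_i \geq t)$: applying the same bound to $(-W_i)$ and combining the two events by a union bound yields the two-sided tail together with the leading factor of $2$. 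For the one-sided tail, for any $s > 0$ I would use exponential Markov together with independence to factor the joint moment generating function:
\begin{align}
\mathrm{Pr}\Big(\sum_{i=1}^{\ell} W_i \geq t\Big) \leq e^{-st}\,\EE\Big[e^{s\sum_i W_i}\Big] = e^{-st}\prod_{i=1}^{\ell}\EE\big[e^{sW_i}\big]. \nonumber
\end{align}

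The technical heart is a sub-Gaussian bound on each factor (Hoeffding's lemma): for a zero-mean random variable $W$ supported on $[c,d]$, one has $\EE[e^{sW}] \leq \exp\big(s^2(d-c)^2/8\big)$. I would prove this via convexity of $x \mapsto e^{sx}$: writing each point of $[c,d]$ as a convex combination of the endpoints gives the pointwise bound $e^{sW} \leq \frac{d - W}{d-c}e^{sc} + \frac{W - c}{d-c}e^{sd}$, and taking expectations with $\EE[W]=0$ leaves a deterministic function of $s$. Setting $p := -c/(d-c) \in [0,1]$ and $u := s(d-c)$, this bound reads $e^{L(u)}$ with $L(u) := -pu + \log(1 - p + p\,e^{u})$; a direct computation gives $L(0)=L'(0)=0$ and $L''(u) \leq 1/4$, so that Taylor's theorem yields $L(u) \leq u^2/8$, i.e. the claimed bound with $d - c = b_i - a_i$.

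Finally I would assemble the pieces. Substituting the per-factor bound gives $\mathrm{Pr}(\sum_i W_i \geq t) \leq \exp\big(-st + \tfrac{s^2}{8}\sum_i (b_i-a_i)^2\big)$; minimizing the exponent over $s>0$ at $s = 4t/\sum_i(b_i-a_i)^2$ produces $\exp\big(-2t^2/\sum_i(b_i-a_i)^2\big)$, and doubling for the two-sided event gives the first displayed inequality. The equivalent high-probability form follows by setting the right-hand side equal to $\delta$ and solving for $t$, using $\|b-a\|_2 = \sqrt{\sum_i(b_i-a_i)^2}$.

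The one genuinely delicate step is Hoeffding's lemma: verifying $L''(u) \le 1/4$ uniformly in $u$ (equivalently, that a distribution supported on an interval of length $d-c$ has variance at most $(d-c)^2/4$) is where all of the boundedness is used, and securing the constant $1/8$ rather than a looser one is exactly what makes the final exponent $-2t^2/\sum_i(b_i-a_i)^2$ sharp enough to match the statement. Everything else is bookkeeping around the Chernoff optimization and the union bound.
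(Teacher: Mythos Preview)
Your proof is correct and is the standard Chernoff-method argument for Hoeffding's inequality. The paper itself does not prove this lemma: it simply recalls the statement with a citation to \citet[Theorem 2]{hoeffding1963probability} and then uses it as a black box, so there is no ``paper's own proof'' to compare against.
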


\subsubsection{Proof of General Nearest Neighbor Risk Estimation Bound}
We again recall and prove \cref{thm:nn-fill-bdd}. The idea is to apply triangle inequality to split the error into a bias term and a sampling error term. The sampling error term is then controlled with \cref{lem:hoeffding} since the loss is bounded. The bias term is controlled using \cref{assumption:Lipschitz}.
\nnbound*
\begin{proof}
    By the triangle inequality,
    \begin{align}
        |\conditionalrisk_{\Qtest}(h) - \nearestneighborrisk(h)| & \leq \underbrace{\big|\conditionalrisk_{\Qtest}(h) - \sum_{n=1}^{\nval} \nnweights_n\averageloss(\Sval_n)\big|}_{\tau_1} \\ & + \underbrace{\Big\vert\sum_{n=1}^{\nval} \nnweights_n(\ell(f(\Sval_n, \spatialfield(\Sval_n), \epsilon^{\textup{val}}_n), h^{\spatialfield}(\Sval_n)) -  \averageloss(\Sval_n)\Big\vert}_{\tau_2}. \label{eqn:initial-triangle}
    \end{align}
    The first term, $\tau_1$ is a bias term, while the second term, $\tau_2$ is a sum of $\nval$ independent variables with expectation zero. Using \cref{assumption:BoundedLoss}, we apply Hoeffding's inequality (\cref{lem:hoeffding}) to bound $\tau_2$: for any $\delta \in (0,1)$ with probability at least $1-\delta$
    \begin{align}
        \tau_2 \leq \lossbdd \|\nnweights\|_2\sqrt{\frac{1}{2}\log\frac{2}{\delta}}. \label{eqn:t2-bdd}
    \end{align}
    By H\"older's inequality and because the weights are non-negative and sum to one,
    \begin{align}
        \|\nnweights\|_2 \leq \sqrt{\|\nnweights\|_1 \|\nnweights\|_\infty} = \sqrt{\max_{1\leq n \leq \nval} \nnweights_n}. \label{eqn:holder-ineq}
    \end{align}
    Recalling the definition of $\nnweights$ (\cref{def:nn-estimator}) and that each $A^k(s)$ contains at least $k$ points by construction,
    \begin{align}
        \nnweights_n = \frac{1}{\ntest}\sum_{m=1}^{\ntest}\frac{1}{|A^k(\Stest_m)|} \mathbf{1}\{\Sval_n \in A^k(\Stest_m)\}\leq\frac{1}{k} \frac{1}{\ntest}\sum_{m=1}^{\ntest}\mathbf{1}\{\Sval_n \in A^k(\Stest_m)\}. \label{eqn:max-weight-bdd}
    \end{align}
    By the definition of $\rho_k$,
    \begin{align}
        \Sval_n \in A^k(\Stest_m) \Rightarrow \Stest_m \in \ball{\Sval_n}{\rho_k}
    \end{align}
    and so
    \begin{align}
        \mathbf{1}\{\Sval_n \in A^k(\Stest_m)\} \leq \mathbf{1}\{\Stest_m \in \ball{\Sval_n}{\rho_k}\}.
    \end{align}
    Substituting this in \cref{eqn:max-weight-bdd} and using \cref{eqn:holder-ineq}
    \begin{align}
        \|\nnweights\|_2 \leq \sqrt{\frac{1}{k}\max_{1\leq n \leq \nval} \frac{1}{\ntest}\sum_{m=1}^{\ntest}\mathbf{1}\{\Stest_m \in \ball{\Sval_n}{\rho_k}\}} = \sqrt{\max_{1 \leq n \leq \nval}\frac{\Qtest(\ball{\Sval_n}{\rho_k})}{k}}. \label{eqn:bdd-on-2norm}
    \end{align}

    It remains to bound the bias term, $\tau_1$. Define $\alpha_{nm}^k =  \frac{1}{|A^k(\Stest_m)|}\mathbf{1}\{\Sval_n \in A^k(\Stest_m)\}$. Recalling the definition of $\nnweights$ and rearranging the order of summation
    \begin{align}
        \tau_1 &= \left\lvert\frac{1}{\ntest}\sum_{m=1}^{\ntest} (\averageloss(\Stest_m) - \sum_{n=1}^{\nval} \alpha_{nm}^k\averageloss(\Sval_n) ) \right\rvert \\ &\leq \max_{1\leq m \leq \ntest} \left\lvert\averageloss(\Stest_m) - \sum_{n=1}^{\nval} \alpha_{nm}^k\averageloss(\Sval_n)\right\rvert.
    \end{align}
    Because for any $1 \leq m \leq \ntest$, $\sum_{n=1}^{\nval} \alpha_{nm}^k = 1$
    \begin{align}
        \left\lvert\averageloss(\Stest_m) - \sum_{n=1}^{\nval} \alpha_{mn}^k\averageloss(\Sval_n)\right\rvert & = \left\lvert\sum_{n=1}^{\nval} \alpha_{mn}^k(\averageloss(\Stest_m) - \averageloss(\Sval_n)) \right\rvert \\
        &\leq \max_{n: \alpha_{mn}^k >0} |\averageloss(\Stest_m) - \averageloss(\Sval_n)|.
    \end{align}
    Applying \cref{assumption:Lipschitz} and taking the maximum over $m$ as well,
    \begin{align}
        \tau_1 \leq \max_{n, m: \alpha_{nm}^k >0} \lipschitzbdd \spatialmetric(\Stest_m, \Sval_n).
    \end{align}
    The constraint $\alpha_{nm}^k > 0$ implies that $\spatialmetric(\Stest_m, \Sval_n) \leq \rho_k$ and so
    \begin{align}
        \tau_1 \leq \lipschitzbdd  \rho_k. \label{eqn:t1-bdd}
    \end{align}
    The result follows from combining \cref{eqn:initial-triangle,eqn:t2-bdd,eqn:t1-bdd,eqn:bdd-on-2norm}.
\end{proof}

\subsubsection{Proofs Related to Selecting the Number of Neighbors}

We now restate and prove a bound that upper bounds the error of risk estimation with SNN (i.e.~using $\kstar$ neighbors) to the minimum of the upper bound from \cref{thm:nn-fill-bdd} over all $k$. A key observation is that because \cref{thm:nn-fill-bdd} is conditional on the test locations, it can be minimized without the need to take a union bound over all $k$ in the set we minimize over. We first need a preliminary result, which holds for minimization over any subset of $\{1, \dots, \nval\}$.

\begin{restatable}[Minimization of Upper Bound]{proposition}{nnminimization}\label{prop:bound-minimization}
    Let $T \subset \{1, \dots, \nval\}$. Fix $\delta \in (0,1)$. Define $\kstar[T] \in \arg \min_{k \in T} \rho_k + \seconstant{\delta}\|\nnweights\|_2 $ with $\seconstant{\delta}=\lossbdd\sqrt{\frac{1}{2}\log\frac{2}{\delta}}$. Define $C_{\lipschitzbdd} = \max(1, \lipschitzbdd)$. Under \cref{assumption:DGP,assumption:BoundedLoss,assumption:Lipschitz} with probability at least $1-\delta$
    \begin{align}
        |  \conditionalrisk_{\Qtest}(h) - \kstarnearestneighborrisk[T](h)| &\leq C_{\lipschitzbdd} \left(\min_{k \in T} \rho_k + \seconstant{\delta}\|\nnweights\|_2 \right) \label{eqn:first-ub-min} \\
          & \leq C_{\lipschitzbdd} \left(\min_{k \in T} \rho_k + \seconstant{\delta}\sqrt{\max_{1 \leq n \leq \nval}\frac{\Qtest(\ball{\Sval_n}{\rho_k})}{k}}\right),
    \end{align}
    where $\ball{S}{r}$ denotes the ball of radius $r$ centered at $S$.
\end{restatable}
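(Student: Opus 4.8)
The plan is to invoke \cref{thm:nn-fill-bdd} exactly once, at the data-dependent index $k=\kstar[T]$, and then chase constants. The key structural observation — and the reason no union bound over $T$ is required — is that $\kstar[T]$, as well as the quantities $\rho_k$ and $\|\nnweights\|_2$ entering the bound of \cref{thm:nn-fill-bdd}, depend only on the validation and test \emph{sites}, not on the noise variables $\epsilon^{\textup{val}}_1,\dots,\epsilon^{\textup{val}}_{\nval}$. The randomness in \cref{thm:nn-fill-bdd} enters only through Hoeffding's inequality applied to those noise variables (the term $\tau_2$ in its proof); its conclusion holds, conditionally on the sites, with probability at least $1-\delta$ for \emph{any} fixed $k\in\{1,\dots,\nval\}$. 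Since $\kstar[T]$ is a fixed element of $T\subset\{1,\dots,\nval\}$ once the sites are given, we may simply apply \cref{thm:nn-fill-bdd} at this choice of $k$.

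Carrying this out, I would first apply \cref{thm:nn-fill-bdd} with $k=\kstar[T]$ to get, with probability at least $1-\delta$,
\begin{align}
    |\conditionalrisk_{\Qtest}(h)-\kstarnearestneighborrisk[T](h)|\le \lipschitzbdd\,\rho_{\kstar[T]}+\seconstant{\delta}\|\nnweights\|_2,
\end{align}
where $\nnweights$ denotes the weight vector associated with $k=\kstar[T]$. Second, using $\lipschitzbdd\le C_{\lipschitzbdd}$ and $1\le C_{\lipschitzbdd}$ together with non-negativity of both summands, I would bound the right-hand side by $C_{\lipschitzbdd}\big(\rho_{\kstar[T]}+\seconstant{\delta}\|\nnweights\|_2\big)$. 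Third, by the defining property of $\kstar[T]$ as a minimizer of $k\mapsto\rho_k+\seconstant{\delta}\|\nnweights\|_2$ over $T$, this equals $C_{\lipschitzbdd}\min_{k\in T}\big(\rho_k+\seconstant{\delta}\|\nnweights\|_2\big)$, which is \cref{eqn:first-ub-min}. Fourth, for the second displayed inequality of the proposition, I would invoke the bound $\|\nnweights\|_2\le\sqrt{\max_{1\le n\le\nval}\Qtest(\ball{\Sval_n}{\rho_k})/k}$ proved inside \cref{thm:nn-fill-bdd} (equation \cref{eqn:bdd-on-2norm}), applied term-by-term inside the minimum for each $k\in T$.

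The only subtle step is the first paragraph: articulating precisely why a single application of \cref{thm:nn-fill-bdd} at a data-chosen $k$ is legitimate and does not cost a factor of $|T|$ in the failure probability. This rests entirely on the fact that \cref{thm:nn-fill-bdd} is conditional on the sites and that the selection rule for $\kstar[T]$ uses only site information; once that is made explicit, steps two through four are elementary. The one thing to be careful about when writing it up is notational — $\nnweights$ (and hence $\|\nnweights\|_2$) must be consistently read as the weights for whatever value of $k$ is currently under discussion, matching the convention already used in \cref{thm:nn-fill-bdd} and in the statement being proved.
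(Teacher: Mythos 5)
Your proposal is correct and follows essentially the same route as the paper: a single application of \cref{thm:nn-fill-bdd} at $k=\kstar[T]$, justified by the observation that the selection rule depends only on the sites and not on the noise (so no union bound over $T$ is needed), followed by absorbing the Lipschitz constant into $C_{\lipschitzbdd}$ and invoking minimality of $\kstar[T]$. The only cosmetic difference is that the paper handles the constant via a case split on $\lipschitzbdd\leq 1$ versus $\lipschitzbdd>1$, whereas you bound both terms uniformly by $C_{\lipschitzbdd}=\max(1,\lipschitzbdd)$; these are equivalent.
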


\begin{proof}[Proof of \cref{prop:bound-minimization}]
    Because the minimization problem does not depend on a quantity that is treated as random in \cref{thm:nn-fill-bdd}, we may directly apply \cref{thm:nn-fill-bdd} to $\kstar[T]$ to conclude with probability at least $1-\delta$
    \begin{align}
        |\conditionalrisk(h) - \kstarnearestneighborrisk[T](h)|\leq \lipschitzbdd \rho_{\misspecifiedkminimizer[T]} + \lossbdd \|\kstarweights[T]\|_2 \sqrt{\frac{1}{2}\log\frac{2}{\delta}}\label{eqn:recall-bdd}
    \end{align}
    We split into cases.
    \paragraph{Case 1: $\lipschitzbdd \leq 1$}
    Because $\lipschitzbdd \leq 1$ and by the minimality of $\misspecifiedkminimizer[T]$,
    \begin{align}
        \lipschitzbdd \rho_{\misspecifiedkminimizer[T]} + \lossbdd\|\kstarweights[T]\|_2 \sqrt{\frac{1}{2}\log\frac{2}{\delta}} & \leq  \rho_{\misspecifiedkminimizer[T]} + \lossbdd\|\kstarweights[T]\|_2 \sqrt{\frac{1}{2}\log\frac{2}{\delta}} \\
        & = \min_{k \in T} \rho_{k} + \lossbdd\|\nnweights\|_2 \sqrt{\frac{1}{2}\log\frac{2}{\delta}}.\label{eqn:caseLlt1}
    \end{align}
    \paragraph{Case 2: $\lipschitzbdd > 1$}
    Because $\lipschitzbdd  > 1$ and the second term is non-negative,
    \begin{align}
        \lipschitzbdd\rho_{\misspecifiedkminimizer[T]} + \lossbdd\|\kstarweights[T]\|_2 \sqrt{\frac{1}{2}\log\frac{2}{\delta}} & \leq \lipschitzbdd \left(\rho_{\misspecifiedkminimizer[T]} + \lossbdd\|\kstarweights[T]\|_2 \sqrt{\frac{1}{2}\log\frac{2}{\delta}}\right) \\
         & =\lipschitzbdd  \min_{k \in T} \rho_{k} + \lossbdd\|\nnweights\|_2 \sqrt{\frac{1}{2}\log\frac{2}{\delta}}. \label{eqn:caseLgt1}
    \end{align}
    Combining \cref{eqn:caseLlt1,eqn:caseLgt1,eqn:recall-bdd} gives the result.
\end{proof}

\nnminimizationpowtwo*

\begin{proof}
    Let $\kstar[T]$ denote a minimizer of the bound on the right hand side, which exists since the minimization is over a finite set. If $\kstar[T] = 1$, we are done since $1 \in T_2$. Otherwise, there exists a $\tilde{k} \in T_2$ such that,
    \begin{align}
       \kstar[T]/ 2 \leq \tilde{k} \leq \kstar[T].
    \end{align}
    By monotonicity of the $k$\textsuperscript{th} order fill distance in $k$, 
    \begin{align}
        \rho_{\tilde{k}} \leq \rho_{\kstar[T]}.
    \end{align}
    This also implies,
    \begin{align}
        \Qtest(\ball{\Sval_n}{\rho_{\tilde{k}}}) \leq \Qtest(\ball{\Sval_n}{\rho_{\kstar[T]}}),  
    \end{align}
    since the measure of a subset is never larger than the measure of a set that contains it. Therefore,
    \begin{align}
        \min_{k \in T_2} \Bigg(\rho_k + \seconstant{\delta}&\sqrt{\max_{1\leq n \leq \nval} \frac{\Qtest(\ball{\Sval_n}{\rho_k})}{k}} \Bigg)  \leq  \rho_{\tilde{k}} + \seconstant{\delta}\sqrt{\max_{1\leq n \leq \nval}\frac{\Qtest(\ball{\Sval_n}{\rho_{\tilde{k}}}))}{\tilde{k}}} \\
        & \phantom{\qquad\qquad\qquad\qquad} \leq \rho_{\kstar[T]} + \seconstant{\delta}\sqrt{2}\sqrt{\max_{1\leq n \leq \nval}\frac{\Qtest(\ball{\Sval_n}{\rho_{\kstar[T]}})}{\kstar[T]}} \\
        & \phantom{\qquad\qquad\qquad\qquad} \leq \sqrt{2}(\rho_{\kstar[T]} + \seconstant{\delta}\sqrt{\max_{1\leq n \leq \nval}\frac{\Qtest(\ball{\Sval_n}{\rho_{\kstar[T]}})}{\kstar[T]}}) \\
        & \phantom{\qquad\qquad\qquad\qquad} = \sqrt{2}\left(\min_{k \in T} \rho_k + \seconstant{\delta}\sqrt{\max_{1\leq n \leq \nval}\frac{\Qtest(\ball{\Sval_n}{\rho_k})}{k}}\right).
    \end{align}
    The result now follows from \cref{prop:bound-minimization}.
\end{proof}

\subsection{Consistency of our Nearest Neighbor Method under Infill Asymptotics}\label{app:consistency-nn}
In this section we prove \cref{cor:prediction-consistent}, which establishes the consistency of SNN under infill asymptotics. The idea of the proof is to upper bound the $k$\textsuperscript{th} order fill distance of the validation points in the test points to the fill distance of the validation points in $[0,1]^d$. This allows together with \cref{thm:nn-fill-bdd} and \cref{prop:bound-minimization-powers-2} allows us to derive an upper bound on the error of our method in terms of the fill distance, from which the result follows. 

\subsubsection{Preliminary Lemma: Relating Fill Distances}
\begin{proposition}\label{prop:k-fill-bdd}
    Let $A$ be an $\epsilon$-net for $[0,1]^d$. Then for $k \leq \left(\frac{1}{2\epsilon}\right)^d$
    \begin{align}
        \zeta^k(A, [0,1]^d) \leq 2k^{1/d}\epsilon + \epsilon.
    \end{align}
\end{proposition}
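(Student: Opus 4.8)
The plan is to argue directly from the description of $\zeta^k(A,[0,1]^d)$ as the largest distance from a point of $[0,1]^d$ to its $k$-th nearest neighbour in $A$: it suffices to exhibit, for every $S\in[0,1]^d$, at least $k$ points of $A$ lying within Euclidean distance $R:=2k^{1/d}\epsilon+\epsilon$ of $S$; taking the supremum over $S$ then yields the bound.

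Fix $S\in[0,1]^d$ and set $\rho:=2k^{1/d}\epsilon$, so that $\rho\le 1$ by the hypothesis $k\le(1/(2\epsilon))^d$. Let $A_S=\{a\in A:\ball{a}{\epsilon}\cap\ball{S}{\rho}\cap[0,1]^d\ne\emptyset\}$. Since $A$ is an $\epsilon$-net of $[0,1]^d$, the balls $\{\ball{a}{\epsilon}\}_{a\in A_S}$ cover $\ball{S}{\rho}\cap[0,1]^d$, so subadditivity of volume gives $\text{vol}(\ball{S}{\rho}\cap[0,1]^d)\le |A_S|\,\spherevolume\,\epsilon^d$. On the other hand, every $a\in A_S$ satisfies $\spatialmetric(a,S)\le\epsilon+\rho=R$ by the triangle inequality (pick a point of $\ball{a}{\epsilon}\cap\ball{S}{\rho}$). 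Hence the whole proposition reduces to the geometric estimate
\begin{align}
    \text{vol}(\ball{S}{\rho}\cap[0,1]^d)\ge 2^{-d}\spherevolume\rho^d,
\end{align}
because this forces $|A_S|\ge 2^{-d}(\rho/\epsilon)^d=2^{-d}(2k^{1/d})^d=k$. (Using a cube of side $\rho$ around $S$ in place of a ball would not suffice here: its diameter is $\rho\sqrt d$, so net points covering it need not be within $O(\rho)$ of $S$; the ball is essential and that is exactly why the fractional volume estimate above is needed.)

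The estimate — a radius-$\rho$ ball with $\rho\le 1$ centred \emph{anywhere} in the unit cube keeps at least a $2^{-d}$ fraction of its mass inside the cube — is the only real work, and I would prove it by reducing to the worst case, a corner. Freeze every coordinate of $S$ but the $i$-th and let $F(t):=\text{vol}(\ball{S'}{\rho}\cap[0,1]^d)$ where $S'$ agrees with $S$ except that its $i$-th coordinate is $t\in[0,1]$; the goal is $F(t)\ge F(0)$. Writing $F(t)=\int\mathbf 1[S'+y\in[0,1]^d]\,dy$ and integrating the last $d-1$ coordinates $y'$ of $y$ out: for fixed $y'$ with $|y'|\le\rho$ the admissible values of $y_i$ form the interval $[-a,a]\cap[-t,1-t]$ with $a=a(y')=\sqrt{\rho^2-|y'|^2}\in[0,1]$, which has length $h_a(t):=\min(a,t)+\min(a,1-t)$. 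A short case check (the extremes $a\le\min(t,1-t)$ and $a\ge\max(t,1-t)$, plus the two intermediate cases) shows $h_a(t)\ge\min(a,1)=h_a(0)=h_a(1)$ for all $t\in[0,1]$; here $a\le 1$, i.e.\ $\rho\le 1$, is exactly what makes the $a\ge\max(t,1-t)$ case work. Integrating against the nonnegative factor contributed by the remaining coordinates gives $F(t)\ge F(0)$. Applying this coordinate by coordinate moves $S$ to the origin without increasing $\text{vol}(\ball{\cdot}{\rho}\cap[0,1]^d)$, and since $\rho\le 1$ we have $\ball{0}{\rho}\cap[0,1]^d=\ball{0}{\rho}\cap[0,\infty)^d$, one orthant of the ball, of volume exactly $2^{-d}\spherevolume\rho^d$. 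That completes everything; the main obstacle is precisely this corner-reduction lemma, the rest being the triangle inequality and volume counting.
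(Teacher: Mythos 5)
Your proof is correct and follows essentially the same route as the paper's: for each $S\in[0,1]^d$, count the net points whose $\epsilon$-balls meet $\ball{S}{\rho}\cap[0,1]^d$ with $\rho=2k^{1/d}\epsilon$, lower-bound their number by the volume ratio $\text{vol}(\ball{S}{\rho}\cap[0,1]^d)/(\spherevolume\epsilon^d)$, and observe that each such point lies within $\rho+\epsilon$ of $S$ by the triangle inequality. The one step you handle differently is the key volume estimate $\text{vol}(\ball{S}{\rho}\cap[0,1]^d)\ge 2^{-d}\spherevolume\rho^d$. The paper justifies it by asserting that at least one orthant of the ball is contained in the cube; strictly speaking that assertion can fail when $\rho>1/2$ and $S$ is near the centre of a face (e.g.\ $d=1$, $S=1/2$, $\rho=1$, where neither half of $[-1/2,3/2]$ lies in $[0,1]$), even though the volume inequality itself remains true. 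Your coordinate-by-coordinate reduction to the corner, via $h_a(t)=\min(a,t)+\min(a,1-t)\ge a$ for $a\le 1$, proves the inequality rigorously for the whole range $\rho\le 1$ that the proposition requires, so your treatment of this sub-lemma is actually the more careful one; the rest of the argument is identical in substance to the paper's.
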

\begin{proof}
    Let $S \in [0,1]^d$ and $\tau \in (0,1]$. Then,
    \begin{align}
        |A \cap \ball{S}{\tau + \epsilon}| \geq \coveringnumber{\epsilon}{\ball{S}{\tau}}
    \end{align}
    because for any $S' \in \ball{S}{\tau} \cap [0,1]^d$, there is a point $a \in A$ such that $d(S', a) \leq \epsilon$ and for any such point, this $a$ must also be in $\ball{s}{\tau + \epsilon}$ by the triangle inequality. Let $C$ be an $\epsilon$-net of $\ball{s}{\tau} \cap [0,1]^d$. Then by the definition of a net and by subadditivity
    \begin{align}
        \text{vol}(\ball{S}{\tau} \cap [0,1]^d) \leq \text{vol}(\cup_{c \in C} \ball{c}{\epsilon}) \leq |C| \spherevolume \frac{1}{\epsilon^{d}}
    \end{align}
    Since $S \in [0,1]^d$ and $\tau \leq 1$, at least one orthant of $\ball{s}{\tau}$ is contained in $[0,1]^d$, so
    \begin{align}
        \text{vol}(\ball{S}{\tau} \cap [0,1]^d) \geq 2^{-d}\spherevolume\frac{1}{\tau^d}.
    \end{align}
    Combining the previous estimates,
    \begin{align}
        \coveringnumber{\epsilon}{\ball{S}{\tau}\cap [0,1]^d} \geq \left(\frac{\tau}{2\epsilon}\right)^{d}
    \end{align}
    Choose $\tau = 2\epsilon k^{1/d} \in (0,1)$. Then
    \begin{align}
        |A \cap \ball{S}{\tau + \epsilon}| \geq k.
    \end{align}
    As $S$ was arbitrary, this holds for all $S \in [0,1]^d$, and so for all $S \in [0,1]^d$, there are $k$ points in $A$ in $\ball{S}{2\epsilon k^{1/d} + \epsilon}$.
\end{proof}

\subsubsection{Bound on Loss Depending on Fill Distance}
We now present and prove an upper bound on the error of SNN that depends on the fill distance of the validation set in $[0,1]^d$. We will derive consistency of the estimator under infill asymptotics as a corollary of this bound. The bound will also be relevant in later discussion of model selection, where we are also interested in rates of convergence of estimators.

\begin{restatable}[Bound for Dense Validation Data and General Test Data]{corollary}{generaldensebdd}\label{cor:general-dense-bdd}
    Suppose that $\spatialdomain = [0,1]^d$ and \cref{assumption:DGP,assumption:BoundedLoss,assumption:Lipschitz}. Let $\kstar \in \arg \min_{k \in T_2} \rho_k + \seconstant{\delta}\|\nnweights\|_2$ with $\seconstant{\delta} = \lossbdd\sqrt{\frac{1}{2}\log\frac{2}{\delta}}$. Then there exists a constant $K_{d, \delta, \lossbdd, \lipschitzbdd}$ such that with probability at least $1-\delta$
    \begin{align}
        |\conditionalrisk_{\Qtest}(h) - \kstarnearestneighborrisk(h)| \leq K_{d, \delta, \lossbdd, \lipschitzbdd} \tilde{\rho}^{\frac{d}{d+2}},
    \end{align}
    where $\tilde{\rho} =\zeta(\Sval_{1:\nval}, [0,1]^d)$.
\end{restatable}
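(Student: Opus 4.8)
The plan is to feed the net estimate of \cref{prop:k-fill-bdd} into the conditional error bound of \cref{prop:bound-minimization-powers-2} and then optimize over the number of neighbors. \Cref{prop:bound-minimization-powers-2} gives that, with probability at least $1-\delta$,
\begin{align}
    |\conditionalrisk_{\Qtest}(h) - \kstarnearestneighborrisk(h)| \leq \sqrt{2}\,C_{\lipschitzbdd}\Bigl(\min_{k \in T}\bigl(\rho_k + \seconstant{\delta}\sqrt{\tfrac{1}{k}\max_{1 \leq n \leq \nval}\Qtest(\ball{\Sval_n}{\rho_k})}\bigr)\Bigr),
\end{align}
with $T = \{1,\dots,\nval\}$ and $C_{\lipschitzbdd} = \max(1,\lipschitzbdd)$. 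Since $\Qtest$ is a probability measure, $\Qtest(\ball{\Sval_n}{\rho_k}) \leq 1$, so it suffices to bound $\min_{k \in T}\bigl(\rho_k + \seconstant{\delta}k^{-1/2}\bigr)$ by a constant multiple of $\tilde{\rho}^{d/(d+2)}$, where $\tilde{\rho} = \zeta(\Sval_{1:\nval},[0,1]^d)$.

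By the definition of the fill distance, every point of $[0,1]^d$ lies within (closed) distance $\tilde{\rho}$ of some validation site, i.e.\ $\Sval_{1:\nval}$ is a $\tilde{\rho}$-net of $[0,1]^d$; and $\Stest_{1:\ntest} \subseteq [0,1]^d$. Hence, by monotonicity of the $k$\textsuperscript{th}-order fill distance in its second argument together with \cref{prop:k-fill-bdd}, for every integer $k \leq (2\tilde{\rho})^{-d}$,
\begin{align}
    \rho_k = \zeta^k(\Sval_{1:\nval}, \Stest_{1:\ntest}) \leq \zeta^k(\Sval_{1:\nval}, [0,1]^d) \leq 2k^{1/d}\tilde{\rho} + \tilde{\rho},
\end{align}
so that $\rho_k + \seconstant{\delta}k^{-1/2} \leq 2\tilde{\rho}k^{1/d} + \tilde{\rho} + \seconstant{\delta}k^{-1/2}$. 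The right side balances the increasing $\tilde{\rho}k^{1/d}$ term against the decreasing $\seconstant{\delta}k^{-1/2}$ term when $k$ is of the order $(\seconstant{\delta}/\tilde{\rho})^{2d/(d+2)}$, at which point both terms are of the order $\tilde{\rho}^{d/(d+2)}$; the leftover additive $\tilde{\rho}$ is of lower order because $\frac{d}{d+2} < 1$ and $\tilde{\rho} \leq \sqrt{d}$. Substituting (a rounded version of) this $k$ into the minimum then yields the claimed bound, with all $d$-, $\delta$-, $\lossbdd$-, and $\lipschitzbdd$-dependent factors absorbed into a single constant $K_{d,\delta,\lossbdd,\lipschitzbdd}$.

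The main thing to be careful about is that the minimizing $k$ must be an admissible index: an integer in $\{1,\dots,\nval\}$ that also satisfies the hypothesis $k \leq (2\tilde{\rho})^{-d}$ of \cref{prop:k-fill-bdd}. I would take $k = \lceil (\seconstant{\delta}/\tilde{\rho})^{2d/(d+2)}\rceil$ (up to a fixed constant factor) and note that rounding up costs only a constant factor in the bound. For admissibility, observe that any $\tilde{\rho}$-net of $[0,1]^d$ has at least $1/(\spherevolume\tilde{\rho}^d)$ points by a volume comparison, so $\nval \geq 1/(\spherevolume\tilde{\rho}^d)$; since $\frac{2d}{d+2} < d$, both $\nval$ and $(2\tilde{\rho})^{-d}$ exceed this choice of $k$ once $\tilde{\rho}$ is below a threshold $\tilde{\rho}_0$ depending only on $d,\delta,\lossbdd$. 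For $\tilde{\rho} > \tilde{\rho}_0$, the quantity $\tilde{\rho}^{d/(d+2)}$ is bounded below by a positive constant, so the trivial bound $|\conditionalrisk_{\Qtest}(h) - \kstarnearestneighborrisk(h)| \leq \lossbdd$ — valid because both risks lie in $[0,\lossbdd]$ by \cref{assumption:BoundedLoss} — already gives the statement after enlarging $K_{d,\delta,\lossbdd,\lipschitzbdd}$. Combining the two regimes completes the proof.
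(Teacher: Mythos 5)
Your proposal is correct and takes essentially the same route as the paper's proof: both apply \cref{prop:bound-minimization-powers-2} with $\Qtest(\cdot)\leq 1$, bound $\rho_k$ via \cref{prop:k-fill-bdd}, choose $k$ of order $\tilde{\rho}^{-2d/(d+2)}$ to balance the bias and variance terms, and dispose of large $\tilde{\rho}$ (and admissibility of $k$) by a trivial-bound/threshold argument. The only differences are bookkeeping — the paper caps $k$ at $\nval$ by taking a minimum and splits at $\tilde{\rho}\geq 1$, whereas you lower bound $\nval$ by a volume comparison and split at a threshold $\tilde{\rho}_0$ — and both yield the same constant structure.
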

\begin{proof}
    For all, $\tilde{\rho}\geq 1$, the stated bound holds with $K = \lipschitzbdd\sqrt{d} + \seconstant{\delta}$. Therefore, moving forward, we assume $\tilde{\rho} < 1$.

    Choose $k = \min (\lceil(\gamma \tilde{\rho})^{-\frac{2d}{d+2}}\rceil, \nval)$ for some $\gamma \in (0, 1/2) $ to be specified later. Because $\frac{2d}{d+2} < d$ and $\gamma \leq \frac{1}{2}$, this $k$ satisfies the conditions of \cref{prop:k-fill-bdd} so,
    \begin{align}
        \rho_k \leq \zeta^k(A, [0,1]^d) & \leq 2 \min (\lceil(\gamma \tilde{\rho})^{-\frac{2d}{d+2}}\rceil^{1/d}, (\nval)^{1/d})\tilde{\rho} + \tilde{\rho} \\
        &\leq 2 (1+\gamma \tilde{\rho})^{-\frac{2}{d+2}})\tilde{\rho} + \tilde{\rho} \\
        & \leq 4\gamma^{-2}\tilde{\rho}^{\frac{d}{d+2}}.
    \end{align}
    We now apply \cref{thm:nn-fill-bdd} and \cref{prop:bound-minimization-powers-2} together with this bound to conclude with probability $1-\delta$
    \begin{align}
    |\conditionalrisk_{\Qtest}(h) -\kstarnearestneighborrisk(h)| &\leq \sqrt{2}C_L(4\gamma^{-2}\tilde{\rho}^{\frac{d}{d+2}} + \frac{1}{\sqrt{k}}\seconstant{\delta})\\
    & \leq  \sqrt{2}C_L\left(4\gamma^{-2}\tilde{\rho}^{\frac{d}{d+2}} + \min\left(\gamma^{\frac{d}{d+2}}\tilde{\rho}^{\frac{d}{d+2}},\frac{1}{\sqrt{\nval}}\right)\seconstant{\delta}\right). \label{eqn:general-data-explicit-bdd}
    \end{align}
    Choosing $\gamma=\frac{1}{4}$ (for example) completes the proof.
\end{proof}

\subsubsection{Consistency of Spatial Nearest Neighbors}\label{app:consistency-of-snn}
We now restate and prove that the spatial nearest neighbor procedure we describe is consistent under infill asymptotics. This follows as a corollary of \cref{cor:general-dense-bdd} since the infill assumption means that for any fixed $\delta$, the upper bound in \cref{cor:general-dense-bdd} tends to zero with the fill distance.

\predictionconsistent*

\begin{proof}
Take $\delta = \min(1, r)$. Under infill asymptotics, this tends to zero because $r \leq C \tilde{\rho}$ and $\tilde{\rho}$ tends to $0$ by assumption. \Cref{cor:general-dense-bdd} (or more precisely \cref{eqn:general-data-explicit-bdd} which makes the depend of the bound on $\delta$ explicit), with probability at least $1-\delta$
\begin{align}
   |\conditionalrisk_{\Qtest}(h) -\kstarnearestneighborrisk(h)| &\leq K_{d, \lossbdd, \lipschitzbdd} \tilde{\rho}^{d/(d+2)} \sqrt{\log \frac{1}{r}} \\
   & \leq K_{d, \lossbdd, \lipschitzbdd} \tilde{\rho}^{d/(d+2)} \sqrt{\log \frac{1}{c\tilde{\rho}}}.
\end{align}
The left hand side of this bound tends to zero with $\tilde{\rho}$, and so $\kstarnearestneighborrisk(h)$ converges in probability to $\conditionalrisk_{\Qtest}(h)$.
\end{proof}
\subsubsection{Computation of An Approximate Fill Distance}\label{app:computation-approx-fill}
The fill distance can be computed exactly by computing the vertices of a $d$-dimensional Voronoi diagram with the validation points, then computing the maximum distance from each of these vertices to a point in the validation set. For problems in $1$ or $2$ dimensions, this is feasible even with a large number of validation points, because algorithms for computation of the Voronoi diagram can be done in nearly linear, $O(\nval \log \nval)$ time in $1$ and $2$ dimensions \citep[Chapter 4]{Okabe2000Spatial}. In higher dimensions, the worst case complexity of algorithms for computing Voronoi diagrams can be $O((\nval)^{\lfloor d/2\rfloor} + \nval \log \nval)$, and so for large number of points in more the computational cost can become quite high.

To avoid this computational cost, we instead use a simple space partitioning algorithm when $\spatialdomain= [0,1]^d$ that is guaranteed to give an approximation to the fill distance $r$ satisfying $ \frac{\tilde{\rho}}{2\sqrt{d}} \leq r \leq 2 \tilde{\rho}$. The idea is to split the domain into $2^d$ quadrants and check if each quadrant contains a validation point. If it does, recurse, otherwise stop and keep track of the side length of each quadrant, call it $r$. When the algorithm terminates it must be the case that there exists a partitioning of $[0,1]^d$ into cubes of side length $2r$, with each cube containing at least one validation point. Because this is a partition, each point in the spatial domain must also be within a cube, and so the fill distance is upper bounded by the maximum distance between two points in a cube of side length $2 r$, i.e.~$\tilde{\rho} \leq 2r\sqrt{d}$. Also, when the algorithm stops, a cube of side length $r$ has been found that does not contain any points. Therefore the fill distance is lower bounded by the distance from the center of this cube to the closest validation point, which must be at least $r/2$. That is, $\tilde{\rho} \geq r/2$. Rearranging, we see that 
\begin{align}
   \frac{\tilde{\rho}}{2\sqrt{d}} \leq r \leq 2 \tilde{\rho}
\end{align}
as claimed. Finally, we address the computational complexity of this approach. The number of times we recurse is $\log_2 r$, which is $O(\log 1/ \tilde{\rho})$, which is in turn $O(\log \nval)$, because the fill distance cannot decrease faster than the inverse of the covering number of $[0, 1]^d$, which certainly does not decrease faster than $1/\nval$.

It remains to consider the complexity of deciding which orthant all of the validation points lie in, and partioning the points by orthant. This can be done by looping over each of the dimensions, partitioning the points in the cube based on whether or not that coordinate is on the left or right hand side of the current cube, which is $O(\nval d)$. Therefore, the total computational complexity of this algorithm is not more than $O(\nval d\log \nval)$, which is nearly linear in $\nval$.

\subsection{Model Selection: Rates of Convergence of Spatial Nearest Neighbors}\label{app:model-selection-theory}
We now present an extended version of earlier discussion in \cref{app:model-selection}, as well as results on rates of convergence of spatial nearest neighbors that provides some support to claims regarding model selection. As a special case, we consider grid prediction (for example for assessing the global performance of a map constructed using a predictive method). We then discuss rates of convergence in the more general setting earlier addressed in \cref{cor:general-dense-bdd}.

\subsubsection{A Heuristic Discussion of Model Selection with Increasing Amounts of Data}
Suppose we have a fixed test task, and two data-driven algorithms for making predictions. We also suppose we have allocated a fixed percentage of data for training, and the remainder for validation. Can we use spatial nearest neighbors to select between the two method? As the amount of training data increases we would hope that both data-driven algorithms produce better predictive methods. We therefore do not expect consistency to be sufficient to select between the two sequences of predictive methods: if both are converging to the optimal estimator at (possibly different) rates as the amount of training data increases, we want our error in estimating the risk of the two methods to converge to zero faster at a rate faster in the amount of validation data than the rate that the slower converging method converges to the optimal predictor. This would suggest we should be able to reliably identify the better sequence of predictive methods as the amount of data increases. 

We will assume we are in the additive, homoskedastic error setting so that $Y = f(S, \chi(S)) + \epsilon$. We will assume that the noise is bounded, and that squared loss is used. Finally, we will assume that training data is also generated following this process. We will also assume that $f$ is Lipschitz continuous. In this setting, minimax pointwise regression rates are $\theta((\ntrain)^{-\frac{1}{d+2}})$ (cf.~\citet[Theorem 2.3, Corollary 2.2]{Tsybakov2008Introduction} in one-dimension under a fixed grid design and \citet[Examples 3.1, 3.2]{tibshirani2023minimax} for a multi-dimensional version with both fixed grid and random design. The latter assumes Gaussian noise instead of bounded noise). Ideally, we would like a method for performing model selection to be able to distinguish between a sequence of predictive methods converging at slower than the minimax optimal rate and a sequence of predictive methods converging at the minimax rate and to be able to reliably select between two sequences of predictive methods both of which are converging at the minimax rate, but with different constants. 

We will now present some finite sample and asymptotic bounds on the convergence of spatial nearest neighbors for a fixed hypothesis then return to the question of model selection in light of these results.

\subsubsection{Rates of Convergence for Grid Prediction}
We first consider the grid prediction task specifically, as this is a common problem in spatial analyses. For example, one might want to reconstruct air temperature across the continental United States on a dense grid (map) based on remotely sensed covariates observed on this grid and sparsely observed weather station data. We consider a test task grid prediction if the data falls on a regular $d$-dimensional grid.
\begin{assumption}[Grid prediction]\label{assumption:MapPrediction}
    We say that a task is \emph{grid prediction} if $\spatialdomain = [0,1]^d$ and $\Qtest = \frac{1}{\grid^d}\sum_{S \in \{i/\grid : 1 \leq i \leq \grid\}^d} \delta_S$ for some $\grid \in \NN$. 
\end{assumption}

As long as the resolution of the map is high, both $1$-nearest neighbor and spatial nearest neighbors provide reliable estimates of the error. 

\begin{restatable}[Bound on Estimation Error for Grid Prediction]{corollary}{mapprediction}\label{cor:map-fill-bdd}
    With the same assumptions as \cref{thm:nn-fill-bdd} and additionally \cref{assumption:MapPrediction}, with probability at least $1-\delta$
    \begin{align}
        \!\!\!|\conditionalrisk_{\Qtest}(h) \!-\!\onennestimator(h)| \!\leq\!   L\rho \!+\! \seconstant{\delta}\sqrt{\max(\tfrac{2^d}{\ntest}, (8\rho)^{d})}.
    \end{align}
    Also, with probability at least $1-\delta$
    \begin{align}
        \!\!|\conditionalrisk_{\Qtest}(h)\!-\! \!\kstarnearestneighborrisk(h)|\leq  \!C_L\!\!\left(\!\!\rho \!+\seconstant{\delta}\sqrt{\max(\tfrac{2^d}{\ntest}, (8\rho)^{d})}\right). \nonumber
    \end{align}
    with $\rho=\rho_1, \seconstant{\delta} = \lossbdd\sqrt{\frac{1}{2}\log\frac{2}{\delta}}$ and $C_{\lipschitzbdd} = \max(1, \lipschitzbdd)$.
\end{restatable}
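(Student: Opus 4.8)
\textbf{Proof proposal for \Cref{cor:map-fill-bdd}.}
The plan is to get both inequalities by specializing the bounds already established in the excerpt to the grid test measure of \Cref{assumption:MapPrediction}; the only genuinely new ingredient is a short geometric estimate for the $\Qtest$-mass of a Euclidean ball of small radius when $\Qtest$ is uniform on a regular grid.

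First, the $1$NN inequality. I would instantiate \Cref{thm:nn-fill-bdd} with $k=1$, which is exactly $\onennestimator$ with $\rho_1=\rho$. The bias term is then precisely $\lipschitzbdd\rho$, and \Cref{eqn:second-ub} already bounds the variance term by $\seconstant{\delta}\sqrt{\max_{1\le n\le\nval}\Qtest(\ball{\Sval_n}{\rho})}$, so all that remains is to control $\max_n\Qtest(\ball{\Sval_n}{\rho})$ under \Cref{assumption:MapPrediction}. Since $\Qtest$ puts mass $\grid^{-d}=1/\ntest$ on each grid point, $\Qtest(\ball{\Sval_n}{\rho})$ equals $1/\ntest$ times the number of grid points inside the ball, which I would bound by a two-way case split. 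If $\rho<\tfrac{1}{2\grid}$ (less than half the grid spacing), the ball has diameter $2\rho<1/\grid$, which is below the minimum distance between distinct grid points, so the ball meets at most one of them and $\Qtest(\ball{\Sval_n}{\rho})\le 1/\ntest$. Otherwise $2\rho\grid\ge1$ and the ball lies inside an axis-aligned cube of side $2\rho$, which meets at most $(2\rho\grid+1)^d\le(4\rho\grid)^d$ grid points, so $\Qtest(\ball{\Sval_n}{\rho})\le(4\rho)^d$. Taking the maximum of the two cases gives $\max_n\Qtest(\ball{\Sval_n}{\rho})\le\max\!\big(2^d/\ntest,(8\rho)^d\big)$, and substituting into \Cref{eqn:second-ub} yields the stated $1$NN bound.

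For the SNN bound I would invoke \Cref{prop:bound-minimization-powers-2}, which already pays the price of optimizing $k$ over $T_2$ and gives, with probability $1-\delta$, that $|\conditionalrisk_{\Qtest}(h)-\kstarnearestneighborrisk(h)|$ is at most $\sqrt2\,C_{\lipschitzbdd}$ times $\min_{k\in\{1,\dots,\nval\}}\big(\rho_k+\seconstant{\delta}\sqrt{\max_n\Qtest(\ball{\Sval_n}{\rho_k})/k}\big)$. Upper bounding this minimum by its value at $k=1$ reduces the bracket to $\rho+\seconstant{\delta}\sqrt{\max_n\Qtest(\ball{\Sval_n}{\rho})}$, and the geometric estimate from the previous paragraph then converts it to the advertised $\rho+\seconstant{\delta}\sqrt{\max(2^d/\ntest,(8\rho)^d)}$; the outer constant is $C_{\lipschitzbdd}$ after folding the $\sqrt2$ from \Cref{prop:bound-minimization-powers-2} into the radii and the $1/\ntest$ term, which is harmless since $d\ge1$ (e.g.\ $\sqrt2\,(4\rho)^{d/2}\le(8\rho)^{d/2}$).

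The main obstacle is the grid-counting step, and more precisely landing on the advertised constants $8$ and $2^d$: the case split at $\rho\asymp 1/\grid$, the inequality $2\rho\grid+1\le 4\rho\grid$ which holds only once $2\rho\grid\ge1$, the handling of boundary ties (a closed ball whose sphere passes through two grid points when $\rho=\tfrac{1}{2\grid}$, which is absorbed into the second case since there $2\rho\grid\ge1$), and the bookkeeping that absorbs the prefactor from \Cref{prop:bound-minimization-powers-2}. Each step is elementary; getting all of them to line up with the stated constants is where the care is needed.
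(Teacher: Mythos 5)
Your first inequality is fine: instantiating \cref{thm:nn-fill-bdd} (specifically \cref{eqn:second-ub}) at $k=1$ and then bounding $\max_n \Qtest(\ball{\Sval_n}{\rho})$ by direct counting on the grid is essentially the paper's argument; your two-case count even gives the slightly sharper $\max(1/\ntest,(4\rho)^d)$, whereas the paper reaches $\max(2^d/\ntest,(8\rho)^d)$ via a packing/covering-number argument ($(1+4\grid\rho)^d$ and then a crude binomial bound). Either route lands inside the stated constants.

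The gap is in the SNN step. You route through \cref{prop:bound-minimization-powers-2}, which carries the prefactor $\sqrt{2}\,C_{\lipschitzbdd}$ on the \emph{entire} bracket, including the bias term $\rho_k$. Your absorption trick only works on the variance term (indeed $\sqrt{2}\sqrt{\max(1/\ntest,(4\rho)^d)}\leq\sqrt{\max(2^d/\ntest,(8\rho)^d)}$ for $d\geq 1$), but there is nothing to absorb the $\sqrt{2}$ multiplying $\rho$ into: the advertised bound has exactly $C_{\lipschitzbdd}\,\rho$, and $\sqrt{2}\,C_{\lipschitzbdd}\,\rho\not\leq C_{\lipschitzbdd}\,\rho$, so "folding the $\sqrt2$ into the radii" fails at that term and your route only delivers the weaker bound with $\sqrt{2}\,C_{\lipschitzbdd}$ in front of $\rho$. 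The paper avoids this entirely: since $k=1$ lies in the set $T_2$ actually minimized over (this is the convention used in the appendix proof of \cref{prop:bound-minimization-powers-2}), one applies \cref{prop:bound-minimization} with $T=T_2$ directly — the selected $\kstar$ satisfies the bound conditionally on the sites with constant $C_{\lipschitzbdd}$ and no $\sqrt{2}$, and the minimum over $T_2$ is at most its value at $k=1$, after which your grid-counting estimate finishes the proof. So the fix is simply to replace the powers-of-two comparison proposition with the plain minimization proposition; the rest of your argument stands.
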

See \cref{app:map-prediction-proofs} for a proof. The right-hand side of the bound is small as long as there is a validation point near each test point and the resolution of the map is high. If the available data for validation is generated i.i.d.~as in \cref{prop:iid-implies-infill}, then right hand side becomes, up to logarithmic factors in $\nval$, $\frac{1}{\sqrt{\ntest}} + (\nval)^{\min(-1/2, -1/d)}$ 

\subsubsection{Statement and Discussion of Result for IID validation data and Grid Prediction}\label{app:asymptotic-iid-map}
\begin{restatable}[Convergence of Grid Prediction with Independent and Identically Distributed Validation Data]{corollary}{mapiid}\label{cor:map-iid-bdd}
    Suppose that $\spatialdomain = [0,1]^d$, $\Sval_n \iidsim P$ for $1 \leq n \leq \nval$ with $\nval > 1$ and $P$ has Lebesgue density lower bounded by $c >0$. Additionally, take the assumptions of \cref{cor:map-fill-bdd}. Fix $\delta \in (0,1)$ and $k \in \{1, \kstar\}$. Then there exists a constant $K_{d, \delta, \lipschitzbdd, \lossbdd, c}$ that depends only on $d, \delta, \lipschitzbdd, c$ and $\lossbdd$ such that with probability at least $1-\delta$
    \begin{align}
        |\nearestneighborrisk(h) - \conditionalrisk_{\Qtest}(h)| \leq K_{d, \delta, \lipschitzbdd, \lossbdd, c}\left( (\tfrac{\log\nval}{\nval})^{\min(\frac{1}{2}, \frac{1}{d})}+\tfrac{1}{\sqrt{\ntest}}\right). \nonumber
    \end{align}
\end{restatable}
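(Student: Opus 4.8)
The plan is to feed the i.i.d.\ fill-distance estimate of \cref{prop:iid-implies-infill} into the fixed-design bound of \cref{cor:map-fill-bdd} and combine the two high-probability events by a union bound. Write $\tilde{\rho} \coloneqq \zeta(\Sval_{1:\nval}, [0,1]^d)$ and $\rho \coloneqq \rho_1 = \zeta^1(\Sval_{1:\nval}, \Stest_{1:\ntest})$; since the $1$st-order fill distance is exactly the fill distance and $\Stest_{1:\ntest} \subseteq [0,1]^d$, we have $\rho \leq \tilde{\rho}$. The first thing I would record is that for both $k = 1$ and $k = \kstar$, applying \cref{cor:map-fill-bdd} with failure probability $\delta/2$ gives, with probability at least $1 - \delta/2$,
\begin{align*}
    |\conditionalrisk_{\Qtest}(h) - \nearestneighborrisk(h)| \leq C_{\lipschitzbdd}\Big(\rho + \seconstant{\delta/2}\sqrt{\max(\tfrac{2^d}{\ntest}, (8\rho)^d)}\Big),
\end{align*}
with $C_{\lipschitzbdd} = \max(1, \lipschitzbdd)$ and $\seconstant{\delta/2} = \lossbdd\sqrt{\tfrac12\log\tfrac{4}{\delta}}$ --- the $k = \kstar$ display is already of this form, and for $k = 1$ one checks $\lipschitzbdd\rho + \seconstant{\delta/2}\sqrt{\cdot} \leq C_{\lipschitzbdd}(\rho + \seconstant{\delta/2}\sqrt{\cdot})$.

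Next I would invoke \cref{prop:iid-implies-infill} with failure probability $\delta/2$: there is an $n_0$ depending only on $d, c, \delta$ such that for all $\nval \geq n_0$, with probability at least $1 - \delta/2$,
\begin{align*}
    \tilde{\rho} \leq \Big(\tfrac{4^d}{c\nval\spherevolume}\log\tfrac{2\cdot 6^d \nval}{\spherevolume\delta}\Big)^{1/d} \leq C_1\big(\tfrac{\log\nval}{\nval}\big)^{1/d},
\end{align*}
where the last step uses $\log(2\cdot 6^d\nval/(\spherevolume\delta)) \leq C(d,\delta)\log\nval$ for $\nval \geq 2$ (valid since $\spherevolume \leq 2^d$, so the constant factor inside the log exceeds $1$) and $C_1 = C_1(d, c, \delta)$. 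Taking a union bound, on an event of probability at least $1-\delta$ both displays hold, and I would substitute $\rho \leq \tilde{\rho} \leq C_1(\log\nval/\nval)^{1/d}$ into the first, using $\sqrt{\max(a,b)} \leq \sqrt{a} + \sqrt{b}$ and $(8\rho)^d \leq (8C_1)^d \log\nval/\nval$, to obtain
\begin{align*}
    |\conditionalrisk_{\Qtest}(h) - \nearestneighborrisk(h)| \leq C_{\lipschitzbdd}\Big(\lipschitzbdd C_1\big(\tfrac{\log\nval}{\nval}\big)^{1/d} + \seconstant{\delta/2}\big(2^{d/2}\ntest^{-1/2} + (8C_1)^{d/2}\big(\tfrac{\log\nval}{\nval}\big)^{1/2}\big)\Big).
\end{align*}

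The last step is to collapse the two powers of $\log\nval/\nval$: since $\log\nval/\nval < 1$ for every $\nval \geq 2$, both $(\log\nval/\nval)^{1/d}$ and $(\log\nval/\nval)^{1/2}$ are at most $(\log\nval/\nval)^{\min(1/2, 1/d)}$, so their sum is at most twice that; the $2^{d/2}\ntest^{-1/2}$ term supplies the $1/\sqrt{\ntest}$ in the statement, and folding the finitely many $(d, \delta, \lipschitzbdd, \lossbdd, c)$-dependent constants into a single $K_{d, \delta, \lipschitzbdd, \lossbdd, c}$ gives the claim for $\nval \geq n_0$. For $2 \leq \nval < n_0$ (if this range is nonempty) I would argue directly: the weights $\nnweights$ are nonnegative and sum to one and $\ell$ is $\lossbdd$-bounded (\cref{assumption:BoundedLoss}), so $|\conditionalrisk_{\Qtest}(h) - \nearestneighborrisk(h)| \leq \lossbdd$ deterministically, while $\min_{2 \leq \nval < n_0}(\log\nval/\nval)^{\min(1/2,1/d)}$ is a strictly positive constant depending only on $d, c, \delta$; enlarging $K_{d, \delta, \lipschitzbdd, \lossbdd, c}$ to also exceed $\lossbdd$ over this minimum makes the bound hold trivially there. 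This is essentially the whole argument --- substitution plus constant-chasing --- and the only point that genuinely needs attention is this last bookkeeping: the bias-like term of \cref{cor:map-fill-bdd} contributes the rate $(\log\nval/\nval)^{1/d}$ and the variance-like term the rate $(\log\nval/\nval)^{1/2}$, and because the argument is below $1$ the slower of the two dominates, which is exactly where the $\min(1/2, 1/d)$ in the statement comes from.
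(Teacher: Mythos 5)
Your proposal is correct and follows essentially the same route as the paper's proof: invoke \cref{prop:iid-implies-infill} at level $\delta/2$ to bound the fill distance by $O((\log\nval/\nval)^{1/d})$, plug this into \cref{cor:map-fill-bdd} at level $\delta/2$, union-bound, and observe that since $\log\nval/\nval<1$ the slower of the two rates $(\log\nval/\nval)^{1/d}$ and $(\log\nval/\nval)^{1/2}$ dominates, giving the $\min(1/2,1/d)$ exponent. If anything you are more careful than the paper, which silently absorbs the $\nval\geq n_0$ requirement of \cref{prop:iid-implies-infill} into the constant rather than treating the small-$\nval$ range explicitly via the trivial $\lossbdd$ bound as you do.
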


\begin{proof}
    From \cref{prop:iid-implies-infill}, there exists a constant $\gamma$ depending on $d, \lipschitzbdd, \delta, c, \lossbdd$ such that with probability at least $1-\delta/2$
    \begin{align}
        \rho \leq \gamma \left(\frac{\log\nval}{\nval}\right)^{1/d}. \label{eqn:fill-bdd-map}
    \end{align}
    An upper bound on the bound in \cref{cor:map-fill-bdd} shows that for $k \in \{1, \kstar\}$ with probability $1-\delta/2$,
    \begin{align}
        |\nearestneighborrisk(h) - \conditionalrisk_{\Qtest}(h)| \leq \gamma C_L \max(\seconstant{\delta/2} 8^{d/2}, 1)\left(\rho + \frac{1}{\sqrt{\ntest}} + \rho^{d/2} \right). \label{eqn:risk-bdd-const}
    \end{align}
    Combining \cref{eqn:fill-bdd-map} and \cref{eqn:risk-bdd-const} via a union bound and using that $a+b \leq 2\max(a,b)$ completes the proof.
\end{proof}

This matches the bound proven in \citet[Proposition 4]{portier2023scalable} which assumed that the test data was independent and identically distributed instead of on a regular grid. This rate of convergence is reasonably fast, particularly in low-dimensions. In particular ignoring the dependence of the bound on $\ntest$, which is reasonable as the number of test points in map prediction is often far larger than the number of available points for training and validation, it is faster than the minimax optimal rate of convergence for Lipschitz functions of $\theta((\ntrain)^{-1/(d+2)})$ discussed earlier. We therefore would expect both spatial nearest neighbors and $1$-nearest neighbors to perform well for model selection for grid prediction tasks if a fixed percentage of the data is used for training, and the remainder for validation. We emphasize we do not give a formal proof of this, just a heuristic argument suggesting why this should be the case. To give a formal proof would involve at least ensuring estimates of the risk estimation procedure hold uniformly over both sequences of predictive methods, and therefore involve additional assumptions.

\subsubsection{General Prediction Tasks}

For general $\Qtest$ we can combine \Cref{cor:general-dense-bdd} together with \cref{prop:iid-implies-infill} to get some sense of the rate of convergence of the spatial nearest neighbor method if the validation data is independent and identically distributed form a measure with density lower bounded on $[0,1]^d$ and the test task is fixed. 

\begin{restatable}[Convergence of Spatial Nearest Neighbor with Independent and Identically Distributed Validation Data]{corollary}{nonmapiid}\label{cor:dense-bdd-iid}
    Suppose that $\spatialdomain = [0,1]^d$, $\Sval_n \iidsim P$ for $1 \leq n \leq \nval$ with $\nval > 1$ and $P$ has Lebesgue density lower bounded by $c >0$. Additionally, take the assumptions of \cref{cor:general-dense-bdd}. Fix $\delta \in (0,1)$. Then there exists a constant $K_{d, \delta, \lipschitzbdd, \lossbdd, c}$ that depends only on $d, \delta, \lipschitzbdd, c$ and $\lossbdd$ such that with probability at least $1-\delta$
    \begin{align}
        |\nearestneighborrisk(h) - \kstarnearestneighborrisk(h)| \leq K_{d, \delta, \lipschitzbdd, \lossbdd, c}\left( (\tfrac{\log\nval}{\nval})\right)^{\frac{1}{d+2}}. 
    \end{align}
\end{restatable}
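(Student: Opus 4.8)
The plan is to feed the i.i.d.\ fill-distance rate of \cref{prop:iid-implies-infill} into the general fill-distance bound of \cref{cor:general-dense-bdd}, exactly as was done to obtain \cref{cor:map-iid-bdd}; essentially all of the substantive work already lives in those two results, so what remains is a union bound and bookkeeping of constants. (I read the left-hand side of the statement as $|\conditionalrisk_{\Qtest}(h) - \kstarnearestneighborrisk(h)|$, matching \cref{cor:general-dense-bdd} and the surrounding discussion of rates.)

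First I would invoke \cref{prop:iid-implies-infill} with failure probability $\delta/2$: since $\spatialdomain = [0,1]^d$ and $P$ has Lebesgue density at least $c$, the bound there is $O((\log\nval/\nval)^{1/d})$, so there is a constant $\gamma = \gamma(d,c,\delta)$ and an index $n_0$ such that for all $\nval \ge n_0$, with probability at least $1-\delta/2$,
\[
  \tilde\rho := \zeta(\Sval_{1:\nval}, [0,1]^d) \le \gamma\Big(\tfrac{\log\nval}{\nval}\Big)^{1/d}.
\]
Next I would invoke \cref{cor:general-dense-bdd} with failure probability $\delta/2$, obtaining a constant $K' = K_{d,\delta/2,\lossbdd,\lipschitzbdd}$ (still a function of $d,\delta,\lossbdd,\lipschitzbdd$ only) such that, with probability at least $1-\delta/2$,
\[
  |\conditionalrisk_{\Qtest}(h) - \kstarnearestneighborrisk(h)| \le K'\,\tilde\rho^{\,d/(d+2)}.
\]
A union bound makes both events hold simultaneously with probability at least $1-\delta$, and on that event I would substitute the first display into the second, using $\big((\log\nval/\nval)^{1/d}\big)^{d/(d+2)} = (\log\nval/\nval)^{1/(d+2)}$, to get
\[
  |\conditionalrisk_{\Qtest}(h) - \kstarnearestneighborrisk(h)| \le K'\gamma^{d/(d+2)}\Big(\tfrac{\log\nval}{\nval}\Big)^{1/(d+2)},
\]
and then absorb $K'\gamma^{d/(d+2)}$ into the claimed constant $K_{d,\delta,\lipschitzbdd,\lossbdd,c}$.

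The only loose ends are edge cases, and none is a genuine obstacle. For the finitely many sizes $2 \le \nval < n_0$ where \cref{prop:iid-implies-infill} does not yet force $\tilde\rho<1$, I would fall back on the trivial bound $|\conditionalrisk_{\Qtest}(h) - \kstarnearestneighborrisk(h)| \le \lossbdd$ (both quantities lie in $[0,\lossbdd]$ by \cref{assumption:BoundedLoss}) and note that $(\log\nval/\nval)^{1/(d+2)}$ is bounded below by a positive constant over this finite range, so inflating $K$ by that factor covers these cases; the regime $\tilde\rho \ge 1$ is already absorbed into the constant inside the proof of \cref{cor:general-dense-bdd}. If any step warrants care, it is merely checking that the constant extracted from \cref{prop:iid-implies-infill} depends on $P$ only through its density lower bound $c$ (so it is a function of $(d,c,\delta)$ alone) and that the two allocations of $\delta/2$ are tracked consistently; there is no hard analytic content, since the bias/variance trade-off and the $k$th-order fill-distance estimates were all done in \cref{thm:nn-fill-bdd}, \cref{prop:bound-minimization-powers-2}, and \cref{cor:general-dense-bdd}.
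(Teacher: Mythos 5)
Your proposal is correct and follows essentially the same route as the paper's proof: invoke \cref{prop:iid-implies-infill} with failure probability $\delta/2$ to bound $\tilde\rho$ by $\gamma(\log\nval/\nval)^{1/d}$, invoke \cref{cor:general-dense-bdd} with failure probability $\delta/2$, union bound, and substitute to get the $(\log\nval/\nval)^{1/(d+2)}$ rate. Your reading of the left-hand side as $|\conditionalrisk_{\Qtest}(h) - \kstarnearestneighborrisk(h)|$ is the intended one, and your handling of the small-$\nval$ and $\tilde\rho\geq 1$ edge cases is if anything more careful than the paper's.
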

\begin{proof}
        From \cref{prop:iid-implies-infill}, there exists a constant $\gamma$ depending on $d, \lipschitzbdd, \delta, c, \lossbdd$ such that with probability at least $1-\delta/2$
    \begin{align}
        \tilde{\rho} \leq \gamma \left(\frac{\log\nval}{\nval}\right)^{1/d}. \label{eqn:fill-bdd-map-2}
    \end{align}
    \Cref{cor:general-dense-bdd} implies that there exists a constant $K \geq 0$ such that with probability at least $1-\delta/2$
    \begin{align}
         |\nearestneighborrisk(h) - \kstarnearestneighborrisk(h)|  \leq K \tilde{\rho}^{\frac{d}{d+2}} \label{eqn:general-nn-bound}.
    \end{align}
    Combining \cref{eqn:fill-bdd-map-2} and \cref{eqn:general-nn-bound} completes the proof.
\end{proof}

In this case, again up to logarithmic factors, \cref{cor:dense-bdd-iid} means that SNN converges at the optimal rate of convergence for Lipschitz functions. This means we do not necessarily expect to be able to distinguish between two sequences of predictive methods that converge at the minimax rate, but we might expect to distinguish between two sequences of predictive methods if one converges to the optimal predictor at much slower than the minimax rate. We again emphasize that we do not formally show this, and to do so would would involve at least ensuring estimates of the risk estimation procedure hold uniformly over both sequences of predictive methods, and therefore involve additional assumptions.

In contrast, both the \holdout{} and $1$-nearest neighbor methods are not even always consistent for risk estimation in this setting, and therefore cannot be expected to reliably perform model selection.

\subsubsection{Grid Prediction Proofs}\label{app:map-prediction-proofs}
In order to prove the claimed upper bound on grid prediction \cref{cor:map-fill-bdd} we use \cref{thm:nn-fill-bdd} together with an upper bound on the number of test points that lie within a ball of radius equal to the fill distance around any validation point. In order to do this, we will use that all the points in a grid are well-separated. We therefore begin by recalling the definition of a packing of a set, as well as a relationship between covering number and packing number.
\begin{definition}[Packing, Packing Number]
    Let $A \subset \RR^d$ a compact set. A (finite) set $B \subset A$ is called an $\epsilon$-packing of $A$ if for all $b, b' \in B$, $\|b-b'\| > \epsilon$. The $\epsilon$-packing number of a set $A$, $\packingnumber{\epsilon}{A}$ is the largest cardinality of an $\epsilon$-packing of $A$. 
\end{definition}
\begin{proposition}[{Packing and Covering Numbers \citealt[Lemma 5.5]{Wainwright2019High}}]\label{prop:packing-covering}
    For any $A\subset \RR^d$ and $\epsilon >0$,
    \begin{align}
        \packingnumber{2\epsilon}{A} \leq \coveringnumber{\epsilon}{A} \leq \packingnumber{\epsilon}{A}.
    \end{align}

\end{proposition}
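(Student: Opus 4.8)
The plan is to prove the two inequalities separately via the standard maximal-packing argument (this is exactly \citealt[Lemma 5.5]{Wainwright2019High}). First I would note that since $A \subset \RR^d$ is compact it is totally bounded, so $\coveringnumber{\epsilon}{A}$ and $\packingnumber{\epsilon}{A}$ are finite and a maximum-cardinality $\epsilon$-packing exists for every $\epsilon > 0$.

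For the right inequality $\coveringnumber{\epsilon}{A} \leq \packingnumber{\epsilon}{A}$, I would take $B$ to be a maximum-cardinality $\epsilon$-packing of $A$ and argue that $B$ is automatically an $\epsilon$-net: if some $a \in A$ satisfied $\|a - b\| > \epsilon$ for all $b \in B$, then $B \cup \{a\}$ would be a strictly larger $\epsilon$-packing, contradicting maximality. Hence every point of $A$ lies in $\ball{b}{\epsilon}$ for some $b \in B$, so $\coveringnumber{\epsilon}{A} \leq |B| = \packingnumber{\epsilon}{A}$.

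For the left inequality $\packingnumber{2\epsilon}{A} \leq \coveringnumber{\epsilon}{A}$, I would fix a minimum-cardinality $\epsilon$-net $C$ of $A$ and an arbitrary $2\epsilon$-packing $B$, and build an injection $\phi \colon B \to C$ by sending each $b$ to some center $c \in C$ with $b \in \ball{c}{\epsilon}$ (such a $c$ exists since $C$ is an $\epsilon$-net and $b \in A$). Injectivity follows because if $\phi(b) = \phi(b') = c$ with $b \neq b'$ then the triangle inequality gives $\|b - b'\| \leq \|b - c\| + \|c - b'\| \leq 2\epsilon$, contradicting the strict separation $\|b - b'\| > 2\epsilon$ required of a $2\epsilon$-packing. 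Thus $|B| \leq |C| = \coveringnumber{\epsilon}{A}$, and taking $B$ of maximal cardinality gives the claim.

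The argument is entirely routine; the only point requiring a little care --- and hence the main (mild) obstacle --- is keeping the strict-versus-non-strict conventions straight, since packings are defined with a strict inequality $\|b-b'\| > \epsilon$ while the covering definition uses closed balls $\|a-c\| \leq \epsilon$. One checks that both the maximality argument and the injectivity argument survive these conventions exactly as written above, and that compactness of $A$ is what makes the extremal packings and covers finite and attained.
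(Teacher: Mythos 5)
Your proof is correct and is the standard maximal-packing/injection argument; the paper itself does not prove this proposition but simply cites \citet[Lemma 5.5]{Wainwright2019High}, whose proof is exactly the one you give. Your care with the strict inequality in the packing definition versus the closed balls in the covering definition is warranted and handled correctly in both directions.
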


We now restate and prove \cref{cor:map-fill-bdd}. 
\mapprediction*

\begin{proof}
    The second inequality follows from the first by \cref{prop:bound-minimization} and because $1 \in T_2$. We therefore focus on proving the case $k=1$.

    In light of \cref{thm:nn-fill-bdd}, it suffices to show that for these test location,
    \begin{align}
        \ntest \Qtest(\ball{\Sval_n}{\rho}) = |\ball{\Sval_n}{\rho} \cap \{a/\grid : 1 \leq a \leq \grid\}^d| \leq \max \left(2^d, 8^d\rho^d \ntest \right).
    \end{align}
    The set $\{a/\grid : 1 \leq a \leq \grid\}^d$ is a $\frac{1}{\grid+\epsilon}$-packing for any $\epsilon >0$, and so by the first inequality in \cref{prop:packing-covering} and \cref{prop:scale-translate-cov-num}
    \begin{align}
        |\ball{\Sval_n}{\rho} \cap \{a/\grid: 1 \leq a \leq \grid\}| \leq \packingnumber{\tfrac{1}{(\grid+\epsilon)}}{ \ball{\Sval_n}{\rho}} \leq \coveringnumber{\tfrac{1}{2\rho (\grid+\epsilon)}}{\ball{0}{1}}.
    \end{align}
    Applying \citet[Lemma 5.7, Equation 5.9]{Wainwright2019High} and taking the limit as $\epsilon \to 0^{+}$
    \begin{align}
        \lim_{\epsilon \to 0^{+}} \coveringnumber{\tfrac{1}{2(\grid+\epsilon)\rho}}{\ball{0}{1}} \leq  \lim_{\epsilon \to 0^{+}}(1+4(\grid+\epsilon)\rho)^d = (1+4\grid\rho)^d.
    \end{align}
    By the binomial theorem and bounding the sum by the number of terms times the largest term
    \begin{align}
        (1+4\grid \rho)^d \leq 2^d \max(1, 4^d\grid^d\rho^d) = \max(2^d, 8^d\ntest\rho^d).
    \end{align}
\end{proof}

\section{ADDITIONAL EXPERIMENTAL DETAILS}\label{app:experimental-details}
In this section, we provide additional details about the data, fitting procedures and validation procedures used in \cref{sec:experiments}. Code used in experiments is available anonymously at: \coderepository. Code is almost all implemented in Python3 \citep{vanRossum2009python} (with a small amount of r). Numpy is also heavily used for data generation and array manipulation \citep{harris2020array}.

In \cref{app:mc-estimate-test-risk} we give an overview of our method for estimating ground truth test risk in all experiments. In \cref{app:risk-estimation-synthetic} we describe details of the synthetic experiment described in \cref{sec:risk-estimation-synthetic}. In \cref{app:airTemp} we provide additional details on the air temperature data and tasks
in \cref{sec:air-temp-bootstrapped,sec:airtemp}. In \cref{app:UK-housing} we provide additional details on the UK flat price prediction experiment described in \cref{sec:uk-ppd}, while in \cref{app:windSpeed} we provide additional details for the wind speed prediction task presented in \cref{sec:wind_speed}.

\subsection{Monte Carlo Estimation of Ground Truth Test Risk}\label{app:mc-estimate-test-risk}

We would like to compute the exact test risk across the $\ntest$ test points:
\begin{align}
    \conditionalrisk_{\Qtest}(h) := (1/\ntest) \sum_{m=1}^{\ntest} \mathbb{E}[\ell(\Ytest_m, h^{\spatialfield}(\Stest_m)) \big| \Stest_m, \chi].
\end{align}

In all our examples where we report test risk, we have access to some sample $(\Ytest_m)_{m=1}^{\ntest}$ that we will use to construct an estimator. Our plan is to instead use the empirical test risk $\empconditionalrisk_{\Qtest}(h)$ as ground truth:
\begin{align}
    \hat{R}_{\Qtest}(h) := (1/\ntest) \sum_{m=1}^{\ntest} \ell(\Ytest_m, h^{\spatialfield}(\Stest_m)). \label{eqn:empirical-risk}
\end{align}

We would like to know how far off the empirical test risk is from the exact test risk. To that end, we observe that
\begin{align}
    \hat{R}_{\Qtest}(h) - R_{\Qtest}(h)
        & = (1/\ntest) \sum_{m=1}^{\ntest} Z_m, \quad \textrm{ where}                                                                \\
    Z_m & := \ell(\Ytest_m, h^{\spatialfield}(\Stest_m)) - \mathbb{E}[\ell(\Ytest_m, h^{\spatialfield}(\Stest_m)) | \Stest_m, \chi].
\end{align}

By construction, if we assume the expectations exist, each random variable $Z_m$ has mean zero. We make two additional assumptions. (1) We assume that the $Z_m$ are independent. (2) We assume that (almost surely) $\forall m, Z_m \in (a,b)$ for finite $a,b \in \mathbb{R}$. If the loss is bounded by $\Delta$, then such an $a, b$ exist satisfying $b-a \leq \Delta$; following \cref{assumption:BoundedLoss}, we use this bound moving forward.

Under these assumptions, we can apply Hoeffding's inequality to conclude that for any $\delta \in (0,1)$, with probability at least $1-\delta$,
\begin{align}
    |\hat{R}_{\Qtest}(h) - R_{\Qtest}(h)| \leq \Delta \sqrt{\frac{1}{2\ntest}\log \frac{2}{\delta}}. \label{eqn:test-risk-hoeffding}
\end{align}

If we are willing to make the two assumptions above, we next show that we can reach (high probability) conclusions about the (true) relative quality of different estimators on a particular task if they pass a check: namely, we check if, for a small $\delta$ (e.g.~$\delta=0.05$), the right-hand side of \cref{eqn:test-risk-hoeffding} is smaller than twice the difference between how much closer the ``good'' estimator is to the estimate of ground truth than the ``bad'' estimate. To see why this check is sufficient, first observe the following two applications of the triangle inequality:
\begin{align}
    |\text{good}-\text{true}|
     & \le |\text{good} - \widehat{\text{true}}| + |\text{true} - \widehat{\text{true}}| \\
    |\text{bad}-\widehat{\text{true}}|
     & \le |\text{bad}-\text{true}| + |\text{true}-\widehat{\text{true}}|.
\end{align}
Using these two inequalities, we can write
\begin{align}
    |\text{bad} - \text{true}| - |\text{good} - \text{true}|
     & \geq |\text{bad} - \widehat{\text{true}}| - |\text{good} - \widehat{\text{true}}| - 2|\text{true} - \widehat{\text{true}}|.
\end{align}
Therefore, to conclude
\begin{align}
    |\text{bad} - \text{true}| - |\text{good} - \text{true}| \geq 0,
\end{align}
it suffices for
\begin{align}
    \label{eq:estimators_vs_true_hat}
    |\text{bad} - \widehat{\text{true}}| - |\text{good} - \widehat{\text{true}}| \geq 2|\text{true} - \widehat{\text{true}}|.
\end{align}
Under the earlier assumptions, we see that \cref{eq:estimators_vs_true_hat} is implied (with high probability) by
\begin{align}
    |\text{bad} - \widehat{\text{true}}| - |\text{good} - \widehat{\text{true}}| \geq 2\Delta \sqrt{\frac{1}{2\ntest}\log \frac{2}{\delta}}.
\end{align}
When discussing each experiment, we discuss the plausibility of the assumptions needed to make this argument when justifying our estimated ground truth, as well as specific values for $\Delta$ and $\ntest$ and the resulting bound.

\subsection{Computational Considerations}\label{app:computational-considerations}

\subsubsection{Computational Complexity of our Method}\label{app:computational-complexity-1nn}
We focus on the case $\spatialdomain = [0,1]^d$ with nearest-neighbors implemented using a $k$-d tree. 

\begin{proposition}[Computational Complexity of Spatial Nearest Neighbors]\label{prop:comp-complexity-snn}
Suppose $\spatialdomain = [0,1]^d$ then the spatial nearest-neighbor estimator can be computed in $O(d\ntest\nval+ \ntest\nval\log \nval)$ algebraic operations.
\end{proposition}
\begin{proof}
We have already shown that computation of the approximate fill distance is  $O(d\nval\log \nval)$ to be in \cref{app:computation-approx-fill}. While we use $k$-d trees to implement nearest neighbors in practice, we consider the complexity of a direct implementation of nearest neighbors. In this case, computing all pairwise distances between test and validation points has complexity $O(d\ntest\nval)$.  Once these distances are computed, the distances for each validation point can be sorted, and the original indices can then be used to recover the nearest neighbors. The sorting operation has complexity $O(\ntest\nval\log \nval)$. Given the sorted list of distances and corresponding indices computing the weights needed for computing the bound for each $k$ points involves counting the number of times each training index appears in a sub-array consisting of the first $k$ columns. This can be done by scanning the array from left to right and keeping track of partial sums in $O(\nval\ntest)$. And given the weights, we can compute the bound for all of the $\log \nval$ many values of $k$ each in time $\nval$, giving a complexity of not more than $O(\nval \log \nval)$. Taking all of these steps together, we see the overall complexity is,
$
O(d\ntest\nval+ \ntest\nval\log \nval).
$
\end{proof}

\begin{figure}
    \centering
    \includegraphics[width=0.48\linewidth]{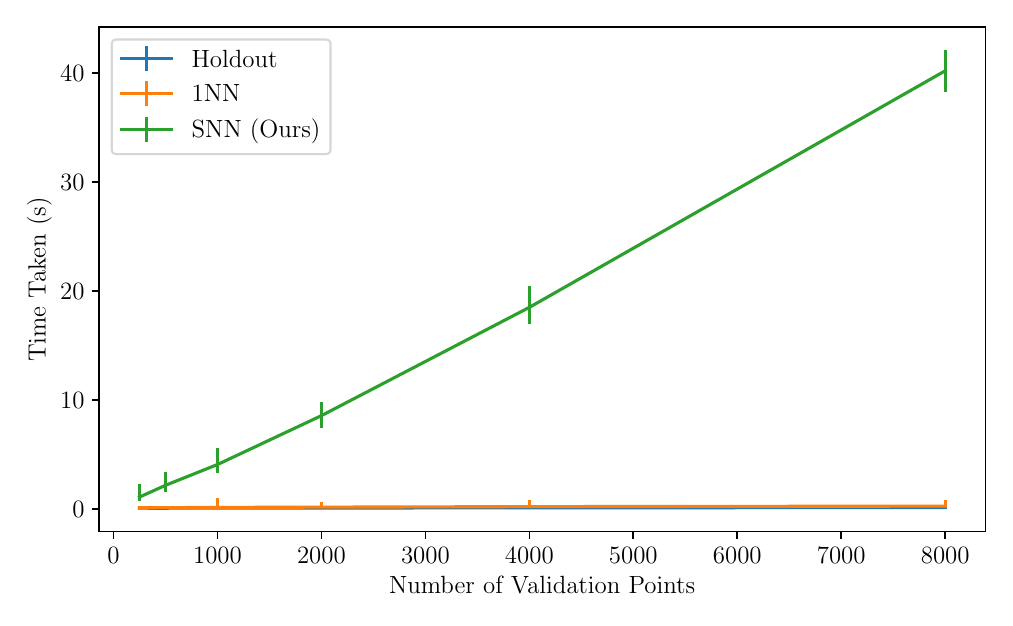}
    \includegraphics[width=0.48\linewidth]{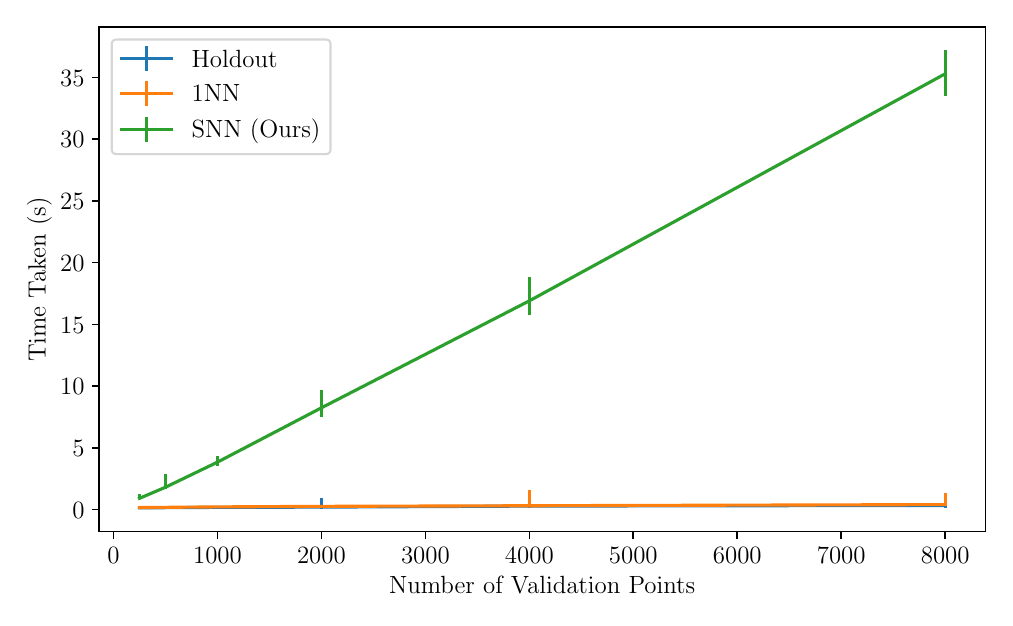}
    \caption{Computational time (in seconds) for computation of the three estimates of validation error in the synthetic experiment with data at a single test point (left) and on a regular grid (right). The mean is shown with the main line, while the maximum and minimum are indicated by the vertical bars. For both datasets, our method takes significantly longer to compute than the baselines But even with the largest number of validation points used, our method takes under a minute to compute.}
    \label{fig:validation-time-synthetic}
\end{figure}
%Construction of a balanced $k$-d tree on the validation data is also $O(d\nval\log \nval)$ \citep{lee1977worstcase}. Once constructed, finding the $k$  nearest neighbor for each test point typically requires $O(dk \log \nval)$ computations, but in the worst case can require $O(dk (\nval)^{1-1/d})$. Since $O(\log \nval)$ values of $k$ are tried in selecting $k$, this leads to a worst -case complexity not more than
% $
%     O(d\ntest (\nval)^{2-1/d} \log \nval).
% $
% but a typical complexity of 
% $
% O(d\ntest \nval (\log \nval)^2).
% $
% We expect further improvements could be made by storing the nearest neighbor set for intermediate values of $k$, but we do not pursue these.
\subsubsection{Computational Setup Used}\label{app:compuational-setup}
All experiments were run on a CPU cluster with 36 Intel(R) Xeon(R) W-2295 CPU @ 3.00GHz CPUs and a total of 251 GB of system RAM. In all experiments linear algebra operations operations were allowed to be multithreaded, and so at times all 36 CPUs were used, even if fewer than 36 parallel jobs were run.
\subsubsection{Computational Cost of Synthetic Experiment}\label{sec:computation-synthetic}

\paragraph{Data Generation.}
Generating the point prediction synthetic data takes around 25 minutes using 10 parallel jobs and has a peak memory usage of around 46GB.

Generating the grid prediction synthetic data takes around 33 minutes using 10 parallel jobs and has a peak memory usage of around 71GB.

\paragraph{Running Experiment.}
Running the point prediction task takes around 25 minutes using 10 parallel jobs and has a peak memory usage of around 32GB.

Running the grid prediction task takes around 27 minutes using 10 parallel jobs and has a peak memory usage of around 39GB.

\paragraph{Computational Cost of Validation Methods.}
\Cref{fig:validation-time-synthetic} shows the time taken to compute each estimator of the validation risk as a function of the number of validation points. Our method takes longer than baselines, but is still fast to compute in practice.

\subsubsection{Computational Cost of Bootstrapped Air Temperature Experiment}\label{sec:computation-bootstrapped-airtemp}

\paragraph{Data Generation.}

Running the make file to download air station data takes on the order of 30 seconds and not more than 6GB of RAM after some data has been installed manually as described in the README file in the released code. Fitting the model and computing residuals in order to generate bootstrapped datasets takes around 10.5 minutes and has peak memory usage around 95GB.

\paragraph{Running Experiment.}
Running the bootstrapped air temperature metro prediction task takes around 66 minutes with 20 parallel jobs and has peak memory usage around 225GB.

Running the bootstrapped air temperature grid prediction task takes around 8.5 hours with 3 parallel jobs and has peak memory usage around 150GB.

\paragraph{Computational Cost of Validation Methods.}
\emph{Metro Prediction:}
For geographically weighted regression, the holdout took a mean time of 8.66 seconds with standard deviation of 3.93 seconds. 1NN took a mean time of 7.98 with standard deviation 3.53 seconds. SNN took a mean time of 29.89 seconds with a standard deviation of 11.48 seconds. For kernel ridge regression the holdout took a mean time of 3.06 seconds with standard deviation 0.93 seconds. 1NN took a mean time of 3.35 seconds with a standard deviation of 0.94 seconds. SNN took a mean time of 19.87 seconds and standard deviation 3.73 seconds.

\emph{Grid Prediction:} For geographically weighted regression, the holdout took a mean time of 3.88 seconds with standard deviation of 0.60 seconds. 1NN took a mean time of 5.83 with standard deviation 0.59 seconds. SNN took a mean time of 171.49 seconds with a standard deviation of 2.75 seconds. For kernel ridge regression the holdout took a mean time of 0.72 seconds with standard deviation 0.02. 1NN took a mean time of 2.57 seconds and standard deviation of 0.026 seconds. SNN took a mean time of 169.91 seconds and standard deviation 3.77 seconds.

\subsubsection{Computational Cost of UK House Price Experiment}\label{sec:computation-ukppd}
\paragraph{Data Processing.}
Downloading and processing the data takes around 7 seconds and has a peak memory usage of under 3GB.

\paragraph{Running Experiment.}
Running the UK House price prediction task takes around 4.6 hours with 5 parallel jobs and has peak memory usage around 70GB.

\paragraph{Computational Cost of Validation Methods.}
The holdout took a mean time of 44.97 seconds with standard deviation 5.77. 1NN took a mean time of 43.37 seconds and standard deviation of 5.02 seconds. SNN took a mean time of 66.18 seconds and standard deviation 8.49 seconds.

\subsubsection{Computational Cost of Wind Speed}\label{sec:computation-windspeed}
\paragraph{Data Processing.}
Downloading and processing the data takes around 3.5 minutes and has a peak memory usage of under 12GB.

\paragraph{Running Experiment.}
Running the wind speed prediction task takes around 8.5 hours with 15 parallel jobs and has peak memory usage around 15GB.

\paragraph{Computational Cost of Validation Methods.}
The holdout took a mean time of 0.14 seconds with standard deviation 0.09. 1NN took a mean time of 0.50 seconds and standard deviation of 0.33 seconds. SNN took a mean time of 51.71 seconds and standard deviation 15.71 seconds.

\subsubsection{Computational Cost of Real Data Air Temperature Experiment}\label{sec:computation-airtemp}
Time to process the data has been previously described in \cref{sec:computation-bootstrapped-airtemp}.

\paragraph{Running Experiment.}
Running all the air temperature tasks takes around 10 minutes and has peak memory usage around 9GB.

\paragraph{Computational Cost of Validation Methods.}
\emph{Metro Prediction:} For geographically weighted regression, the holdout took 2.31 seconds. 1NN took 2.31. SNN took 2.32 seconds. For kernel ridge regression the holdout took 1.75 seconds. 1NN took 1.75 seconds. SNN took 1.75 seconds.

\emph{Grid Prediction:} For geographically weighted regression, the holdout took 2.30 seconds. 1NN took 3.54. SNN took 83.1 seconds. For kernel ridge regression the holdout took 1.70 seconds. 1NN took 3.00 seconds. SNN took 82.2 seconds.

\subsubsection{Computational Cost of Model Selection Experiment}\label{sec:computation-model-selection}

Generating data for and running the synthetic model selection experiment takes under a minute of time and under 4GB of RAM.

\subsection{Risk Estimation on Synthetic Data}\label{app:risk-estimation-synthetic}

In this section, we provide additional details and figures for our risk estimation experiment on synthetic data presented in \cref{sec:risk-estimation-synthetic}. \Cref{app:simulation-dgp} describes the process by which we generate both datasets considered and shows an example of the covariate and response spatial fields for each problem (\cref{fig:grid-prediction-data-0,fig:point-prediction-data-0}). \Cref{app:risk-estimation-model} describes the procedure used to fit the predictive methods to each dataset. \Cref{app:implementation-validation} describes the implementation of the risk estimation procedures we compare. \Cref{app:risk-estimation-metric} describes the metric reported, and \cref{fig:grid-rel-risk,fig:point-rel-risk} show the (signed) relative error of each estimator as we vary the amount of validation data available.

\subsubsection{Simulation Data Generating Process Details}\label{app:simulation-dgp}
For both tasks, 100 datasets are generated following the process outlined below.

\paragraph{Grid data}

\begin{figure}[t]    \centering\includegraphics[width=0.6\columnwidth]{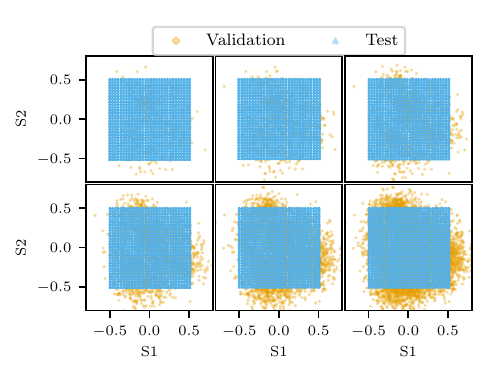}
    \caption{The validation sites (orange diamonds, clustered) for the first seed of the synthetic grid task. Panels from left to right and top to bottom represent $\nval$ in the sequence $(250, 500, 1000, 2000, 4000, 8000)$. Test sites (blue triangles, gridded) are constant across panels.}
    \label{fig:synthetic-sites}
\end{figure}

\emph{Generation of training and validation sites:} The first training point is selected via generating a point uniformly in $[-0.5, 0.5]^2$, and making this the mean of a Gaussian mixture component, with standard deviation randomly sampled between 0.05 and 0.15. This mixture is initially given weight $1$, the first training point is then sampled from a Gaussian with this mean and standard deviation, the and weight of this mixture is increased to $2$. Subsequent points are sampled sequentially. For each $i$ between $2$ and the total number of training and validation points, a weight of $1$ is assigned to adding a mixture component. The new point is then sampled from either one of the existing mixture components, or the new mixture component, with probability proportional to the current weights assigned to each mixture component. The weight of the mixture from which the points, $w^{(t)}_{i(t)}$, is then increased as
\begin{align}
    w^{(t+1)}_{i(t)}= w^{(t)}_{i(t)} + \frac{1}{w^{(t)}_{i(t)}}.
\end{align}
This is reminiscent of a Chinese restaurant process \citep[Section 3.1]{pitman2006combinatorial}, but the weights are increased more slowly, leading to more clusters being formed and less large clusters typically.

If a new mixture component is generated, a mean for the mixture component is generated on $[-0.5, 0.5]^2$, and a standard deviation is selected uniformly on $[0.05, 0.15]$. Conditional on the mixture component, the new point is sampled from a Gaussian distribution with the components mean and standard deviation.

The first $1000$ points generated this way are taken to be the training data, and the remaining $\nval$ points generated this way are the validation data. An example of the training and validation data generated through this process are shown in the top left of \cref{fig:grid-prediction-data-0}.

\emph{Generation of test data}
The test data is $\{(-0.5 + a/29,-0.5 + b/29) : 0 \leq a,b \leq 49\}$. That is, it is a regular grid on $[-0.5, 0.5]^2$. We generate 50 values of each response variable on each. grid point.

\emph{Generation of Covariates}

The covariates are generated as a zero-mean
Gaussian process with an isotropic Mat\'ern 3/2 covariance function with lengthscale $0.3$ and scale parameter $1$. That is, the covariance function is,
\begin{align}
    k_{\spatialfield}(S,S') = \left(1 + \frac{ \sqrt{3}\|S-S'\|_2}{0.3}\right)\exp\left(-\frac{\sqrt{3}\|S-S'\|_2}{0.3}\right).
\end{align}
A small diagonal term (1e-12) is added to the diagonal of the covariance matrix to avoid numerical linear algebra errors. Sampling is performed using Tensorflow probability \citep{dillon2017tensorflow}. We generate two covariates spatial processes via this process $\spatialfield=(\spatialfield^{(1)}, \spatialfield^{(2)})$.

\emph{Generation of Response}
Once the sites and covariates have been generated, the response variable is sampled from a Gaussian process with zero mean. The covariance function of the Gaussian is a sum of two, $2$ dimensional isotropic Mat\'ern 3/2 kernels:
\begin{align}
    k(S,S') & =  0.5\left(1 + \frac{ \sqrt{3}\|S-S'\|_2}{0.5}\right)\exp\!\left(-\frac{\sqrt{3}\|S-S'\|_2}{0.5}\right) \\ &+ \left(1+\sqrt{3}\|\spatialfield(S)-\spatialfield(S')\|_2\right)\exp\left(-\sqrt{3}\|\spatialfield(S)-\spatialfield(S')\|_2\right). \label{eqn:response-generation-kernel}
\end{align}
Independent, identically distributed Gaussian noise is added to the function values with variance $0.1$.

\begin{figure}
    \centering
    \includegraphics[width=\textwidth]{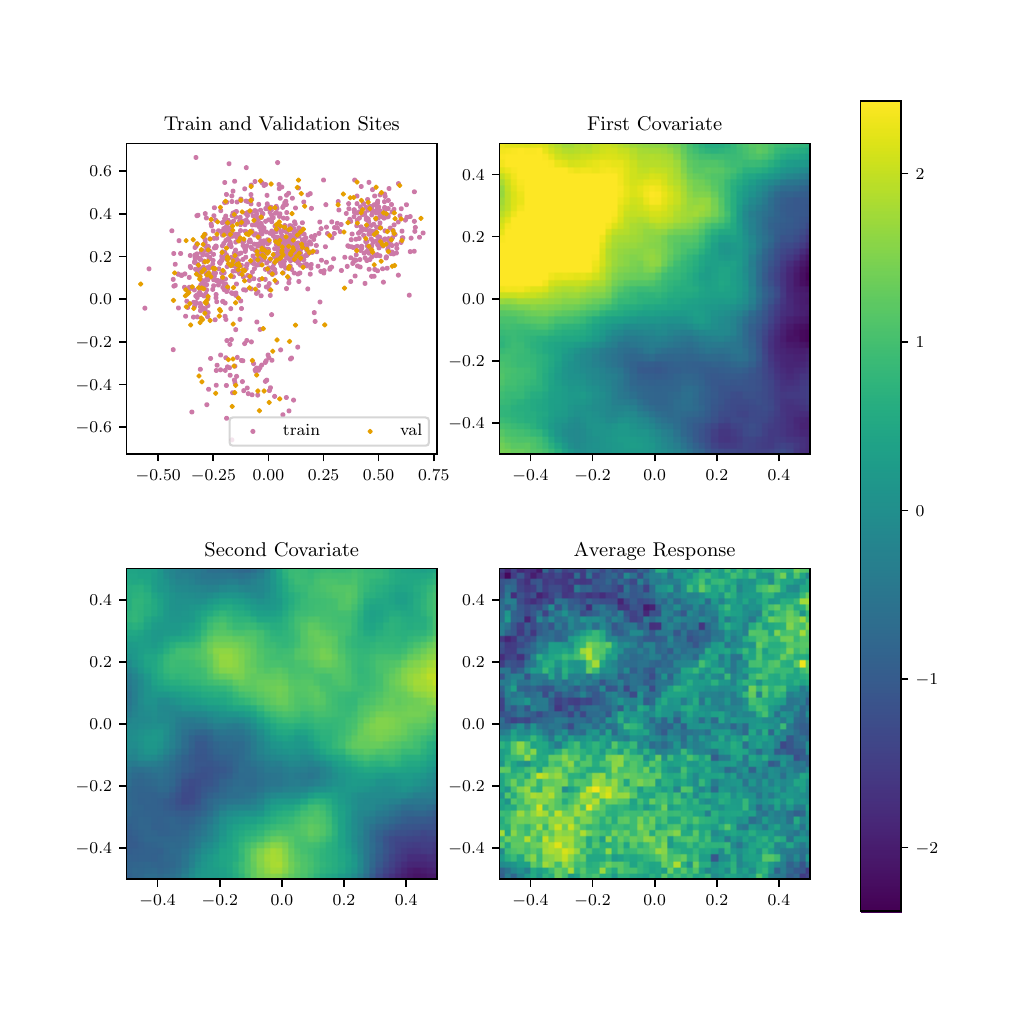}
    \caption{Data for the grid prediction task. An example of a single sample of the training and validation points (with 1000 training points and 250 validation points) is shown in the top left. The top right and bottom left show the covariates as a function of space, while the bottom right shows the mean of the response variable as a function of space.}
    \label{fig:grid-prediction-data-0}
\end{figure}

\paragraph{Point Prediction Task} \emph{Generation of training and validation sites:}
The training and validation points are sampled independently and identically from a uniform distribution supported on $[-0.5, 0.5]^2$. $1000$ training points are used in all experiments. The number of validation points is varied in $\{250 \times 2^{\ell}\}_{\ell=0}^{5}$.

\emph{Generation of test site} The test site is fixed to be the origin. We generate 45000 response values at the origin so that when we compute the empirical risk we expect it to accurately reflect that actual risk.

\emph{Generation of covariates:}
The covariates are generated as a zero-mean
Gaussian process with an isotropic squared exponential covariance function with lengthscale $0.3$ and scale parameter $1$. That is, the covariance function is,
\begin{align}
    k_{\spatialfield}(S,S') = \exp\left(-\frac{\|S-S'\|_2^2}{2\cdot 0.3^2}\right).
\end{align}
A small diagonal term (1e-12) is added to the diagonal of the covariance matrix to avoid numerical linear algebra errors. Sampling is performed using tensorflow probability \citep{dillon2017tensorflow}. We generate two covariates via this process $X=(X^{(1)}, X^{(2)})$.

\emph{Generation of response}
Once the sites and covariates have been generated, the response variable is sampled from a Gaussian process with zero mean. The covariance function of the Gaussian is a sum of two, $2$ dimensional isotropic squared exponential kernels:
\begin{align}
    k(S,S') =  0.5\exp\left(-\frac{\|S-S'\|^2_2}{2 \cdot 0.5^2}\right) + \exp\left(-\frac{\|\spatialfield(S)-\spatialfield(S')\|_2}{2}\right). \label{eqn:response-generation-kernel-2}
\end{align}
Independent, identically distributed Gaussian noise is added to the function values with variance $0.1$.

\subsubsection{Loss Function}
We use truncated, squared loss,
\begin{align}
    \ell(a,b) = \min(1.0,(a-b)^2),
\end{align}
which is bounded by $1.0$. The empirical risk is calculated as in \cref{eqn:empirical-risk}.

\subsubsection{Model Fitting}\label{app:risk-estimation-model}
We fit a Gaussian process regression model to using only the first covariate $\spatialfield^{(1)}$ to the training data. The prior is taken to be the same as the data generating process, but with (only) $\spatialfield^{(1)}$ in place of $(\spatialfield^{(1)}, \spatialfield^{(2)})$ in the second kernel in \cref{eqn:response-generation-kernel} and \cref{eqn:response-generation-kernel-2} for the two datasets respectively. The mean of the posterior process is used for predictions, and is calculated using GPFlow \citep{GPflow2017}.

\subsubsection{Implementation of Risk Estimation}\label{app:implementation-validation}
The \holdout{} is implemented by taking an (unweighted) average of the loss on each validation point. Both nearest neighbor methods are implemented using \texttt{scikit-learn} \citep{scikit-learn} with $kd$-trees and Euclidean distance. For $\kstar$ nearest neighbors, nearest neighbors is performed for all $k$ that are powers of $2$ less than $\nval$, and the value of $k$ with the smallest bound is used for risk estimation. This is done with $\delta=r$, with $r$ calculated as in \cref{app:computation-approx-fill}, and $\Delta=1$. \Cref{tab:grid-k-chosen} and \cref{tab:point-k-chosen} show the values of $k$ chosen for grid and point prediction respectively. For the grid prediction task, $\kstar$ tends to be small, even as the size of the validation set becomes large. This supported by \cref{cor:map-fill-bdd}, since even $1$-nearest neighbor reliably estimates risk in this setting. For the point prediction task, $\kstar$ grows with $\nval$.

\begin{table}
    \centering
    \caption{Value of $\kstar$ chosen by minimizing the bound in the grid prediction task. Because the test data is well separated, the variance of the estimator is small even when $k$ is small. As a result the value of $k$ that minimizes the upper bound is generally small, even as the number of validation points increases.}
    \begin{tabular}{c cccccc} 
  \multicolumn{6}{c@{}}{Number of Validation Points} \\ \cmidrule(l){2-7}  $\mathbf{\kstar}$  & 250 & 500 & 1000 & 2000 & 4000 & 8000 \\ \hline 
1 &82 & 85 & 81 & 82 & 66 & 50 \\ 
2 &15 & 15 & 19 & 18 & 34 & 41 \\ 
4 &3 & 0 & 0 & 0 & 0 & 9 \\ 
\end{tabular}
    \label{tab:grid-k-chosen}
\end{table}
\begin{table}
    \centering
    \caption{Value of $\kstar$ chosen by minimizing the bound in the point prediction task. In this task, there is generally a bias-variance trade-off that must be balanced. As a result the value of $k$ that minimizes the upper bound increases as the amount of available validation data increases.}
    \begin{tabular}{c cccccc} 
  \multicolumn{6}{c@{}}{Number of Validation Points} \\ \cmidrule(l){2-7}  $\mathbf{\kstar}$  & 250 & 500 & 1000 & 2000 & 4000 & 8000 \\ \hline 
16 &7 & 0 & 0 & 0 & 0 & 0 \\ 
32 &93 & 33 & 0 & 0 & 0 & 0 \\ 
64 &0 & 67 & 99 & 14 & 0 & 0 \\ 
128 &0 & 0 & 1 & 86 & 98 & 0 \\ 
256 &0 & 0 & 0 & 0 & 2 & 100 \\ 
\end{tabular}
    \label{tab:point-k-chosen}
\end{table}
\subsubsection{Monte Carlo Estimation of Test Risk}\label{app:mc-estimate-test-risk-synthetic}

Following the argument in \cref{app:mc-estimate-test-risk}, we use the empirical test risk \cref{eqn:empirical-risk} in place of the test risk as ground truth in synthetic experiments. The assumption that $(Z_m)_{m=1}^{\ntest}$ are independent holds by the description of the data generating process, because the $(Y_m)_{m=1}^{\ntest}$ are conditionally independent and $Z_m$ is a function of $Y_m$. The assumption that $Z_m$ is almost surely bounded holds with $\Delta=1$ by our choice of truncated squared loss. Further, in both synthetic experiments, we take $\ntest = 45000$.
Combining these gives that with probability at least $0.95$
\begin{align}
    |\hat{R}_{\Qtest}(h) - R_{\Qtest}(h)| \leq \sqrt{\frac{1}{2\times 45000}\log \frac{2}{0.05}} \leq 0.0065. \label{eqn:test-risk-hoeffding-synthetic}
\end{align}

\cref{fig:consistency-grid-plot} shows the absolute difference between each estimator and the empirical test risk across 100 seeds. We see that the difference between the estimators is generally larger than twice \cref{eqn:test-risk-hoeffding-synthetic}, and so by the argument in \cref{app:mc-estimate-test-risk}, we expect our estimate of ground truth to be accurate enough that the difference in performance of the methods is not simply due to error in estimating the ground truth.

\subsubsection{Metrics Reported and Additional Figures}\label{app:risk-estimation-metric}

\Cref{fig:grid-rel-risk,fig:point-rel-risk} show the relative errors of each estimation, calculated as
\begin{align}
    \frac{\empconditionalrisk_{\Qtest}(h) - \hat{R}(h)}{\empconditionalrisk_{\Qtest}(h)}.
\end{align}
From this, we can see that the \holdout{} method has a bias in both cases that does not appear to go away as the number of validation points increases. In contrast, the $1$-nearest neighbor method primarily suffers due to a variance issue when it fails to converge. We again see in both instances the $\kstar$-nearest neighbor approach appears to concentrate around zero error as the number of validation points increases.
\begin{figure}
    \centering
    \includegraphics[width=\textwidth]{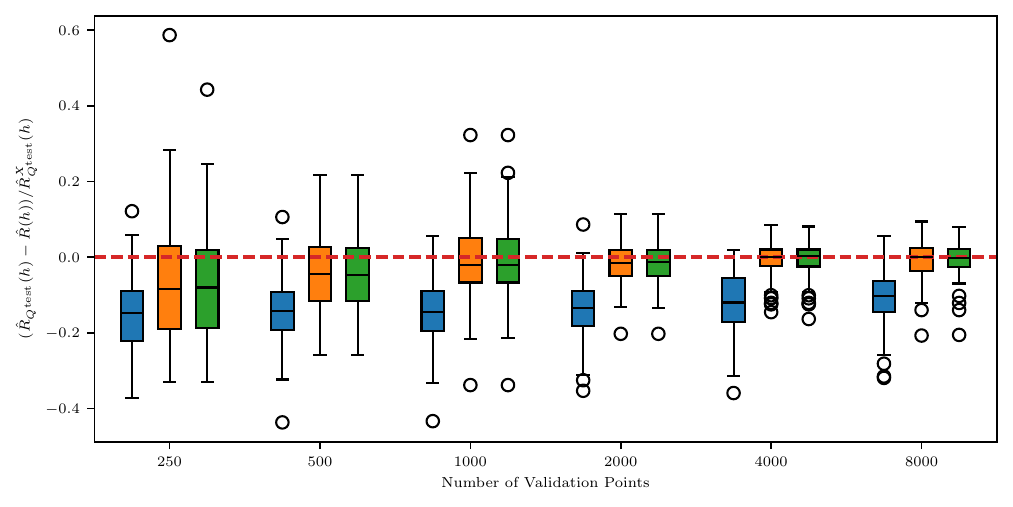}
    \caption{The relative error in estimating the empirical risk for each method is plotted against the number of validation points used for the grid prediction task. The \holdout{} is biased, even for large $\nval$. The $1$-nearest neighbor and $\kstar$-nearest neighbor estimates both have small relative error for large $\nval$.}
    \label{fig:grid-rel-risk}
\end{figure}

\begin{figure}
    \centering
    \includegraphics[width=\textwidth]{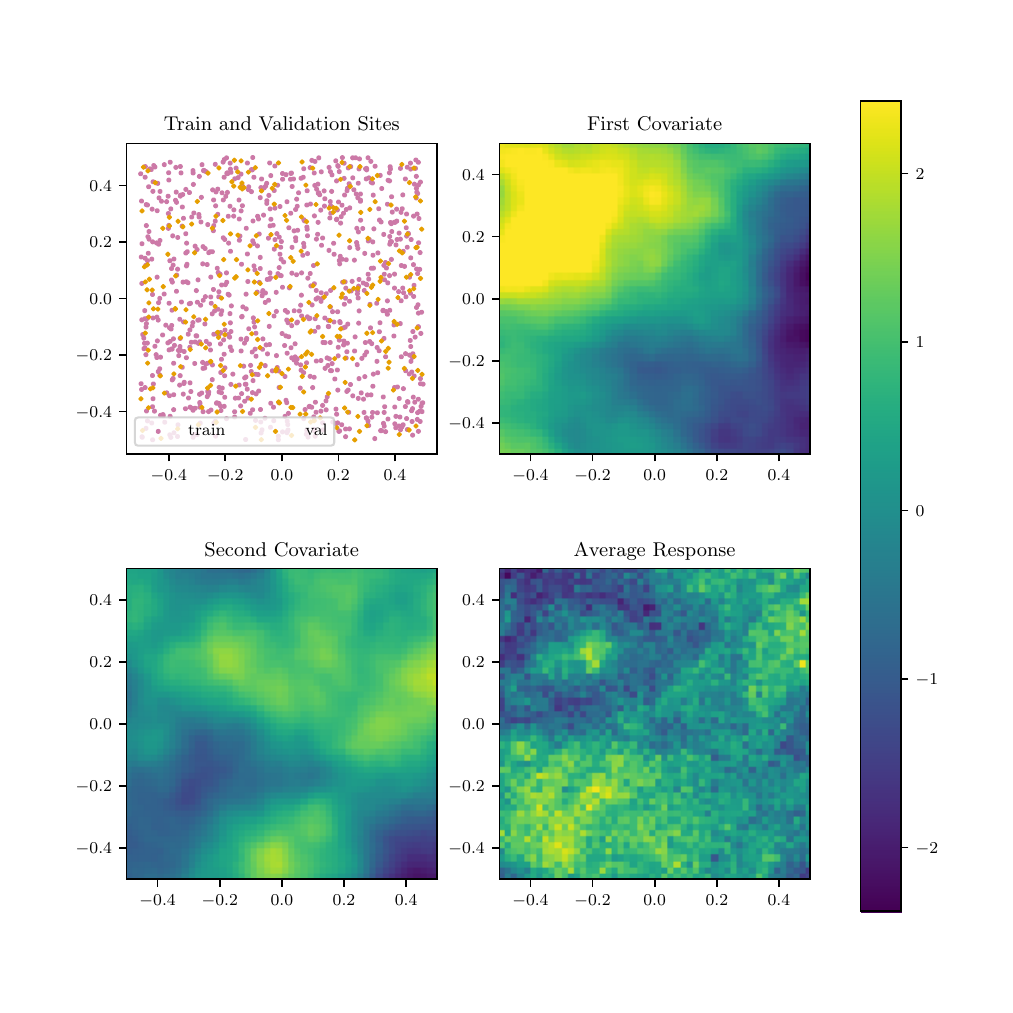}
    \caption{Data for the point prediction task. An example of a single sample of the training and validation points (with 1000 training points and 250 validation points) is shown in the top left. The top right and bottom left show the covariates as a function of space, while the bottom right shows the mean of the response variable as a function of space.}
    \label{fig:point-prediction-data-0}
\end{figure}

\begin{figure}
    \centering
    \includegraphics[width=\textwidth]{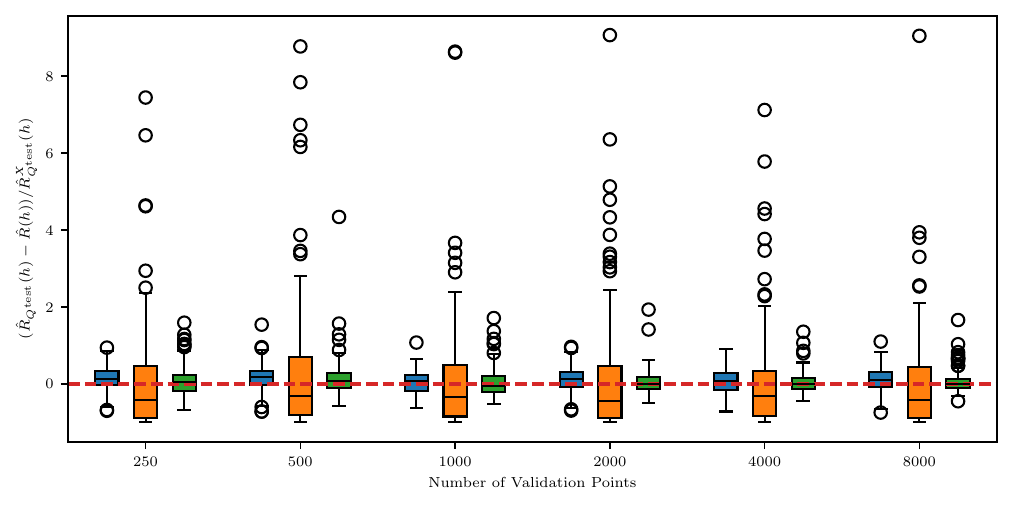}
    \caption{The relative error in estimating the empirical risk for each method is plotted against the number of validation points used for the point prediction task. The \holdout{} has a small but non-negligible bias. The $1$-nearest neighbor method has a small bias, but large variance. The nearest neighbor procedure with $\kstar$ neighbors has small relative error for large $\nval$.}
    \label{fig:point-rel-risk}
\end{figure}

%%%%%%%%%%%%%%%%%%%%%%%%%%%%%%%%%%%%%%%%%%%%%
\subsection{Air Temperature Tasks}\label{app:airTemp}

We now provide additional details about data source, pre-processing, model fitting and risk estimation for the air temperature dataset. These are identical between the real response experiment and the partially synthetic experiment, except for the bootstrapping procedure described in \cref{app:bootstrapping-details}. We also provide additional experimental results on a grid prediction task for both the bootstrapped and original datasets.

\subsubsection{Data Sources}\label{app:airTemp-datasources}

The land surface temperature used is from MODIS Aqua \citep{modisLST} and is monthly average land surface temperature on a $0.05$ degree grid. We download monthly average weather station data from the Global Historical Climatology Network \citep{ghcnm}. Latitude and longitude of major United States (US) urban areas are from the 2023 US census gazetteer \citep{census}. All of these datasets are produced in large part by US government agencies (NASA, NOAA and the Census). While we could not find specific license information, we understand these datasets to be public domain following section 105 of the Copyright Act of 1976.

\subsubsection{Data Pre-processing}
We assign the land surface temperature at the nearest point (using a spherical approximation to distance between points) to each weather station. The nearest point is found using the \texttt{scikit-learn} implementation of nearest neighbor algorithm using the `ball-tree' \citep{Omohundro2009FiveBC} and `Haversine' metric. Temperatures are converted to Celsius from Kelvin. We remove all rows where the Land Surface temperature is not available. We use the weather station data uploaded to GHCNM as of January 15, 2024. We filter out weather stations outside of the United States (based on the station ID). We also filter out stations with a non-empty quality control flag or no temperature recorded for January 2018 (the month we consider). Finally, we remove stations in Hawaii or Alaska to focus on the continental United States. In total, after this processing, there are 6422 weather stations. We use 70\% of the stations for fitting the models, and \holdout{} the remaining 30\% estimates. When building the test sites, we remove points outside the United States based on a reverse geocoding lookup with \citet{Thampi2015reversegeo} to the nearest city. This does not create an exact boundary (since it is based on the nearest city or town and not the country in which the latitude and longitude is based in) but is a good proxy for whether or not a point is in the United States. \Cref{fig:weather-station-sites} of the available weather stations for model fitting and validation, colored by monthly average temperature in January 2018.
\begin{figure}
    \centering
    \includegraphics[width=0.8\textwidth]{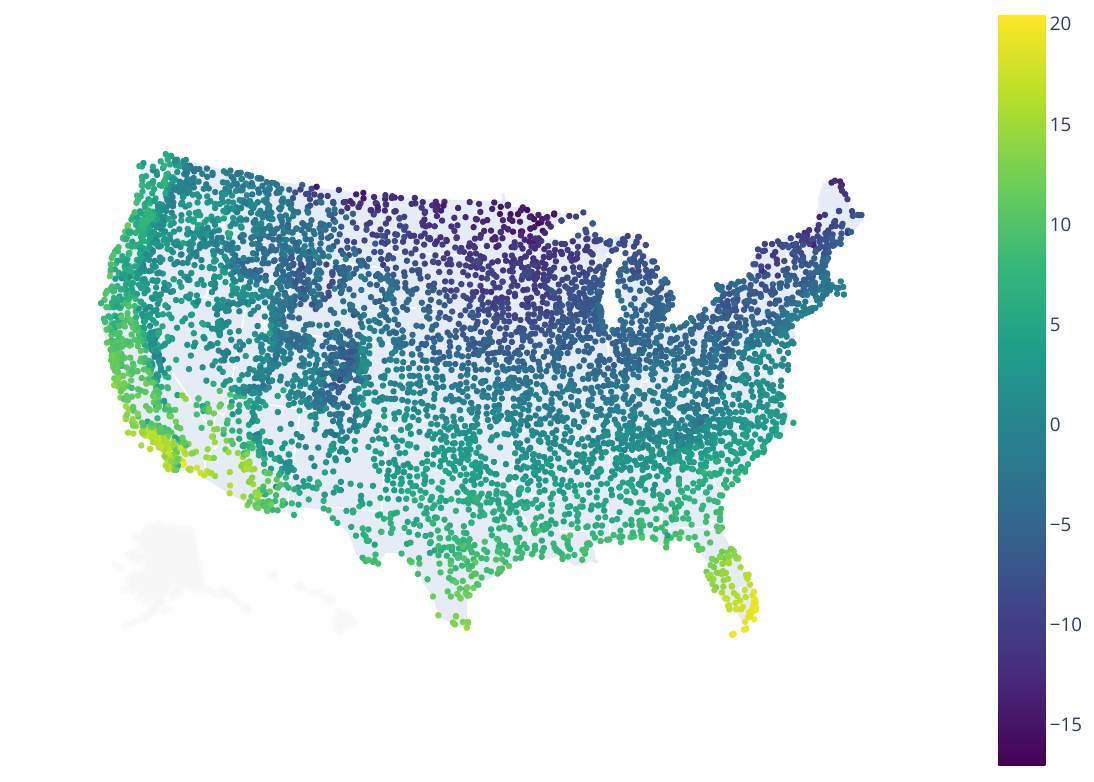}
    \caption{Weather stations used in the air temperature prediction task we considered, colored by average temperature in January 2018 in degrees Celsius.}
    \label{fig:weather-station-sites}
\end{figure}

\subsubsection{Loss Function}
We consider a truncated absolute value as the loss function,
$
    \ell(a,b) = \min(5.0, |a-b|).
$
This means we are primarily interested in the quality of the model predictions when it is relatively close to the actual response, and do not consider differences in predictions that are, for example, 8 degrees Celsius versus 10 degrees Celsius wrong meaningfully different. While the choice of 5 degrees is arbitrary, this is motivated by applications in which we might have some allowable tolerance for the quality of a prediction beyond which the prediction is no longer useful (and so it doesn't matter how bad it is).

\subsubsection{Model Fitting}
Inspired by \citet{hooker2018global}, we fit a geographically weighted least squares regression model using the land surface temperature at day and night. In particular, we fit an affine model, with the coefficients, $\beta(S)$ depending on the location that will be predicted at. $\beta(S)$ is selected by  solving the weighted least squares problem,
\begin{align}
    \hat{\beta}(S) \in \arg \min_{(b_0, b_1, b_2) \in \RR^{3}} \sum_{i=1}^{n^{\text{train}}} w_i(S) (Y^{\text{train}}_i - (b_0 + b_1 X^{\text{train}, 1}_i + b_2X^{\text{train},2}_i))^2,
\end{align}
with $Y^{\text{train}}_i$ the temperature at station $i$, $X^{\text{train}, 1}_i$ the daytime land surface temperature $X^{\text{train}, 2}_i$ the nighttime land surface temperature both at the closest satellite point to station $i$ and $w_i(S) = \exp(-\frac{d_{\text{haversine}}(S, S^{\text{train}}_i)^2}{2 \ell^2})$ and $d_{\text{haversine}}$ the Haversine (great circle) distance between the points. $\ell \geq 0$ is a parameter, and we select it from $\{25.0, 50.0, 75.0, 100.0, 150.0, 200.0, 300.0, 400.0, 500.0, 750.0, 1000.0\}\mathrm{km}$ via leave-one-out cross-validation on the training data with mean squared error. We perform leave-one-out cross-validation (without additional weighting).
%, although one could imagine using similar ideas to the risk estimation procedure we analyze to the cross-validation problem.

We also consider a simple baseline model fit using only the weather station data. We fit a Gaussian process with zero prior mean and Mat\'ern 3/2 kernel to the weather stations with covariate the spatial locations in latitude and longitude converted to radians, and a Gaussian likelihood model. We fit the parameters of the kernel using L-BFGS to attempt to maximize the marginal likelihood of the parameters. The parameters fit are two lengthscale parameters (one for each spatial dimension), a kernel scale parameter, and a likelihood variance parameter. The mean is removed from the training data prior to fitting, the kernel lengthscales are set to to standard deviation of each covariate and the kernel variance parameter is set to equal the variance of the training response data, and the likelihood variance parameter is set to equal $0.1$-times the variance of the training response variable. A maximum of $15$ iterations of L-BFGS are run.

\subsubsection{Risk Estimation Details}\label{app:airtemp-risk-estimation}
The \holdout{} is implemented as in previous experiments (\cref{app:implementation-validation}). We estimate the standard error of the \holdout{} empirically  by computing the sample standard deviation of the sum of the losses,
\begin{align}
    \hat{\sigma}^2 = \left(\frac{1}{\nval(\nval-1)} \sum_{j=1}^{\nval} \ell(\Yval_j, h^{\spatialfield}(\Sval_j))^2\right)^{1/2}.
\end{align}
\Cref{table:airtemp-metro} reports the \holdout{} estimate $\pm$ two standard deviation.

The nearest neighbor methods are implemented using the \texttt{scikit-learn} implementation with Haversine distance and the ball-tree algorithm. $\kstar$ is selected with $\delta=0.1$ and $\Delta=5^\circ$C and a Lipschitz constant of $1^\circ$C/100 km. We use a fixed $\delta$ since we only discuss a method applicable to estimating fill distance on the unit cube, not on a subset of the sphere.

\subsubsection{Bootstrapping of Residuals}\label{app:bootstrapping-details}
In order to generate many datasets with a realistic synthetic response variable where we have access to ground truth we:
\begin{enumerate}
    \item Fit a Gaussian process regression model to \emph{all} the available weather station data. We use a Mat\'ern 3/2 kernel with zero prior mean on the weather station data with the (spatial) mean temperature removed. Parameters of the kernel are selected via maximum likelihood.
    \item Compute the empirical distribution of the residuals of the mean of these predictions.
    \item For each seed we then use the same spatial locations and covariates, and generate the response surface at any point in space by computing the mean of the Gaussian process regression model fit and adding a sample from the empirical distribution of the residuals of the actual data.
\end{enumerate}

We can then directly estimate the test risk via generating many $\Ytest$ at each spatial location (we use 1000 realizations for each city in the 5-metros task) and 1 for each grid point in the grid task (since the error is averaged over test sites this still results in an estimator that is concentrated) in this manner and forming a Monte Carlo estimate as in \cref{app:mc-estimate-test-risk}.

\subsubsection{Estimation of Ground Truth in Bootstrapped Experiment}\label{sec:estimation-ground-truth-airtemp}

Following the argument in \cref{app:mc-estimate-test-risk}, we use the empirical test risk \cref{eqn:empirical-risk} in place of the test risk as ground truth in synthetic experiments. For the assumption that $(Z_m)_{m=1}^{\ntest}$ are independent to hold it is sufficient for $\Ytest_m$ to be independent, conditioned on the spatial location at which it is observed. This holds based on the data generating process used to construct the synthetic responses: since $\Ytest_m$ is a noisy observation of the smooth function we fit to the weather stations, plus noise sampled independently from the distribution of residuals. Because the loss is truncated MAE, the $Z_m$ are surely bounded by $5$. We use $10000$ samples at each of the 5 test location in estimating the risk. Using these numbers in \cref{eqn:test-risk-hoeffding}, we arrive at
\begin{align}
    |\hat{R}_{\Qtest}(h) - R_{\Qtest}(h)| \leq 5 \sqrt{\frac{1}{2\times 50000}\log \frac{2}{0.05}} < 0.031. \label{eqn:test-risk-hoeffding-airtemp}
\end{align}

Following the argument in \cref{app:mc-estimate-test-risk}, we expect our estimate of ground truth to be good enough to distinguish between the quality of models whose absolute error from the estimated ground truth differs by more than $2 \times 0.031 = 0.062$. \Cref{fig:bootstrapped-airtemp-synthetic} shows these absolute errors. We see that for many of the seeds, the difference between the performance of 1NN (orange) and SNN and the \holdout{} is greater than $0.174$. Given that the earlier argument is quite conservative (in the sense that Hoeffding's inequality is likely to be loose), we therefore can attribute the difference in performance of the methods to indicate that SNN and the \holdout{} are giving better estimates of the ground truth test risk, and the observed difference is not due to error in our estimation of the test risk.

\begin{figure}
    \centering
    \includegraphics[width=0.25\textwidth]{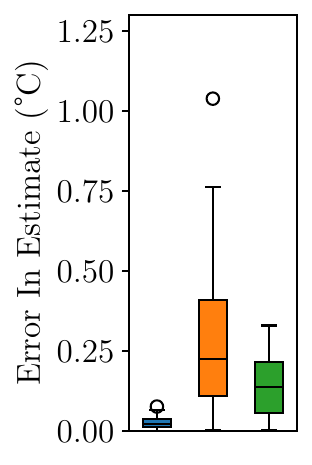}
    \includegraphics[width=0.25\textwidth]{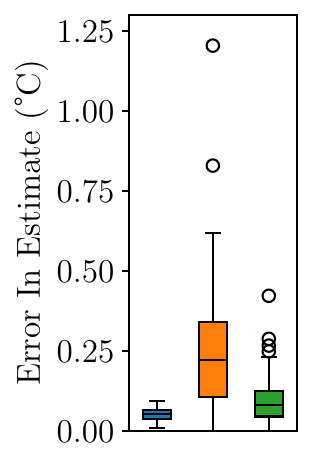}
    \caption{Absolute error in estimating the truncated mean absolute error in the air temperature dataset with bootstrapping.}
    \label{fig:bootstrapped-airtemp-synthetic}
\end{figure}

For the grid prediction task the assumptions are similar, but $\ntest = 341{,}628$ as this is the number of points on the map. Because the assumptions are satisfied by construction of the synthetic data (as in the 5-metro prediction task), with probability $1-\delta$
\begin{align}
    |\hat{R}_{\Qtest}(h) - R_{\Qtest}(h)| \leq 5 \sqrt{\frac{1}{2\times 341628}\log \frac{2}{0.05}} < 0.012. \label{eqn:test-risk-hoeffding-airtemp-grid}
\end{align}
We therefore expect our estimate of ground truth to be very accurate, although we see in \cref{fig:airtemp-bootstrapped-grid-prediction} that the methods all perform well in estimating the test risk on this task, and so there is likely not a meaningful difference in which approach is used to perform validation.

\subsubsection{Results for Grid Prediction with Bootstrapped Data}

\cref{fig:airtemp-bootstrapped-grid-prediction} shows the results for \holdout{}, $1$-nearest neighbor and SNN with the test set each grid point in the map that is located in the continental United States. All 3 methods lead to reasonably accurate estimates of the mean absolute error on this prediction task (within 0.1 degrees of the ground truth error). Based on our theory, we generally expect 1NN and SNN to have small error in grid prediction tasks (at least with sufficient data and the infill assumption being satisfied), while for the \holdout{} it depends on the particular predictive method and distribution of test and validation sites. In this case, it appears for both prediction methods the bias introduced by the use of the \holdout{} is relatively small.

\begin{figure}
    \centering
    \includegraphics[width=0.25\textwidth]{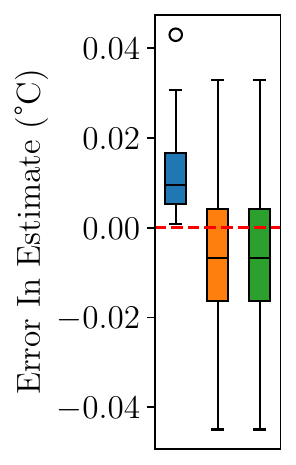}
    \includegraphics[width=0.25\textwidth]{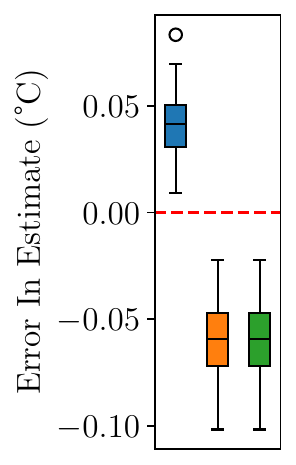}
    \caption{Error in estimate of mean absolute error of the Gaussian process regression predictions (left) and the geographically weighted regression predictions (right) for the \holdout{} (blue), 1NN (oragne) and SNN (green) on a grid prediction task. We see that all 3 methods result in accurate estimates (within 0.1 degrees Celsius) of the mean absolute error on this task for both prediction methods, and suspect it is unlikely meaningfully different conclusions would be drawn from use of any of the methods in this application.}
    \label{fig:airtemp-bootstrapped-grid-prediction}
\end{figure}

\subsubsection{Results for Grid Prediction with Real Data}
\Cref{tab:airtemp-geo} shows the results for \holdout{}, $1$-nearest neighbor and SNN with the test set each grid point in the map that is located in the continental United States. We see good agreement between all three method. This is expected for $1$-nearest neighbor and SNN based on earlier theory (\cref{app:map-prediction-proofs}).

\begin{table}[t]
    \caption{Estimates of risk given by each method. All three methods agree reasonably well (within $\pm 2$ standard deviation of the estimate given by \holdout{} for both geographically weighted regression and spatial regression in this task. In particular, all three methods suggest that the geographically weighted regression method has lower risk on this task.}\label{tab:airtemp-geo}
    \centering
    % \begin{tabular}{c|ccc} 
%     & Holdout & 1NN & SNN \\ \hline 
% GWR & $\mathbf{0.83 \pm 0.03}$ & $\mathbf{0.80}$ & $\mathbf{0.80}$ \\ 
% Spatial GP & $0.90 \pm 0.04$ & $0.88$ & $0.88$  
% \end{tabular}

\begin{tabular}{c|cc}
 & GWR & Spatial GP \\ \hline
Holdout & $\mathbf{0.83 \pm 0.03}$ & $0.90 \pm 0.04$ \\ 
1NN & $\mathbf{0.80}$ & $0.88$ \\ 
SNN & $\mathbf{0.80}$ & $0.88$ 
\end{tabular}
\end{table}

\subsection{UK Housing Experiment}\label{app:UK-housing}

We provide additional details for the UK flat price prediction task presented in \cref{sec:uk-ppd}.

\subsubsection{Data Sources and Pre-processing}\label{app:uk-data}
We download 2023 price paid data for England and Wales from \citet{ppduk}. This data is subject to a UK Open Government License (\url{https://www.nationalarchives.gov.uk/doc/open-government-licence/version/3/}), which requires citation of the data, but allows both commercial and non-commercial uses. These records contain postal codes for each property sold, the type of property sold, town or city, price paid for the property. We use the type of property variable to filter out all properties that are not flats, and only consider additions (not replacements or deletions) to the dataset and ``standard'' price paid data (not repossessions, buy-to-lets or other sales labelled as non-standard). Noting that a postal code in the UK corresponds to a very small geographic area, we obtain latitude and longitude data for each sale by looking up the postal code coordinates using the National Statistics Postcode Lookup \citep{nspl}, which we understand to be a product of the UK Census and therefore also subject to an Open Government License. We convert from northing and easting to latitude and longitude using $\texttt{R}$. We log transform the price variable prior to model fitting as we expect price paid to be non-negative and highly skewed, and so a Gaussian (process) prior would otherwise be almost certainly inappropriate.

\subsubsection{Model Fitting}\label{app:uk-model-fitting}

We fit hyperparameters of the variational Gaussian process regression by evidence lower bound maximization.
\paragraph{Model Specification}
We fit a Gaussian process regression model with prior covariance specified by a sum of two Mat\'ern 3/2 kernels and a zero prior mean on the mean centered log price paid data. We use a sum of Mat\'ern 3/2 kernel in place of the sum of RBF kernel used in \citet{hensman2013gaussian} as we expect there to be places where (log) property prices vary quickly in space, and so the smoothness properties implicitly assumed in using an RBF kernel may be inappropriate. We use 2000 inducing points for the variational approximation. The locations of these points are optimized jointly with model parameters when maximizing the evidence lower bound. We use the closed form for the optimal variational posterior (given a set of inducing points) provided in \citet{titsias2009variational}, and perform maximization of the evidence lower bound using L-BFGS.

\paragraph{Initialization}

The locations of the inducing points are initialized by the greedy procedure suggested in \citet{burt2023convergence}, which is essentially equivalent to a partially pivoted Cholesky decomposition recommended earlier in the Gaussian process approximation literature \citep{foster2009stable}. The initial prior variance of both kernels is set to be equal to the variance in the training data; the lengthscales of one kernel (intended to model regional price trends) are initialized to twice the standard deviation in the location data (in radians), while the scale of the other kernel (intended to model local trends) is initialized to half the standard deviation in the location data. The likelihood standard deviation is initialized to be 0.1$\times$ the standard deviation in the log price paid in the training data.

\subsubsection{Estimation of Ground Truth}\label{app:uk-estimation-truth}

We next describe why we might expect empirical test risk to provide a reasonable estimate of ground truth in this problem, and in particular we discuss the assumptions that justify the use of empirical test risk in relation to \cref{app:mc-estimate-test-risk}. We consider a truncated loss $\ell(a,b) = \min(0, 10^6)$, and so $Z_m$ is almost surely bounded by $10^6$. The assumption that the $(Z_m)_{m=1}^{\ntest}$ are independent would be implied by independence of $(Y_m)_{m=1}^{\ntest}$ that is: that given the location at which a flat is sold, any remaining randomness in the observed sale prices is independent. Concretely, we might think of the randomness in sales price, $\epsilon^{\text{test}}_m$ in our model, as coming from aspects of the sale process of the house, such as who happens to see the advertisement for a house, and we \emph{assume} these are independent for each house when constructing our estimate of the ground truth.

If the conditional independence assumption proposed above holds, then following \cref{app:mc-estimate-test-risk} we have that
\begin{align}
    |\hat{R}_{\Qtest}(h) - R_{\Qtest}(h)| \leq \pounds 1000000 \sqrt{\frac{1}{2\times 1000}\log \frac{2}{0.05}} < \pounds 43000. \label{eqn:test-risk-hoeffding-ukhousing}
\end{align}

Therefore, under this assumption, we might would expect our estimate of ground truth to be at least good enough to tell the difference (in the sense of which is closer to the actual ground truth) between predictors that differ in absolute error from the ground truth estimate by more than $2 \times \pounds43,000 = \pounds86,000$. \Cref{fig:uk-absolute-error} shows the absolute error of the three methods. We see that the error in the estimate provided by the holdout is on the order of $\pounds150,000$, while the error in 1NN and SNN are closer to $\pounds25000$ in most seeds. Given this large difference and earlier discussion, we do not expect that this error arises from difficulties in estimating the ground truth, but instead arises from actual differences in the qualities of the estimator.

\begin{figure}
    \centering
    \includegraphics[width=0.25\textwidth]{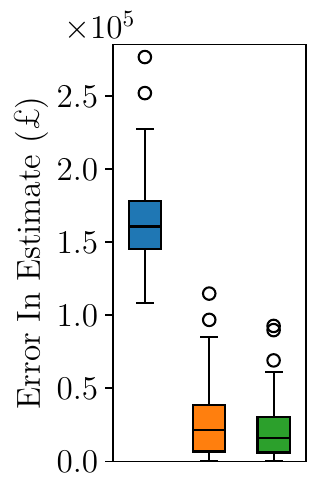}
    \caption{Absolute error of estimates (relative to Monte Carlo estimate of ground truth truncated mean absolute error) for the holdout (blue), 1NN (green) and our SNN (orange). We see that the holdout has significantly higher error in estimating the test risk in this task, which is caused by bias in the estimate provided.}
    \label{fig:uk-absolute-error}
\end{figure}

\subsubsection{Risk Estimation Details}\label{app:uk-estimation}
Holdout, 1NN and SNN are run as in the air temperature experiments (\cref{app:airtemp-risk-estimation}). In particular, we use a failure probability of $\delta=0.1$ for SNN and nearest neighbor calculations are done with respect to Haversine distance to account for Earth's curvature. We use a fixed $\delta$ since we only discuss a method applicable to estimating fill distance on the unit cube, not on a subset of the sphere. $\Delta=\pounds 1,000,000$ is used in selecting the number of neighbors as this is an upper bound on the truncated loss. We use a Lipschitz constant of £1,000/km as £1/km seems implausibly small.

\subsection{Wind Speed Prediction Experiment}\label{app:windSpeed}
In this section, we provide additional details for the wind speed prediction experiment discussed in \cref{sec:wind_speed} of the main text.

\subsubsection{Data Sources and Pre-processing}\label{app:windSpeed-data}

We download daily wind speed readings from weather stations from the Global Historical Climate Network \citep{ghcnd}. As this dataset was constructed by NOAA employees, we understand it to be public domain following section 105 of the Copyright Act of 1976. We filter out weather stations outside the continental US, as well as any weather stations that do not contain daily average wind speed readings. We look only at wind speed data from January in the prediction task, and years 2000--2024. There is a weather station at Chicago O'Hare which we remove from the training and validation data and use as the test set.

For each replicate used to form \cref{fig:airtemp-bootstrapped-grid-prediction} we split off a training set containing 80\% of weather stations, and a validation set containing the remaining 20\%. The number of observations in the training and validation set varies (because different weather stations may be online for a different number of days in January in previous years), but this leads to on the order of 580000 training observation and 126000 validation observations. Each training and validation point is a triple containing latitude, longitude and average wind speed. We perturb the latitude and longitude (in degrees) of validation points by a Gaussian random variable with standard deviation $10^{-12}$, which is essentially equivalent to using random tie-breaking in the nearest neighbor algorithms. We expect this has a significant impact on 1NN compared to the version discussed in the paper, because there may be many observations from the nearest weather station to Chicago O'Hare. While this would unlikely be done in practice, random tie-breaking, or tie-breaking by selecting the first nearest neighbor according to some other ordering are common and would lead to similar outcomes as the results presented here (but higher variance than averaging over all neighbors that are equally close). The latitude and longitude are then converted to radians for the analysis.

\subsubsection{Loss Function}
We use truncated mean squared error as the loss function, $\ell(a,b) = \min(25, (a-b)^2)$.

\subsubsection{Estimation of Ground Truth}\label{app:estimation-ground-truth-windspeed}

Following the argument in \cref{app:mc-estimate-test-risk}, we use the empirical test risk \cref{eqn:empirical-risk} in place of the test risk as ground truth in synthetic experiments. The assumption that $(Z_m)_{m=1}^{\ntest}$ are independent. It is sufficient for $\Ytest_m$ (the wind daily wind speeds) to be independent, conditioned on the spatial location at which it is observed. This is likely not the case, as we would expect average wind speed in consecutive days exhibit at least some dependence. However, if the wind speed decorrelates reasonably rapidly over time, we would expect similar arguments to hold, possibly with fewer effective samples.

Because the loss is truncated mean squared error, the $Z_m$ are surely bounded by $25$m$^2$/s$^2$. We use $775$ samples in estimating the risk. Using these numbers, and under the assumption that wind speed at a location is independent of the wind speed on previous days, in \cref{eqn:test-risk-hoeffding}, we arrive at
\begin{align}
    |\hat{R}_{\Qtest}(h) - R_{\Qtest}(h)| \leq 25\text{m}^2/\text{s}^2 \sqrt{\frac{1}{2\times 775}\log \frac{2}{0.05}} < 1.22 \text{m}^2/\text{s}^2. \label{eqn:test-risk-hoeffding-windspeed}
\end{align}

\begin{figure}
    \centering
    \includegraphics{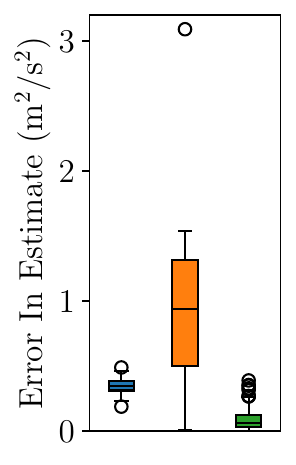}
    \caption{Absolute error in estimating (approximate) test risk in the wind speed experiment for the holdout (blue), 1NN (orange) and our SNN (green). It appears that SNN has the smallest error in estimating the ground truth, although the approximate ground truth we compute via Monte Carlo estimate is not entirely theoretically justified. However, due to the relatively large differences across most seeds, we still expect the difference is indicative of better performance of SNN.}
    \label{fig:windspeed-abs}
\end{figure}

Comparing to \cref{fig:windspeed-abs}, we see that this application of Hoeffding's inequality is not sufficient to justify that the estimate of ground truth is accurate enough to attributed the observed better performance of SNN to (actually) better estimation of the ground truth as opposed to inaccuracies of our Monte Carlo estimate of the test risk. However, we expect this is largely due to looseness is Hoeffding's inequality and, given that a large difference is observed in most seeds, it would be very surprising if this was only due to error in estimation of the ground truth which is independent across seeds.

\subsubsection{Model Fitting}

A gradient boosted machine is fit using LightGBM \citep{ke2017lightgbm} with default parameters expect for the number of leaves (set to 127) and the number of estimators (set to 100).

\subsubsection{Risk Estimation procedures}

Holdout, 1NN and SNN are run as in the air temperature experiments (\cref{app:airtemp-risk-estimation}). In particular, we use a failure probability of $\delta=0.1$ for SNN and nearest neighbor calculations are done with respect to Haversine distance to account for Earth's curvature. We use a fixed $\delta$ since we only discuss a method applicable to estimating fill distance on the unit cube, not on a subset of the sphere. A Lipschitz constant of $1(\text{m}^2/\text{s}^2) / \text{km}$ is used for selecting the number of neighbors.

\subsection{Model Selection on Synthetic Data}\label{app:model-selection-experiment}

\begin{figure}
    \centering
    \includegraphics[width=\textwidth]{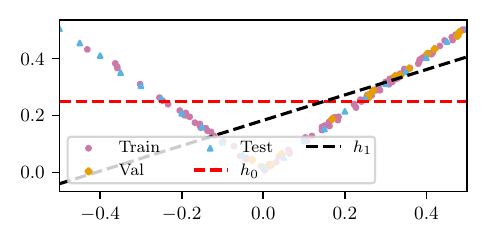}
    \caption{Training (pink circles), validation (orange diamonds), and test (blue triangles) data for a single seed of the model selection experiment. The dashed red line depicts predictive method  $h_0$, and dashed black shows $h_1$.}
    \label{fig:ms-data-plot}
\end{figure}
We next see that SNN and 1NN are able to select the model with lower test risk in a model selection task, but the \holdout{} systematically picks the wrong model.
We repeat the model selection problem 100 times. In each repetition, we have $N^{\text{train}}=100$
and a max $\nval=75$. In our analysis, we will consider validation subsets of size
$\nval \in \{5\ell\}_{\ell=1}^{15}$.
We generate independent test, validation, and training data as follows; see \cref{fig:ms-data-plot}.
\begin{align}
    U_i^{\smash{j}}           & \sim \mathcal{U}([-0.5,0.5]), \quad
    S_i^{\smash{j}}  = \sqrt{U_i^{\smash{j}} + 0.5}, \quad j \in \{\text{train}, \text{val}\}  \nonumber                                                         \\
    S_m^{\smash{\text{test}}} & = m/20 - 0.5, \quad 0 \leq m \leq 20, \quad \epsilon_{\smash{i}}^{\smash{j}}\sim \mathcal{U}([0,0.1]),      \nonumber            \\
    Y_{\smash{i}}^j           & = \vert S_{\smash{i}}^j \vert + \epsilon_{\smash{i}}^j \quad j \in \{\text{train}, \text{val}, \text{test}\}, \label{eqn:ms-dgp}
\end{align}

We compare two predictive methods: $h_0(S) = 0.25$ and $h_1(S) = \beta_1^\top S + \beta_0$, with $(\beta_1, \beta_0)$ fit by minimizing the mean absolute residual on the training data. \cref{fig:ms-data-plot} shows the data and predictions of both models (as functions of space). We use the loss function $\ell(a,b) = |a-b|$, which is bounded for this problem because both the hypotheses and the response variable are bounded on $[0,1]$.

Across all seeds, $h_0$ has the lower empirical test risk; $h_1$ makes large errors on the test points near $0$ because most of the training data is in $[0, 0.5]$. Since most of the validation data also clusters near 1, we expect the \holdout{} to struggle due to bias. Our arguments in \cref{app:model-selection} lead us to expect both SNN and 1NN should perform well on this task when given sufficient validation data.

We say an estimator of the risk, $\hat{R}$, selects $h_0$ if $\hat{R}(h_0) < \hat{R}(h_1)$. We plot the percentage of times each method (correctly) selects $h_0$ as a function of the number of validation points in \cref{fig:ms-results}. When the validation set is small, all estimators select the model with lowest test risk ($h_0$) less than half the time. For the nearest neighbor methods, we expect that when there are few or no spatial locations less than $1$, weighting cannot fix the estimate.
However, when the number of validation points is large, the nearest neighbor methods consistently (correctly) select $h_0$. By contrast, the \holdout{}  consistently (incorrectly) selects $h_1$, even though $h_1$ has higher test risk. See \cref{app:model-selection} for full experiment details.
\begin{figure}
    \centering\includegraphics[width=0.87\columnwidth]{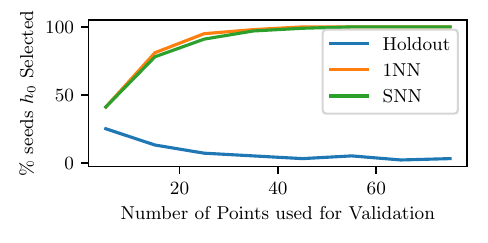}
    \caption{The percentage of times each estimator (correctly) selects the model with lower empirical test risk ($h_0$), out of 100 seeds as a function of $\nval$. Estimators include the \holdout{} (blue), 1NN (orange), and our SNN (green).}
    \label{fig:ms-results}
\end{figure}

\paragraph{Data Generation}
The data generation is fully described by \cref{eqn:ms-dgp}.

\paragraph{Model Fitting}
We consider two models. The first is a constant predictor that predicts $0.25$. The second is an affine model (a linear model with an intercept) fit by minimizing the mean absolute error from the line to the training points. This is fit using the \texttt{Scikit-learn} quantile regression with the (default) ``HiGHS'' solver \citep{Huangfu2015highs}.

\paragraph{Estimation of Risk}
The validation estimates used are calculated in the same as the synthetic experiments outlined previously in \cref{app:implementation-validation}. We use $\lossbdd = 1$ in the bound when selecting $\kstar$, even though the absolute value loss used can be larger than $1$. We don't expect this to have a significant impact on the results, as the upper bound we minimize is already misspecified in a similar way by not using the actual Lipschitz constant of the function. We again use $\delta = 0.1$ when selecting $\kstar$.

\end{document}